\newtheorem{definition}{Definition}[section]
\newtheorem{proposition}{Proposition}[section]
\newcommand{\SO}{\mathrm{SO}}
\newcommand{\inv}{\mathrm{inv}}
\newcommand{\reg}{\mathrm{reg}}
\newcommand{\equi}{\mathrm{equiv}}
\def\eqref#1{equation~\ref{#1}}
\def\1{\bm{1}}
\DeclareMathAlphabet{\mathsfit}{\encodingdefault}{\sfdefault}{m}{sl}
\SetMathAlphabet{\mathsfit}{bold}{\encodingdefault}{\sfdefault}{bx}{n}
\DeclareMathOperator*{\argmax}{arg\,max}
\title{$\mathrm{SO}(2)$-Equivariant Reinforcement Learning}
\author{Dian Wang, Robin Walters, and Robert Platt \\
Khoury College of Computer Sciences\\
Northeastern University\\
Boston, MA 02115, USA \\
\texttt{\{wang.dian, r.walters, r.platt\}@northeastern.edu}
}
\newcommand{\edit}[1]{#1}    
\begin{document}

\maketitle

\begin{abstract}
Equivariant neural networks enforce symmetry within the structure of their convolutional layers, resulting in a substantial improvement in sample efficiency when learning an equivariant or invariant function. Such models are applicable to robotic manipulation learning which can often be formulated as a rotationally symmetric problem. This paper studies equivariant model architectures in the context of $Q$-learning and actor-critic reinforcement learning. We identify equivariant and invariant characteristics of the optimal $Q$-function and the optimal policy and propose equivariant DQN and SAC algorithms that leverage this structure. We present experiments that demonstrate that our equivariant versions of DQN and SAC can be significantly more sample efficient than competing algorithms on an important class of robotic manipulation problems.
% significantly outperform more traditional reinforcement learning methods based on data augmentation.
% This paper studies equivariance in the context of $Q$-learning and actor-critic reinforcement learning methods in a visual state space and a closed-loop action space, and identifies the equivariant and invariant characteristics of the optimal $Q$-function and the optimal policy function in this setting. 
% We propose model architectures that encode this equivariant structure and show how they can be used int he context of DQN and SAC. We present experiments that show that our equivariant versions of DQN and SAC significantly outperform more traditional reinforcement learning methods based on data augmentation.
% The experimental evaluation demonstrates a significant improvement of our proposal compared with traditional CNNs and the data augmentation baselines.
\end{abstract}

\section{Introduction}

A key challenge in reinforcement learning is to improve sample efficiency -- that is to reduce the amount of environmental interactions that an agent must take in order to learn a good policy. This is particularly important in robotics applications where gaining experience potentially means interacting with a physical environment. One way of improving sample efficiency is to create ``artificial'' experiences through data augmentation. This is typically done in visual state spaces where an affine transformation (e.g., translation or rotation of the image) is applied to the states experienced during a transition~\citep{rad,drq}. These approaches implicitly assume that the transition and reward dynamics of the environment are invariant to affine transformations of the visual state. In fact, some approaches explicitly use a contrastive loss term to induce the agent to learn translation-invariant feature representations~\citep{curl,ferm}.

Recent work in geometric deep learning suggests that it may be possible to learn transformation-invariant policies and value functions in a different way, using equivariant neural networks~\citep{g_conv,steerable_cnns}. The key idea is to structure the model architecture such that it is constrained only to represent functions with the desired invariance properties. In principle, this approach aim at exactly the same thing as the data augmentation approaches described above -- both methods seek to improve sample efficiency by introducing an inductive bias. However, the equivariance approach achieves this more directly by modifying the model architecture rather than by modifying the training data. Since with data augmentation, the model must learn equivariance in addition to the task itself, more training time and greater model capacity are often required.  Even then, data augmentation results only in approximate equivariance whereas equivariant neural networks guarantee it and often have stronger generalization as well \citep{wang2020incorporating}.
While equivariant architectures have recently been applied to reinforcement learning~\citep{van2020plannable, van2020mdp, mondal2020group}, this has been done only in toy settings (grid worlds, etc.) where the model is equivariant over small finite groups, and the advantages of this approach over standard methods is less clear.

This paper explores the application of equivariant methods to more realistic problems in robotics such as object manipulation. We make several contributions.
% First, we define and theoretically characterize an important class of group-equivariant MDPs. 
% First, we define and theoretically characterize an important class of MDPs that we call a \emph{group equivariant MDP} (a type of MDP homomorphism~\citep{ravindran2004approximate}). 
First, we define and analyze an important class of MDPs that we call \emph{group-invariant MDPs}.
% Second, since most problems of interest are only equivariant in some state/action variables, we introduce partially equivariant model architectures for this setting. 
\edit{Second, we introduce a new variation of the Equivariant DQN~\citep{mondal2020group}, and we further introduce equivariant variations of SAC~\citep{sac}, and learning from demonstration (LfD).}
% \edit{Compared with the Equivariant DQN proposed by~\cite{mondal2020group}, our architecture is more well-suited for visual robotic manipulation and demonstrates more performance gains.} 
Finally, we show that our methods convincingly outperform recent competitive data augmentation approaches~\citep{rad,drq,curl,ferm}. Our Equivariant SAC method, in particular, outperforms these baselines so dramatically (Figure~\ref{fig:exp_equi_sac}) that it could make reinforcement learning feasible for a much larger class of robotics problems than is currently the case. Supplementary video and code are available at \url{https://pointw.github.io/equi_rl_page/}.

\section{Related Work}
\underline{Equivariant Learning:} Encoding symmetries in the structure of neural networks can improve both generalization and sample efficiency. The idea of equivariant learning is first introduced in G-Convolution~\citep{g_conv}. The extension work proposes an alternative architecture, Steerable CNN~\citep{steerable_cnns}. \cite{e2cnn} proposes a general framework for implementing $\mathrm{E}(2)$-Steerable CNNs. In the context of reinforcement learning, \cite{mondal2020group} investigates the use of Steerable CNNs in the context of two game environments. \cite{van2020mdp} proposes MDP homomorphic networks to encode rotational and reflectional equivariance of an MDP but only evaluates their method in a small set of tasks. In robotic manipulation, \cite{equi_q} learns equivariant $Q$-functions but is limited in the spatial action space. \edit{In contrast to prior work, this paper proposes an Equivariant SAC algorithm, an equivariant LfD algorithm, and a novel variation of Equivariant DQN~\citep{mondal2020group} focusing on visual motor control problems.}

% In contrast to the prior works, our work identifies the group-equivariant MDPs and proposes a novel variation of Equivariant DQN~\cite{mondal2020group}

% new equivariant version of both DQN and SAC. \edit{In particular, compared with \cite{mondal2020group}, our Equivariant DQN is more well-suited for visual robotic manipulation and shows more significant performance gains when compared with traditional methods.}
% focus on equivariant reinforcement learning in close-loop robotic manipulation, which is an area that is not explored by any prior works.

\underline{Data Augmentation:} Another popular method for improving sample efficiency is data augmentation. Recent works demonstrate that the use of simple data augmentation methods like random crop or random translate can significantly improve the performance of reinforcement learning~\citep{rad,drq}. Data augmentation is often used for generating additional samples~\citep{qtopt, lin2020invariant, transporter} in robotic manipulation. However, data augmentation methods are often less sample efficient than equivariant networks because the latter injects an inductive bias to the network architecture.

\underline{Contrastive Learning:} Data augmentation is also applied with contrastive learning~\citep{oord2018representation} to improve feature extraction. \cite{curl} show significant sample-efficiency improvement by adding an auxiliary contrastive learning term using random crop augmentation. \cite{ferm} use a similar method in the context of robotic manipulation. However, contrastive learning is limited to learning an invariant feature encoder and is not capable of learning equivariant functions.

\underline{Close-Loop Robotic Control:} There are two typical action space definitions when learning policies that control the end-effector of a robot arm: the spatial action space that controls the target pose of the end-effector~\citep{zeng_picking,zeng_pushing,satish2019policy,asrse3}, or the close-loop action space that controls the displacement of the end-effector. The close-loop action space is widely used for learning grasping policies~\citep{qtopt,Quillen2018DeepRL,Breyer2019ComparingTS,James2019SimToRealVS}. Recently, some works also learn more complex policies than grasping~\citep{Viereck2020LearningVS,Kilinc2019ReinforcementLF,Cabi2020ScalingDR,ferm}. This work extends prior works in the close-loop action space by using equivariant learning to improve the sample efficiency.

\section{Background}
% \subsection{Deep $Q$-Network}
% \subsection{Soft Actor Critic}

% \begin{wrapfigure}[33]{r}{0.3\textwidth}
% \vspace{-0.5cm}
%   \begin{center}
%   \includegraphics[width=0.3\textwidth]{}
%   \end{center}
%   \caption{Illustration of how $\SO(2)$ and $C_n$ acts on feature maps through $\rho_0$, $\rho_1$, and $\rho_\reg$. Top: $\rho_0$ acts on a 2-channel feature map (represented as a vector field) through rotating the pixels exclusively. Middle: $\rho_1$ acts on the same 2-channel feature map through rotating the pixels and rotating the vector at each pixel. Bottom: $\rho_\reg$ acts on a 4-channel feature map through rotating the pixels and permuting the order of the channels.}
%   \label{fig:rot_action}
% \end{wrapfigure}

\underline{$\SO(2)$ and $C_n$:} We will reason about rotation in terms of the group $\SO(2)$ and its cyclic subgroup $C_n \leq \SO(2)$. $\SO(2)$ is the group of continuous planar rotations $\lbrace \mathrm{Rot}_\theta : 0 \leq \theta < 2 \pi \rbrace$. $C_n$ is the discrete subgroup $C_n = \lbrace \mathrm{Rot}_\theta : \theta \in \lbrace \frac{2\pi i}{n} | 0 \leq i < n \rbrace \rbrace $  of rotations by multiples $\frac{2\pi}{n}$.

\underline{$C_n$ actions:} A group $G$ may be equipped with an \emph{action} on a set $X$ by specifying a map $\cdot \colon G \times X \to X$ satisfying $g_1 \cdot (g_2 \cdot x) = (g_1 g_2) \cdot x$ and $1 \cdot x = x$ for all $g_1,g_2 \in G, x \in X$.  Note that closure, $gx\in X$, and invertibility, $g^{-1}gx=x$, follow immediately from the definition. We are interested in \textit{actions} of $C_n$ which formalize how vectors or feature maps transform under rotation. The group $C_n$ acts in three ways that concern us (for a more comprehensive background, see~\cite{bronstein2021geometric}):

\noindent
1. $\mathbb{R}$ through the \textit{trivial representation} $\rho_0$. Let $g\in C_n$ and $x\in \mathbb{R}$. Then $\rho_0(g) x = x$. For example, the trivial representation describes how pixel color/depth values change when an image is rotated, i.e. they do not change (Figure~\ref{fig:rot_action} left).

% We will use the trivial representation to describe how pixel color/depth values change when an image is rotated, i.e. they don't change -- see the left side of Figure~\ref{fig:rot_action}. 

\noindent
2. $\mathbb{R}^2$ through the \textit{standard representation} $\rho_1$. Let $g\in C_n$ and $v \in \mathbb{R}^2$. Then $\rho_1(g)v = \big(\begin{smallmatrix}
      \cos g & -\sin g\\
      \sin g & \cos g
\end{smallmatrix}\big) v$. This describes how elements of a vector field change when rotated (Figure~\ref{fig:rot_action} middle). 

\noindent
% 3. $\mathbb{R}^n$ through the \textit{regular representation} $\rho_{\mathrm{reg}}$. Let $g \in C_n$ and $(x_1,x_2,\dots,x_n) \in \mathbb{R}^n$. 
% % Then $\rho_{\mathrm{reg}}(g)(x_1,x_2,\dots,x_n) = (x_n,x_1,x_2,\dots,x_{n-1})$ cyclically permutes the coordinates of $\mathbb{R}^n$ (Figure~\ref{fig:rot_action} right). 
% Then $\rho_{reg}(g)x = (x_{n-m+1}, \dots, x_n, x_1, x_2, \dots, x_{n-m})$, where $g$ is the $m$th element in $C_n$. The regular representation $\rho_{reg}(g)$ cyclically permutes the coordinates of $\mathbb{R}^n$ (Figure~\ref{fig:rot_action} right). 
3. $\mathbb{R}^n$ through the \textit{regular representation} $\rho_{\mathrm{reg}}$. Let $g=r^m \in C_n=\{1,r,r^2,\ldots,r^{n-1}\}$ and $(x_1,x_2,\dots,x_n) \in \mathbb{R}^n$. 
Then $\rho_{reg}(g)x = (x_{n-m+1}, \dots, x_n, x_1, x_2, \dots, x_{n-m})$ cyclically permutes the coordinates of $\mathbb{R}^n$ (Figure~\ref{fig:rot_action} right).

\begin{figure}[t]
\centering
\includegraphics[width=0.8\textwidth]{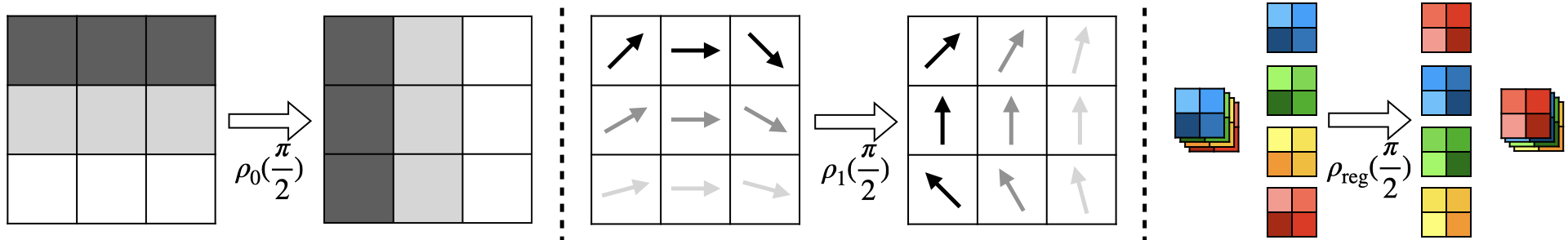}
\caption{Illustration of how an element $g \in C_n$ acts on the feature map by rotating the pixels and transforming the channel space through $\rho_0$, $\rho_1$, and $\rho_\reg$. Left: $C_n$ acts on the channel space of a 1-channel feature map by identical mapping. Middle: $C_n$ acts on the channel space of a vector field by rotating the vector at each pixel through $\rho_1$. Right: $C_n$ acts on the channel space of a 4-channel feature map by permuting the order of the channels by $\rho_\reg$.}
\label{fig:rot_action}
\end{figure}

% \noindent
% \underline{General case:} We can also define a general case that $C_n$ acts on $\mathbb{R}^m = (\mathbb{R})^{k_0} \times (\mathbb{R}^2)^{k_1} \times (\mathbb{R}^n)^{k_{\mathrm{reg}}}$ through representation $\rho$, where $\rho(g)$ acts on $w\in \mathbb{R}^m$ by transforming $w$ by one of or a combination of the actions above. That is, $\rho(g)$ is a $m \times m$ block diagonal matrix with $k_0$, $k_1$, and $k_\reg$ blocks of size $1,2,$ or $n$ respectively. For example, when $k_0=k_1=k_{\reg}=1$, $\rho(g)=\begin{psmallmatrix}
% \rho_0(g) & 0 & 0\\
% 0 & \rho_1(g) & 0\\
% 0 & 0 & \rho_{\mathrm{reg}(g)}
% \end{psmallmatrix}$, let $x\in \mathbb{R}, v\in \mathbb{R}^2, u\in \mathbb{R}^n$ s.t. $(x, v, u) = w\in \mathbb{R}^m$, $\rho(g)(w)=\rho(g)(x, v, u)=(\rho_0(g)x, \rho_1(g)v, \rho_\reg(g)u)$.

\underline{Feature maps as functions:} In deep learning, images and feature maps are typically expressed as tensors. However, it will be convenient here to sometimes express these as functions. Specifically, we may write an $h \times w$ one-channel image $\mathcal{F} \in \mathbb{R}^{1 \times h \times w}$ as a function $\mathcal{F} \colon \mathbb{R}^2\to \mathbb{R}$ where $\mathcal{F}(x,y)$ describes the intensity at pixel $x,y$. Similarly, an $m$-channel tensor $\mathcal{F} \in \mathbb{R}^{m \times h \times w}$ may be written as $\mathcal{F}\colon \mathbb{R}^2\to \mathbb{R}^m$. We refer to the domain of this function as its ``spatial dimensions''.
% Here, $\mathcal{F}$ would be encoded as a three-dimensional tensor. 
% For convenience, we will also write vectors in this form as a mapping from $1$ to a vector, $\mathcal{F} : 1 \rightarrow \mathbb{R}^m$.

\underline{$C_n$ actions on vectors and feature maps:} $C_n$ acts on vectors and feature maps differently depending upon their semantics. We formalize these different ways of acting as follows. Let $\mathcal{F}\colon \mathbb{R}^2\to \mathbb{R}^m$ be an $m$-channel feature map and let $V\in \mathbb{R}^{m\times1\times 1}=\mathbb{R}^{m}$ be a vector represented as a special case of a feature map with $1\times 1$ spatial dimensions. Then $g$ is defined to act on $\mathcal{F}$ by
\begin{equation}
    (g\mathcal{F})(x, y)=\rho_j(g)\mathcal{F}( \rho_1(g)^{-1}(x, y)).
\label{eqn:rot_act_image}
\end{equation}
For a vector $V$ (considered to be at $(x,y)=(0,0)$), this becomes:
\begin{equation}
\label{eqn:rot_act_vector}
gV = \rho_j(g)V.
\end{equation}
In the above, $\rho_1(g)$ rotates pixel location and $\rho_j(g)$ transforms the pixel feature vector using the trivial representation ($\rho_j = \rho_0$), the standard representation ($\rho_j = \rho_1$), the regular representation ($\rho_j = \rho_{\reg}$), or some combination thereof. 

\underline{Equivariant convolutional layer:} A $C_n$-equivariant layer is a function $h$ whose output is constrained to transform in a defined way when the input feature map is transformed by a group action. Consider an equivariant layer $h$ with an input $\mathcal{F}_{\mathrm{in}}: \mathbb{R}^2\to \mathbb{R}^{|\rho_{\mathrm{in}}|}$ 
% with representation type $\rho_\mathrm{in}$
and an output $\mathcal{F}_{\mathrm{out}}: \mathbb{R}^2\to \mathbb{R}^{|\rho_{\mathrm{out}}|}$ 
% with representation type $\rho_\mathrm{out}$
, where $\rho_\mathrm{in}$ and $\rho_\mathrm{out}$ denote the group representations associated with $\mathcal{F}_\mathrm{in}$ and $\mathcal{F}_\mathrm{out}$, respectively. When the input is transformed, this layer is constrained to output a transformed version of the same output feature map:
\begin{equation}
h(g \mathcal{F}_{\mathrm{in}}) = g(h(\mathcal{F}_{\mathrm{in}})) = g\mathcal{F}_{\mathrm{out}}.
\label{eqn:equiv_layer}
\end{equation}
where $g\in C_n$ acts on $\mathcal{F}_\mathrm{in}$ or $\mathcal{F}_\mathrm{out}$ through Equation~\ref{eqn:rot_act_image} or Equation~\ref{eqn:rot_act_vector}, i.e., this constraint equation can be applied to arbitrary feature maps $\mathcal{F}$ or vectors $V$.

A linear convolutional layer $h$ satisfies Equation~\ref{eqn:equiv_layer} with respect to the group $C_n$ if the convolutional kernel $K \colon \mathbb{R}^2 \to \mathbb{R}^{|\rho_{\mathrm{out}}| \times |\rho_{\mathrm{in}}|}$ has the following form~\citep{equi_theory}:
\begin{equation}
    K(\rho_1(g)v) = \rho^{-1}_{\mathrm{out}}(g)K(v)\rho_{\mathrm{in}}(g).
    \label{eqn:equiv_kernel}
\end{equation}

Since the composition of equivariant maps is equivariant, a fully convolutional equivariant network can be constructed by stacking equivariant convolutional layers that satisfy the constraint of Equation~\ref{eqn:equiv_layer} and together with equivariant non-linearities~\citep{e2cnn}.

\section{Problem Statement}
\label{sec:problem_statement}

% \subsection{$C_n$ actions on the visuo-motor MDP}

\subsection{Group-invariant MDPs}

In a group-invariant MDP, the transition and reward functions are invariant to group elements $g \in G$ acting on the state and action space. For state $s \in S$, action $a \in A$, and $g \in G$, let $gs \in S$ denote the action of $g$ on $s$ and $ga \in A$ denote the action of $g$ on $a$.

% $\mathcal{M}_G = (S,A,T,R,G)$

\begin{definition}[$G$-invariant MDP]
\label{def:Ginv_MDP}
A $G$-invariant MDP $\mathcal{M}_G = (S,A,T,R,G)$ is an MDP $\mathcal{M} = (S,A,T,R)$ that satisfies the following conditions:

\noindent
1. \emph{Reward Invariance:} The reward function is invariant to the action of the group element $g \in G$, $R(s,a) = R(gs,ga)$.

\noindent
2. \emph{Transition Invariance:} The transition function is invariant to the action of the group element $g \in G$, $T(s,a,s') = T(gs,ga,gs')$.

% \noindent
% 3. \emph{Invertibility:} The group action on state and action is invertible. That is, for any $g \in G$, $g^{-1}(gs) = s$ and $g^{-1}(ga) = a$.

% \noindent
% 4. \emph{Closure:} The state space and action space is closed under $g \in G$. That is, $gs\in S$ and $ga \in A$ for all $g \in C_n$.

\end{definition}

A key feature of a $G$-invariant MDP is that its optimal solution is also $G$-invariant (proof in Appendix~\ref{appendix:proof}):

\begin{proposition}
\label{prop:qstar_pistar}
Let $\mathcal{M}_G$ be a group-invariant MDP. Then its optimal $Q$-function is group invariant, $Q^*(s,a) = Q^*(gs,ga)$, and its optimal policy is group-equivariant, $\pi^*(gs) = g\pi^*(s)$, for any $g \in G$.
% and its optimal policy are both group-invariant. That is, $Q^*(s,a) = Q^*(gs,ga)$ and $\pi^*(gs) = g\pi^*(s)$ for any $g \in G$.
\end{proposition}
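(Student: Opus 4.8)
The plan is to exploit the fact that $Q^*$ is the unique fixed point of the Bellman optimality operator and to show that this operator commutes with the $G$-action; invariance of $Q^*$ then follows from uniqueness, and equivariance of $\pi^*$ follows as a corollary. Concretely, I would recall the Bellman optimality operator $\mathcal{T}$ acting on bounded functions $Q \colon S \times A \to \mathbb{R}$ by
\begin{equation}
(\mathcal{T}Q)(s,a) = R(s,a) + \gamma \sum_{s'} T(s,a,s') \max_{a'} Q(s',a'),
\end{equation}
where $\gamma \in [0,1)$ is the discount factor. For $\gamma < 1$ this operator is a $\gamma$-contraction in the sup-norm, so by the Banach fixed-point theorem it has a unique fixed point, which is precisely $Q^*$.

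The key step is to introduce the induced action on $Q$-functions, $(L_g Q)(s,a) = Q(gs,ga)$, and verify that $\mathcal{T}$ is $G$-equivariant, i.e. $\mathcal{T}(L_g Q) = L_g(\mathcal{T}Q)$ for every $g \in G$. Expanding $(\mathcal{T}(L_g Q))(s,a)$, I would first use that $g$ acts bijectively on $A$ (closure and invertibility follow from the action axioms) so that the inner $\max$ is unaffected by the relabelling $a' \mapsto g a'$, then re-index the sum over next states via the bijection $s' \mapsto g s'$. This converts the transition term into $T(s,a,g^{-1}s'')$, which transition invariance rewrites as $T(gs,ga,s'')$, while reward invariance rewrites $R(s,a)$ as $R(gs,ga)$. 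Collecting the terms yields exactly $(\mathcal{T}Q)(gs,ga) = (L_g(\mathcal{T}Q))(s,a)$, establishing the commutation.

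Applying this commutation to $Q^*$ gives $\mathcal{T}(L_g Q^*) = L_g(\mathcal{T}Q^*) = L_g Q^*$, so $L_g Q^*$ is also a fixed point of $\mathcal{T}$; uniqueness then forces $L_g Q^* = Q^*$, which is the claimed invariance $Q^*(gs,ga)=Q^*(s,a)$. For the policy, I would write $\pi^*(gs) = \argmax_a Q^*(gs,a)$, substitute $a = ga'$ (again by bijectivity of the action on $A$), and apply the invariance just proven, $Q^*(gs,ga') = Q^*(s,a')$, so that the maximizer over $a'$ is $\pi^*(s)$ and hence $\pi^*(gs) = g\,\argmax_{a'} Q^*(s,a') = g\,\pi^*(s)$.

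The main obstacle I anticipate is making the re-indexing rigorous: the change of variables in the sum (or integral, in a continuous state space) relies on summing over all next states being invariant under relabelling by $g$. This is immediate for finite $S$ because $g$ permutes $S$, but in the continuous case it requires the $G$-action to preserve the underlying measure on $S$, which should be noted as a standing assumption. A secondary caveat arises in the policy step, where $\argmax$ may be set-valued under ties; the equivariance statement should then be read as an equality of sets of optimal actions, or one fixes a $G$-consistent tie-breaking rule.
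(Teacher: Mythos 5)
Your proposal is correct and takes essentially the same route as the paper's proof: the paper defines $\bar{Q}(s,a) = Q^*(gs,ga)$, re-indexes the integral over next states via $\bar{s}' = gs'$, applies reward and transition invariance to show $\bar{Q}$ satisfies the same Bellman optimality equation, and concludes by uniqueness, then derives $\pi^*(gs) = g\pi^*(s)$ by exactly your substitution $\bar{a} = ga$ in the $\argmax$. Your packaging of this as operator commutation $\mathcal{T}L_g = L_g \mathcal{T}$ plus the Banach fixed-point theorem is the same argument stated slightly more explicitly, and your caveats (the $G$-action must preserve the measure on $S$ for the re-indexing of the integral, and $\argmax$ equivariance should be read set-wise under ties) tighten two points the paper leaves implicit.
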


\edit{It should be noted that the $G$-invariant MDP of Definition~\ref{def:Ginv_MDP} is in fact a special case of an MDP homomorphism~\citep{ravindran2001symmetries,ravindran2004approximate}, a broad class of MDP abstractions. MDP homomorphisms are important because optimal solutions to the abstract problem can be ``lifted'' to produce optimal solutions to the original MDP~\citep{ravindran2004approximate}. As such, Proposition~\ref{prop:qstar_pistar} follows directly from those results.}

% Assumptions~\ref{assumption:goalinv}, \ref{assumption:transinv}, and \ref{assumption:invertability} imply that the optimal $Q$-function is invariant to $\SO(2)$ rotations, and that the optimal policy is equivariant to $\SO(2)$ rotations (see the proof in Appendix~\ref{appendix:proof}):

% \begin{proposition}
% \label{prop:qstar}
% Given a visual close-loop control MDP for which Assumptions~\ref{assumption:goalinv}-~\ref{assumption:closure} are satisfied, the optimal $Q$-function is invariant to rotation, i.e. $\forall g\in \SO(2), Q^*(s,a) = Q^*(gs,ga)$. 
% \end{proposition}

% \begin{proposition}
% \label{prop:pistar}
% Given a visual close-loop control MDP for which Assumptions~\ref{assumption:goalinv}-~\ref{assumption:closure} are satisfied, the optimal policy is equivariant to rotation, i.e. $\forall g\in \SO(2), \pi^*(gs) = g\pi^*(s)$. 
% \end{proposition}

\subsection{$\SO(2)$-Invariant MDPs in Visual State Spaces}
\label{sect:visuo_motor_control}

In the remainder of this paper, we focus exclusively on an important class of $\SO(2)$-invariant MDPs where the state is encoded as an image. We approximate $\SO(2)$ by its subgroup $C_n$.

\underline{State space:} State is expressed as an $m$-channel image, $\mathcal{F}_s : \mathbb{R}^{2} \rightarrow \mathbb{R}^m$. The group operator $g \in C_n$ acts on this image as defined in Equation~\ref{eqn:rot_act_image} where we set $\rho_j = \rho_0$: $g \mathcal{F}_s (x,y) = \rho_0(g) \mathcal{F}_s (\rho_1(g)^{-1} (x,y))$, i.e., by rotating the pixels but leaving the pixel feature vector unchanged.

% We define the group operator $g \in C_n$ to act on the image by rotating the pixels (i.e. via the standard representation $\rho_1$) but to leave the pixel feature vector unchanged (i.e. via the trivial representation $\rho_0$). We can express this precisely using Equation~\ref{eqn:rot_act_image}: $g \mathcal{F}_s (x,y) = \rho_0(g) \mathcal{F}_s (\rho_1(g)^{-1} (x,y))$ for $g \in C_n$. 

\underline{Action space:} We assume we are given a factored action space $A_{\inv} \times A_{\equi} = A \subseteq \mathbb{R}^k$ embedded in a $k$-dimensional Euclidean space where $A_{\inv} \subseteq \mathbb{R}^{k_{\inv}}$ and $A_{\equi} \subseteq \mathbb{R}^{k - k_{\inv}}$. We require the variables in $A_\inv$ to be invariant with the rotation operator and the variables in  $A_\equi$ to rotate with the representation $\rho_\equi = \rho_1$.
% $\rho_\equi \in \{\rho_1, \rho_{reg}\}$.
Therefore, the rotation operator $g \in C_n$ acts on $a \in A$ via $ga = (\rho_\equi(g) a_\equi, a_\inv)$ where $a_{\inv} \in A_\inv$ and $a_\equi \in A_\equi$.

% The action variables are partitioned into two sets: the variables $A_\inv$ which are invariant with the rotation operator and the variables $A_\equi$ which rotate with the group operator using the standard representation $\rho_1$. Therefore, the rotation operator $g \in C_n$ acts on $a$ via $ga = (\rho_1(g) a_\equi, a_\inv)$.

\begin{wrapfigure}[11]{r}{0.4\textwidth}
\vspace{-0.5cm}
\centering
\subfloat[]{\includegraphics[width=0.2\textwidth]{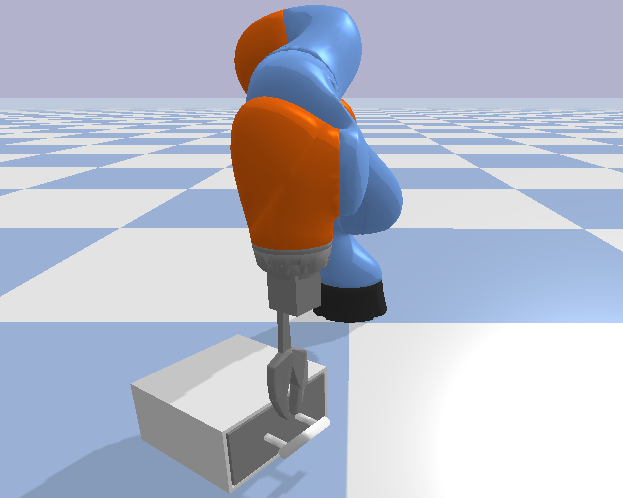}}
\subfloat[]{\includegraphics[width=0.2\textwidth]{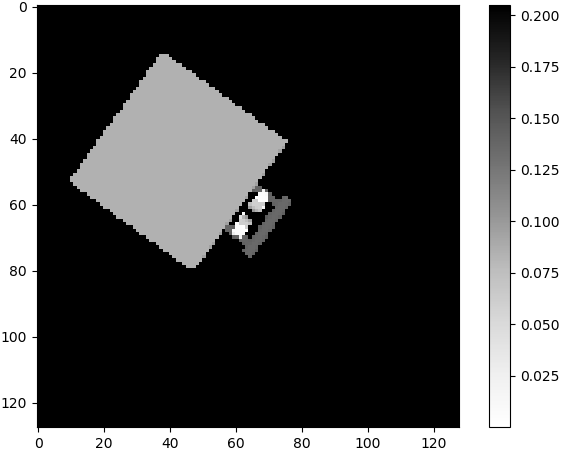}}
\caption{The manipulation scene (a) and the visual state space (b).}
\label{fig:obs}
\end{wrapfigure}

\underline{Application to robotic manipulation:} We express the state as a depth image centered on the gripper position where depth is defined relative to the gripper. The orientation of this image is relative to the base reference frame -- not the gripper frame. We require the fingers of the gripper and objects grasped by the gripper to be visible in the image. Figure~\ref{fig:obs} shows an illustration. The action is a tuple, $a = (a_\lambda, a_{xy}, a_{z}, a_{\theta}) \in A \subset \mathbb{R}^5$, where $a_\lambda \in A_{\lambda}$ denotes the commanded gripper aperture, $a_{xy} \in A_{xy}$ denotes the commanded change in gripper $xy$ position, $a_{z} \in A_{z}$ denotes the commanded change in gripper height, and $a_{\theta} \in A_{\theta}$ denotes the commanded change in gripper orientation. Here, the $xy$ action is equivariant with $g \in C_n$, $A_\equi = A_{xy}$, and the rest of the action variables are invariant, $A_\inv = A_\lambda \times A_z \times A_\theta$. Notice that the transition dynamics are $C_n$-invariant (i.e. $T(s,a,s') = T(gs,ga,gs')$) because the Newtonian physics of the interaction are invariant to the choice of reference frame. If we constrain the reward function to be $C_n$-invariant as well, then the resulting MDP is $C_n$-invariant.

\section{Approach}

\subsection{Equivariant DQN}
\label{sec:equi_dqn}
\begin{wrapfigure}[18]{r}{0.3\textwidth}
\vspace{-0.5cm}
  \begin{center}
  \includegraphics[width=0.3\textwidth]{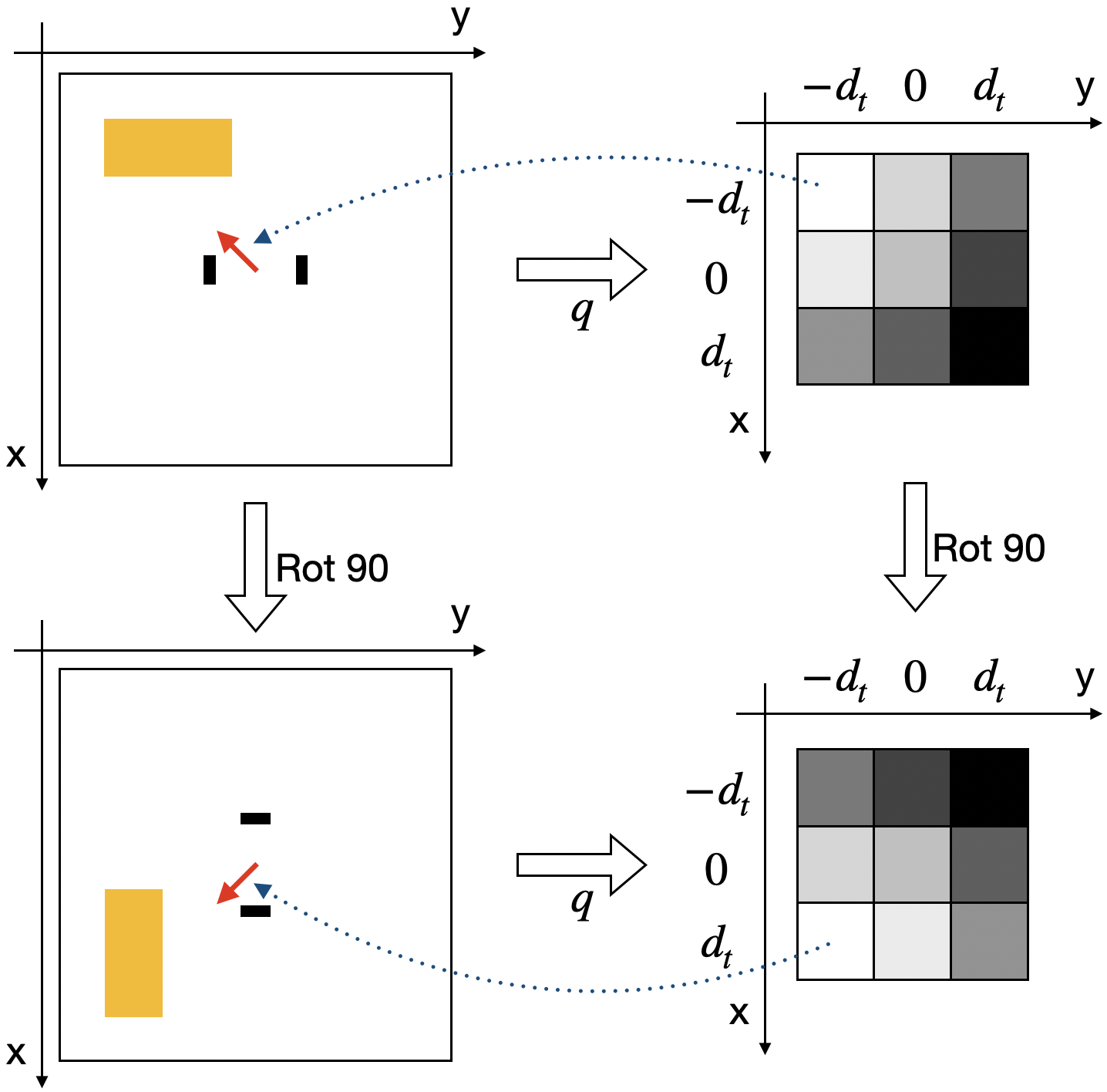}
  \end{center}
  \caption{Illustration of $Q$-map equivariance. The output $Q$-map rotates with the input image.}
\label{fig:equi_dqn}
\end{wrapfigure}
In DQN, we assume we have a discrete action space, and we learn the parameters of a $Q$-network that maps from the state onto action values. Given a $G$-invariant MDP, Proposition~\ref{prop:qstar_pistar} tells us that the optimal $Q$-function is $G$-invariant. Therefore, we encode the $Q$-function using an equivariant neural network that is constrained to represent only $G$-invariant $Q$-functions. First, in order to use DQN, we need to discretize the action space. Let $\mathcal{A}_\equi \subset A_\equi$ and $\mathcal{A}_\inv \subset A_\inv$ be discrete subsets of the full equivariant and invariant action spaces, respectively. Next, we define a function $\mathcal{F}_a : \mathcal{A}_\equi \rightarrow \mathbb{R}^{\mathcal{A}_\inv}$ from the equivariant action variables in $\mathcal{A}_\equi$ to the $Q$ values of the invariant action variables in $\mathcal{A}_\inv$. For example, in the robotic manipulation domain described Section~\ref{sect:visuo_motor_control}, we have $A_\equi = A_{xy}$ and $A_\inv = A_\lambda \times A_z \times A_{\theta}$ and $\rho_\equi = \rho_1$, and we define $\mathcal{A}_\equi$ and $\mathcal{A}_\inv$ accordingly. We now encode the $Q$ network $q$ as a stack of equivariant layers that each encode the equivariant constraint of Equation~\ref{eqn:equiv_layer}. Since the composition of equivariant layers is equivariant, $q$ satisfies:
\begin{equation}
q(g \mathcal{F}_s) = g(q(\mathcal{F}_s)) = g \mathcal{F}_a,
\label{eqn:equivdqn}
\end{equation}
where we have substituted $\mathcal{F}_{\mathrm{in}} = \mathcal{F}_{s}$ and $\mathcal{F}_{\mathrm{out}} = \mathcal{F}_{a}$. In the above, the rotation operator $g\in C_n$ is applied using Equation~\ref{eqn:rot_act_image} as $g \mathcal{F}_a (a_{xy}) = \rho_0(g) \mathcal{F}_a (\rho_1(g)^{-1} (a_{xy}))$. Figure~\ref{fig:equi_dqn} illustrates this equivariance constraint for the robotic manipulation example with $|\mathcal{A}_{\equi}|=|\mathcal{A}_{xy}|=9$. When the state (represented as an image on the left) is rotated by 90 degrees, the values associated with the action variables in $\mathcal{A}_{xy}$ are also rotated similarly. The detailed network architecture is shown in Appendix~\ref{appendix:network_equi_dqn}. \edit{Our architecture is different from that in~\cite{mondal2020group} in that we associate the action of $g$ on $\mathcal{A}_\equi$ and $\mathcal{A}_\inv$ with the group action on the spatial dimension and the channel dimension of a feature map $\mathcal{F}_a$, which is more efficient than learning such mapping using FC layers.}

\subsection{Equivariant SAC}
\label{sec:equi_sac}

\begin{wrapfigure}[17]{r}{0.4\textwidth}
\vspace{-0.5cm}
  \begin{center}
  \includegraphics[width=0.4\textwidth]{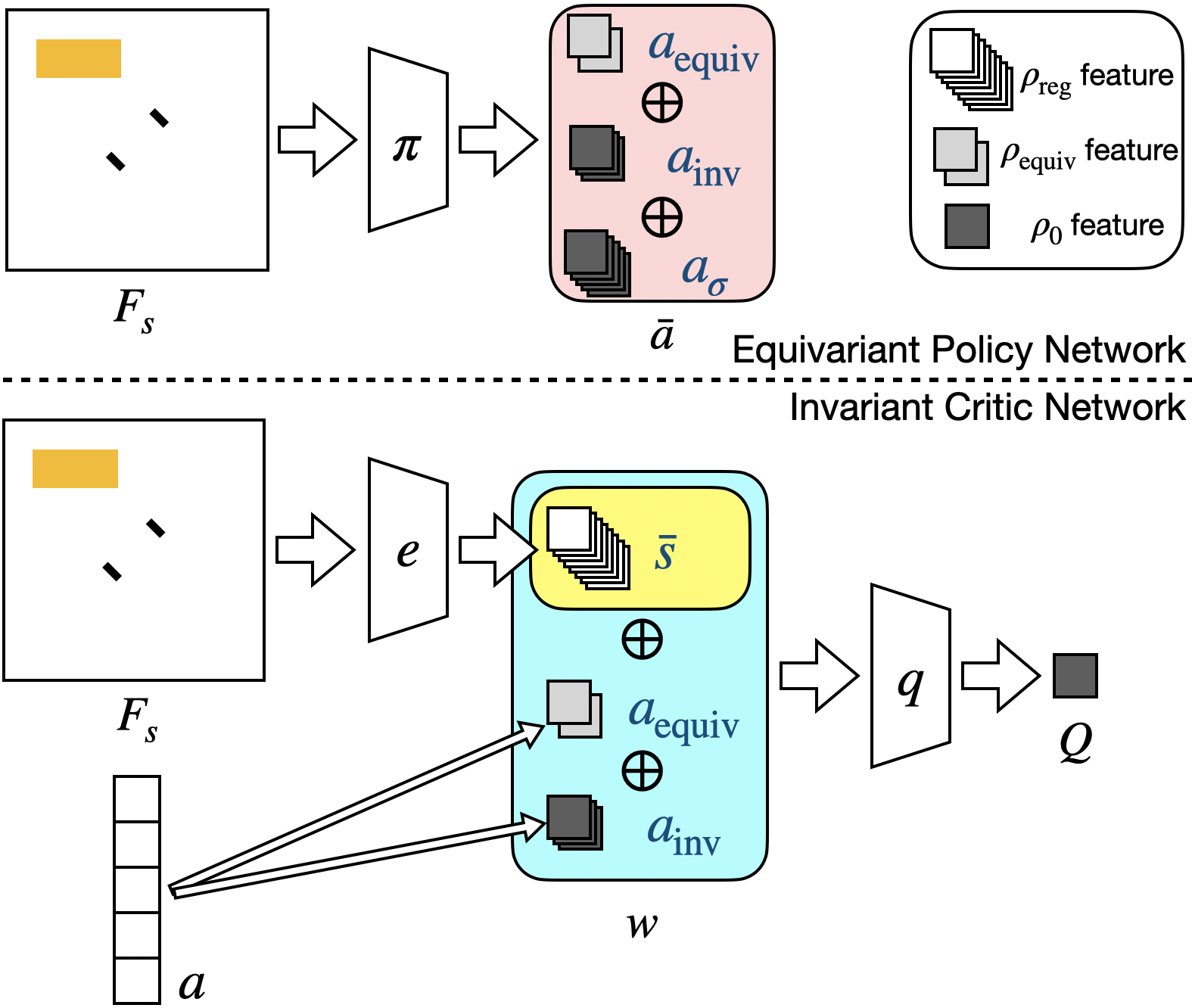}
  \end{center}
  \caption{Illustration of the equivariant actor network (top) and the invariant critic network (bottom).}
\label{fig:equi_actor_critic}
\end{wrapfigure}

In SAC, we assume the action space is continuous. We learn the parameters for two networks: a policy network $\Pi$ (the actor) and an action-value network $Q$ (the critic)~\citep{sac}. The critic $Q : S \times A \rightarrow \mathbb{R}$ approximates $Q$ values in the typical way. However, the actor $\Pi : S \rightarrow A \times A_\sigma$ estimates both the mean and standard deviation of action for a given state. Here, we define $A_\sigma = \mathbb{R}^k$ to be the domain of the standard deviation variables over the $k$-dimensional action space defined in Section~\ref{sect:visuo_motor_control}. Since Proposition~\ref{prop:qstar_pistar} tells us that the optimal $Q$ is invariant and the optimal policy is equivariant, we must model $Q$ as an invariant network and $\Pi$ as an equivariant network.

\underline{Policy network:} First, consider the equivariant constraint of the policy network. As before, the state is encoded by the function $\mathcal{F}_s$. However, we must now express the action as a vector over $\bar{A} = A \times A_\sigma$. Factoring $A$ into its equivariant and invariant components, we have $\bar{A} = A_\equi \times A_\inv \times A_\sigma$. In order to identify the equivariance relation for $\bar{A}$, we must define how the group operator $g \in G$ acts on $a_\sigma \in A_\sigma$. Here, we make the simplifying assumption that $a_\sigma$ is invariant to the group operator. This choice makes sense in robotics domains where we would expect the variance of our policy to be invariant to the choice of reference frame. As a result, we have that the group element $g \in G$ acts on $\bar{a} \in \bar{A}$ via:
\begin{equation}
g \bar{a} = g (a_\equi, a_\inv, a_\sigma) = (\rho_\equi(g) a_\equi, a_\inv, a_\sigma).
\end{equation}
We can now define the actor network $\pi$ to be a mapping $\mathcal{F}_s \mapsto \bar{a}$ (Figure~\ref{fig:equi_actor_critic} top) that satisfies the following equivariance constraint (Equation~\ref{eqn:equiv_layer}):
\begin{equation}
\pi(g\mathcal{F}_s) = g(\pi(\mathcal{F}_s)) = g \bar{a}.
\end{equation}

\underline{Critic network:} The critic network takes both state and action as input and maps onto a real value. We define two equivariant networks: a state encoder $e$ and a $Q$ network $q$. The equivariant state encoder, $e$, maps the input state $\mathcal{F}_s$ onto a regular representation $\bar{s} \in (\mathbb{R}^n)^\alpha$ where each of $n$ group elements is associated with an $\alpha$-vector. Since $\bar{s}$ has a regular representation, we have $g\bar{s} = \rho_{\reg}(g) \bar{s}$. Writing the equivariance constraint of Equation~\ref{eqn:equiv_layer} for $e$, we have that $e$ must satisfy $e(g \mathcal{F}_s) = g e(\mathcal{F}_s) = g \bar{s}$. 
% The output of the state encoder $e$ is concatenated with the action $a \in A$, producing $w = (e(\mathcal{F}_s), a)$. The action of the group operator is now $gw = (ge(\mathcal{F}_s), ga)$ where $ga = (\rho_\equi(g)a_\equi, a_\inv)$. 
The output state representation $\bar{s}$ is concatenated with the action $a \in A$, producing $w = (\bar{s}, a)$. The action of the group operator is now $gw = (g\bar{s}, ga)$ where $ga = (\rho_\equi(g)a_\equi, a_\inv)$. 
Finally, the $q$ network maps from $w$ onto $\mathbb{R}$, a real-valued estimate of the $Q$ value for $w$. Based on proposition~\ref{prop:qstar_pistar}, this network must be invariant to the group action: $q(gw) = q(w)$. All together, the critic satisfies the following invariance equation:
\begin{equation}
q(e(g \mathcal{F}_s),ga) = q(e(\mathcal{F}_s),a).
\end{equation}
This network is illustrated at the bottom of 
Figure~\ref{fig:equi_actor_critic}. For a robotic manipulation domain in Section~\ref{sect:visuo_motor_control}, we have $A_\equi = A_{xy}$ and $A_\inv = A_\lambda \times A_z \times A_{\theta}$ and $\rho_\equi = \rho_1$. The detailed network architecture is in Appendix~\ref{appendix:network_equi_sac}.

\underline{Preventing the critic from becoming overconstrained}: In the model architecture above, \edit{the hidden layer of} $q$ is represented using a vector in the regular representation and the output of $q$ is encoded using the trivial representation. However, Schur's Lemma~(see e.g. \citet{schur}) implies there only exists a one-dimensional space of linear mappings from a regular representation to a trivial representation (i.e., $x=a\sum_i v_i $ where $x$ is a trivial representation, $a$ is a constant, and $v$ is a regular representation). This implies that a linear mapping $f:\mathbb{R}^n\times \mathbb{R}^n \to \mathbb{R}$ from two regular representations to a trivial representation that satisfies $f(gv, gw) = f(v, w)$ for all $g \in G$ will also satisfy $f(g_1v, w) = f(v, w)$ and $f(v,g_2 w) = f(v, w)$ for all $g_1,g_2 \in G$. (See details in Appendix~\ref{appendix:overconstrain}.) In principle, this could overconstrain the \edit{last layer of $q$ to encode additional undesired symmetries.} To avoid this problem we use a \emph{non-linear} equivariant mapping, $\mathrm{maxpool}$, over the group space to transform the regular representation to the trivial representation.

% Proposition~\ref{prop:qstar_pistar} shows that the optimal $Q$-function is invariant when the input state and action are transformed by the same rotation $g$.
% We wish to implement an equivariant critic network which satisfies this property, but we must take care the model is not overconstrained. 
% We use the regular representation as the representation type of the hidden layer and the trivial representation for the output layer. However, Schur's Lemma~\citep{schur} implies there only exists a one-dimensional space of linear mappings from a regular representation to a trivial representation (i.e., $x=a\sum_i v_i $ where $x$ is a trivial representation, $a$ is a constant, and $v$ is a regular representation). This implies that a linear mapping $f:\mathbb{R}^n\times \mathbb{R}^n \to \mathbb{R}$ from two regular representations to a trivial representation that satisfies $f(gv, gw) = f(v, w)$ will also satisfy $f(gv, w) = f(v, w)$ (See details in Appendix~\ref{appendix:overconstrain}), which is an overconstrain of the function that we do not wish to have. To avoid this overconstrained situation, we use a \emph{non-linear} equivariant mapping, $\mathrm{maxpool}$, over the group space to transform the regular representation to the trivial representation.

\subsection{Equivariant SACfD}
\label{sect:sacfd}

Many of the problems we want to address cannot be solved without guiding the agent's exploration somehow. In order to evaluate our algorithms in this context, we introduce the following simple strategy for learning from demonstration with SAC. First, prior to training, we pre-populate the replay buffer with a set of expert demonstrations generated using a hand-coded planner. Second, we introduce the following L2 term into the SAC actor's loss function:
\begin{equation}
\mathcal{L}_\mathrm{actor} = \mathcal{L}_\mathrm{SAC} + \mathbbm{1}_{e} \left[ \frac{1}{2}((a\sim \pi(s)) - a_e)^2\right],
\end{equation}
where $\mathcal{L}_\mathrm{SAC}$ is the actor's loss term in standard SAC, $\mathbbm{1}_{e}=1$ if the sampled transition is an expert demonstration and 0 otherwise, $a\sim \pi(s)$ is an action sampled from the output Gaussian distribution of $\pi (s)$, and $a_e$ is the expert action. Since both the sampled action $a\sim \pi(s)$ and the expert action $a_e$ transform equivalently, $\mathcal{L}_{\mathrm{actor}}$ is compatible with the equivariance we introduce in Section~\ref{sec:equi_sac}. We refer to this method as SACfD (SAC from Demonstration).

\section{Experiments}
\begin{figure}[t]
\newlength{\env}
\newlength{\envbpc}
\setlength{\env}{0.13\linewidth}
\setlength{\envbpc}{0.09\linewidth}
\centering
\subfloat[Block Pulling]{
\label{fig:envs_bpull}
\includegraphics[width=\env]{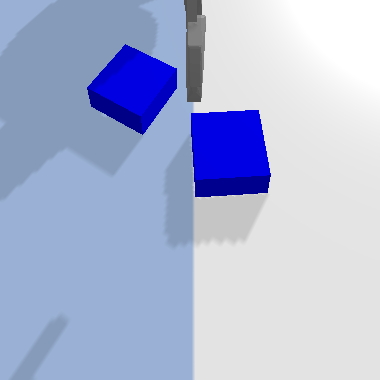}
\includegraphics[width=\env]{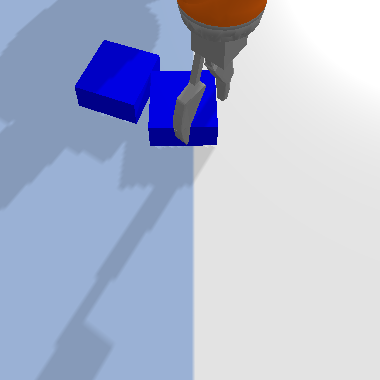}
}
\subfloat[Object Picking]{
\label{fig:envs_hp}
\includegraphics[width=\env]{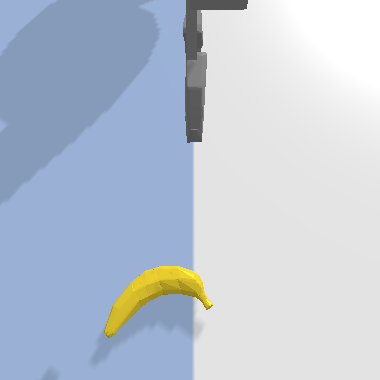}
\includegraphics[width=\env]{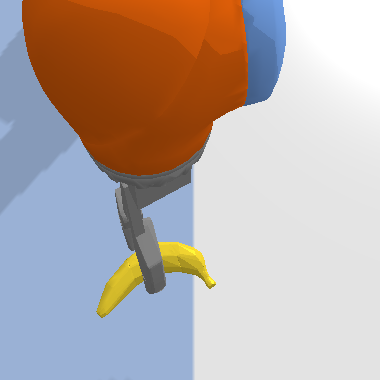}
}
\subfloat[Drawer Opening]{
\label{fig:do}
\includegraphics[width=\env]{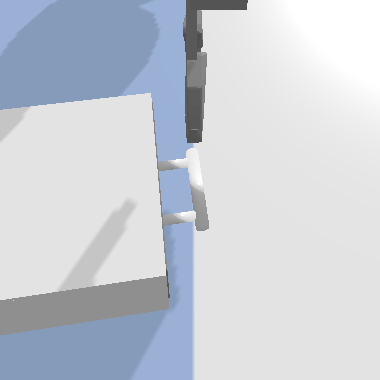}
\includegraphics[width=\env]{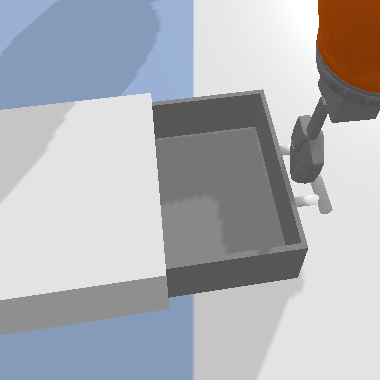}
}\\
\subfloat[Block Stacking]{
\label{fig:bs}
\includegraphics[width=\env]{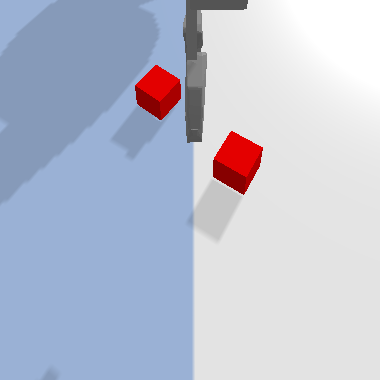}
\includegraphics[width=\env]{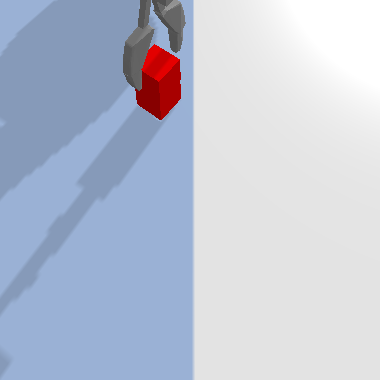}
}
\subfloat[House Building]{
\label{fig:h1}
\includegraphics[width=\env]{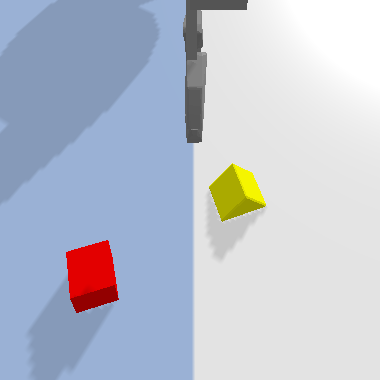}
\includegraphics[width=\env]{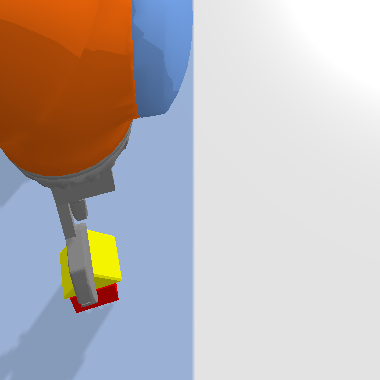}
}
\subfloat[Corner Picking]{
\label{fig:bpc}
\includegraphics[width=\env]{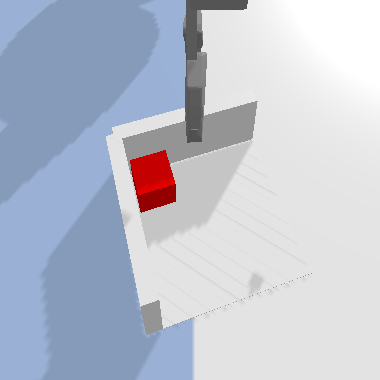}
\includegraphics[width=\env]{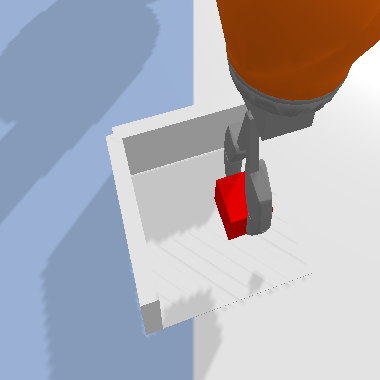}
}
\caption{The experimental environments implemented in PyBullet~\citep{pybullet}. The left image in each sub figure shows an initial state of the environment; the right image shows the goal state. The poses of the objects are randomly initialized.}
\label{fig:envs}
\end{figure}

We evaluate Equivariant DQN and Equivariant SAC in the manipulation tasks shown in Figure~\ref{fig:envs}. These tasks can be formulated as $\SO(2)$-invariant MDPs. All environments have sparse rewards (+1 when reaching the goal and 0 otherwise). See environment details in Appendix~\ref{appendix:env_detail}. 

% MOVING THE BIT ABOUT THE GRIPPER INTO THE APPENDIX

% We augment the state image $I$ with an additional binary channel (i.e., either all pixels are 1 or all pixels are 0) indicating if the gripper is holding an object. Note that this additional channel is invariant to rotations (because all pixels have the same value) so it won't break the proposed equivariant properties.

\subsection{Equivariant DQN}

\begin{figure}[t]
\centering
% \subfloat[Block Pulling]{\includegraphics[width=0.25\linewidth]{}}
% \subfloat[Object Picking]{\includegraphics[width=0.25\linewidth]{}}
% \subfloat[Drawer Opening]{\includegraphics[width=0.25\linewidth]{}}
\subfloat[Block Pulling]{\includegraphics[width=0.25\linewidth]{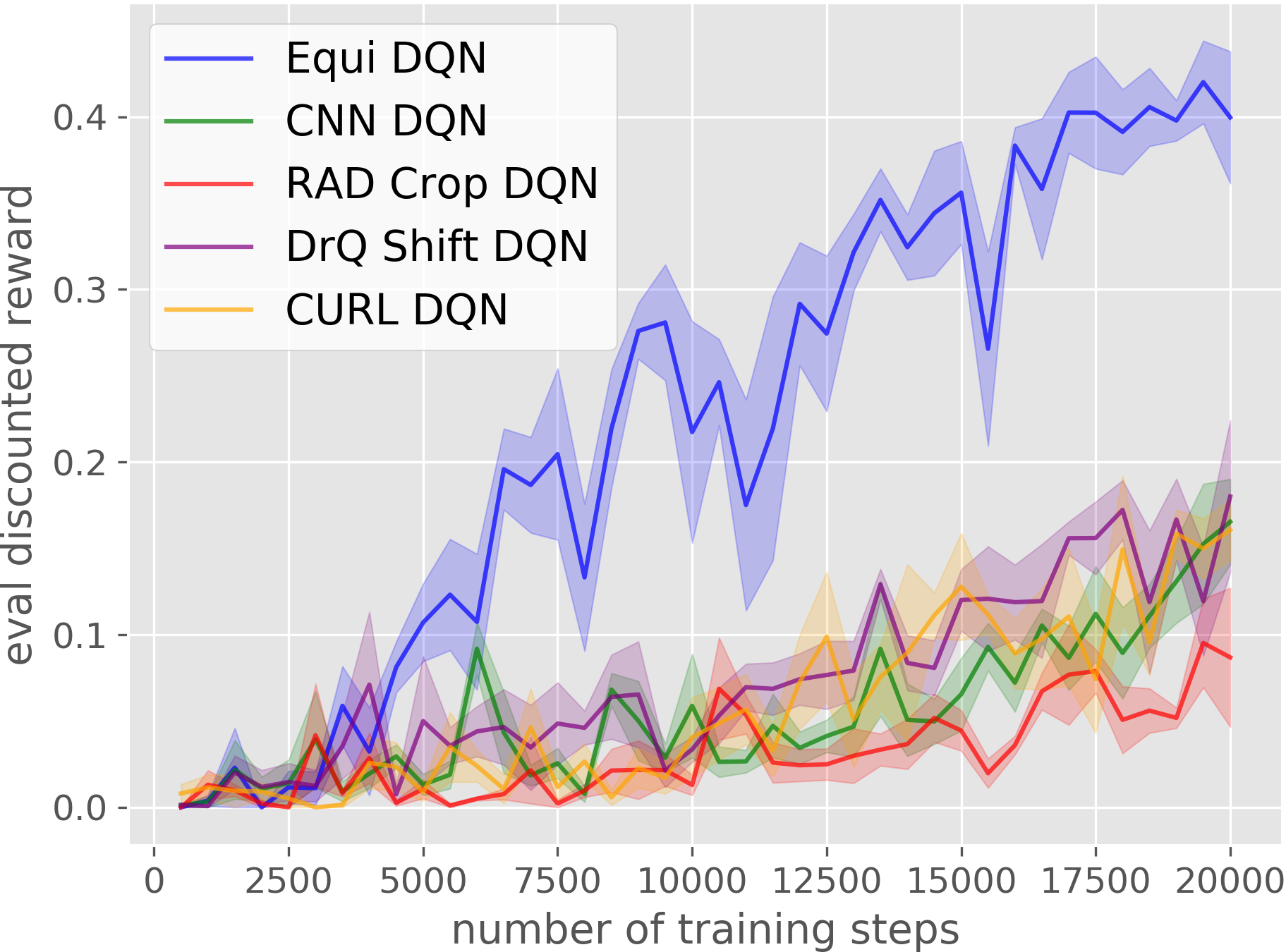}}
\subfloat[Object Picking]{\includegraphics[width=0.25\linewidth]{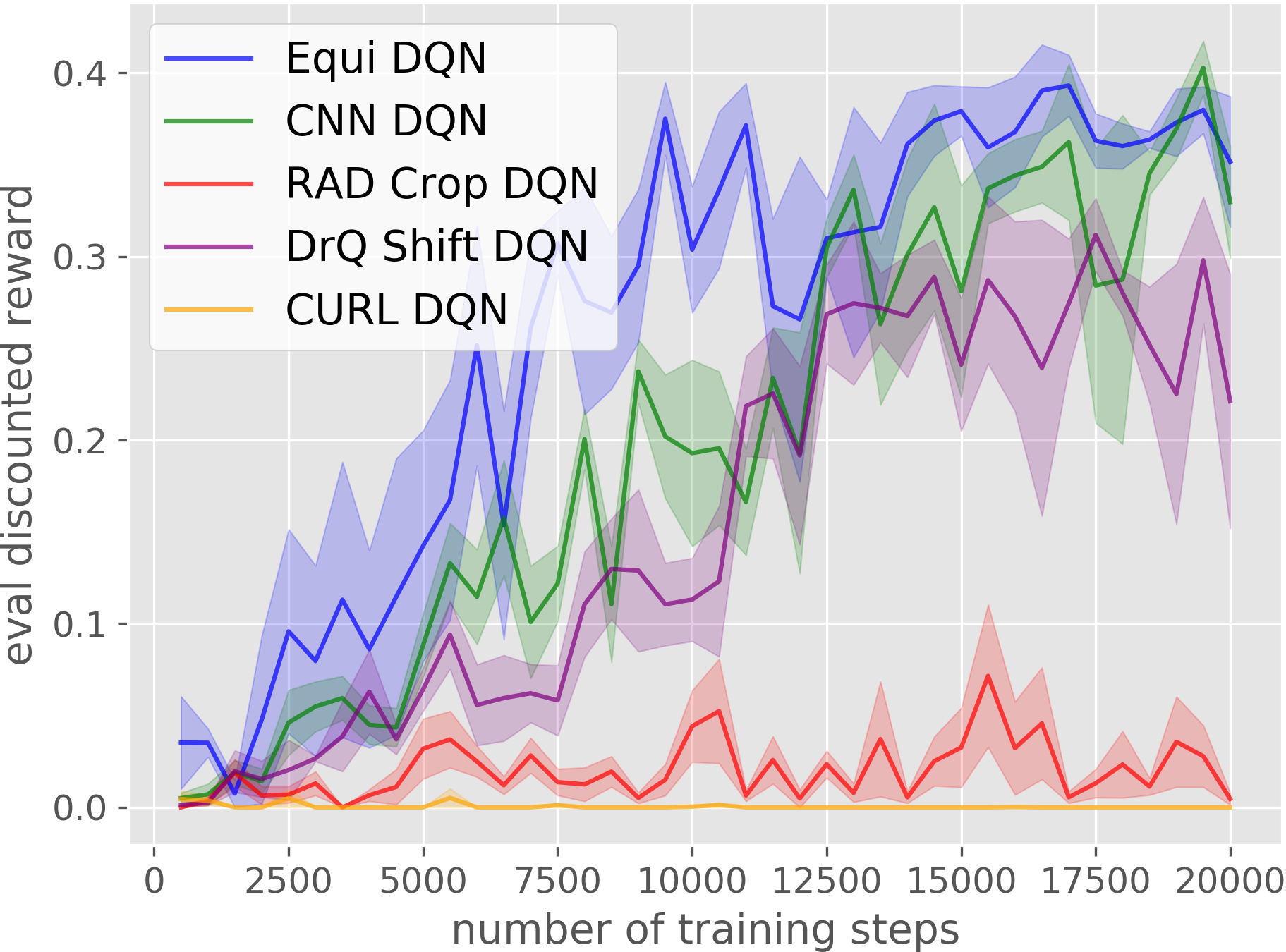}}
\subfloat[Drawer Opening]{\includegraphics[width=0.25\linewidth]{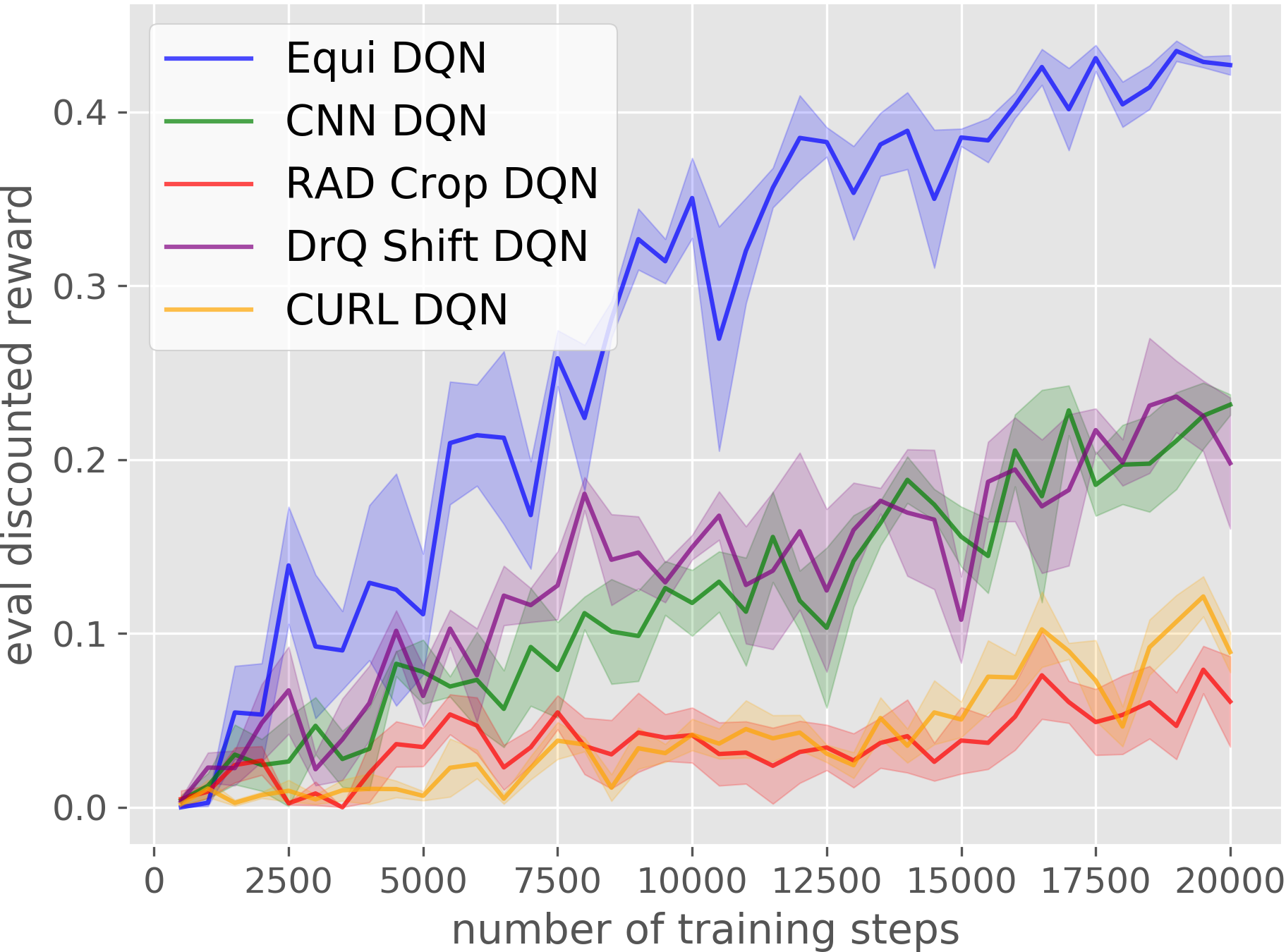}}
\caption{Comparison of Equivariant DQN (blue) with baselines. The plots show the evaluation performance of the greedy policy in terms of the discounted reward. The evaluation is performed every 500 training steps. Results are averaged over four runs. Shading denotes standard error.}
\label{fig:exp_equi_dqn}
\end{figure}

We evaluate Equivariant DQN in the Block Pulling, Object Picking, and Drawer Opening tasks for the group $C_4$. The discrete action space is $A_\lambda = \{\textsc{open}, \textsc{close}\}$; $A_{xy}=\{(x, y)|x, y\in \{-0.02m, 0m, 0.02m\}\}$; $A_z=\{-0.02m, 0m, 0.02m\}$; $A_\theta=\{-\frac{\pi}{16}, 0, \frac{\pi}{16}\}$. Note that the definition of $A_{xy}$ and $g\in C_4$ satisfies the closure requirement of the action space in a way that $\forall a_{xy}\in A_{xy}, \forall g\in C_4, \rho_1(g)a_{xy}\in A_{xy}$. We compare Equivariant DQN (Equi DQN) against the following baselines: 1) CNN DQN: DQN with conventional CNN instead of equivariant network\edit{, where the conventional CNN has a similar amount of trainable parameters (3.9M) as the equivariant network (2.6M).} 2) RAD Crop DQN~\citep{rad}: same network architecture as CNN DQN. At each training step, each transition in the minibatch is applied with a random-crop data augmentation. 3) DrQ Shift DQN~\citep{drq}: same network architecture as CNN DQN. At each training step, both the $Q$-targets and the TD losses are calculated by averaging over two random-shift augmented transitions.
% and the TD losses are also calculated by averaging over two random-shift augmented transitions. 
4): CURL DQN~\citep{curl}: similar architecture as CNN DQN with an extra contrastive loss term that learns an invariant encoder from random crop augmentations. See the baselines detail in Appendix~\ref{appendix:baseline}. At the beginning of each training process, we pre-populate the replay buffer with 100 episodes of expert demonstrations.

Figure~\ref{fig:exp_equi_dqn} compares the learning curves of the various methods. Equivariant DQN learns faster and converges at a higher discounted reward in all three environments.

% Figure~\ref{fig:exp_equi_dqn} shows the evaluation curve during training. Our Equivariant DQN learns faster and converges at a higher discounted return in all three environments.

\subsection{Equivariant SAC}
\label{sec:exp_equi_sac}

\begin{figure}[t]
\centering
% \subfloat[Block Pulling]{\includegraphics[width=0.25\linewidth]{}}
% \subfloat[Object Picking]{\includegraphics[width=0.25\linewidth]{}}
% \subfloat[Drawer Opening]{\includegraphics[width=0.25\linewidth]{}}
\subfloat[Block Pulling]{\includegraphics[width=0.25\linewidth]{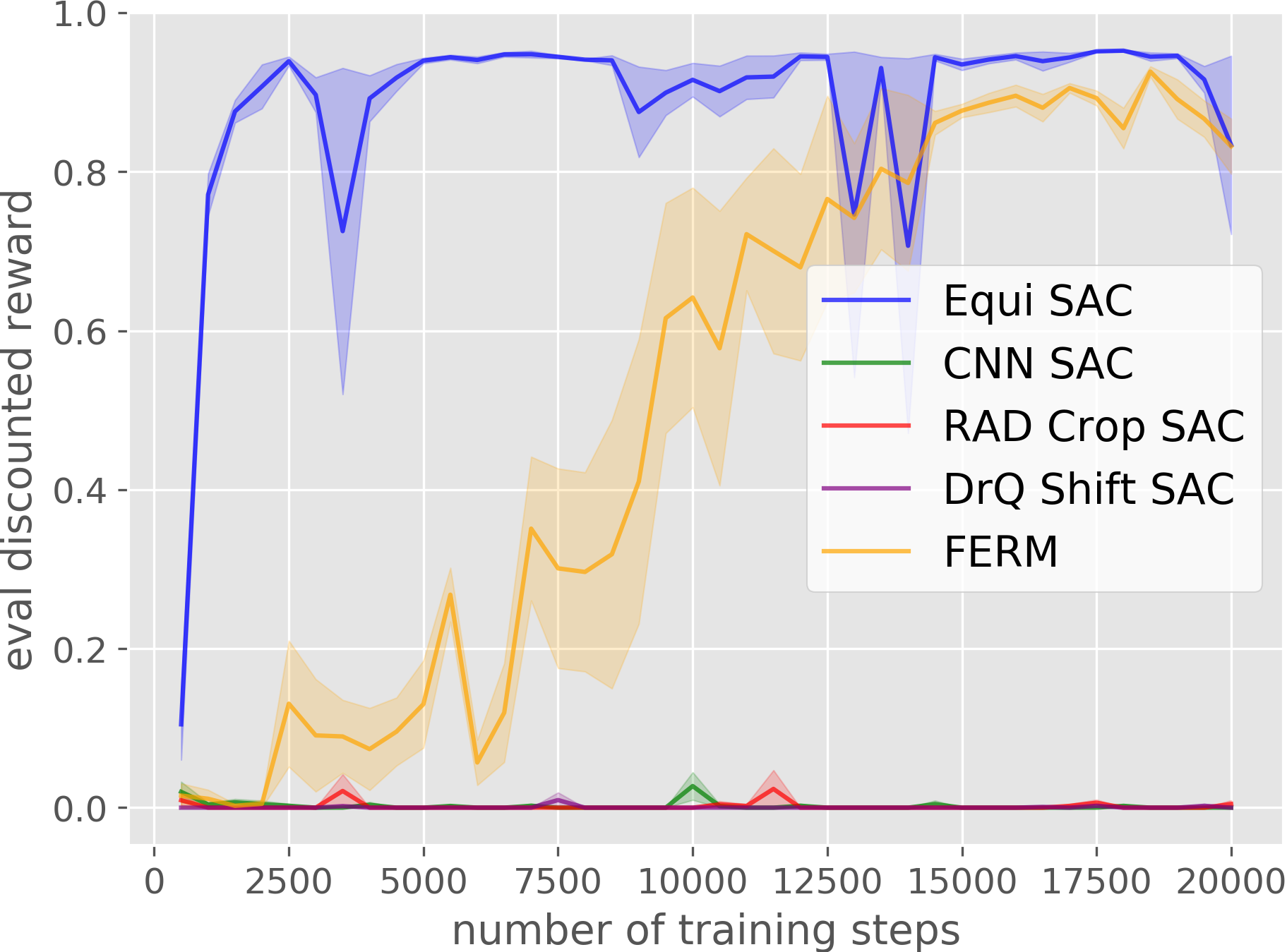}}
\subfloat[Object Picking]{\includegraphics[width=0.25\linewidth]{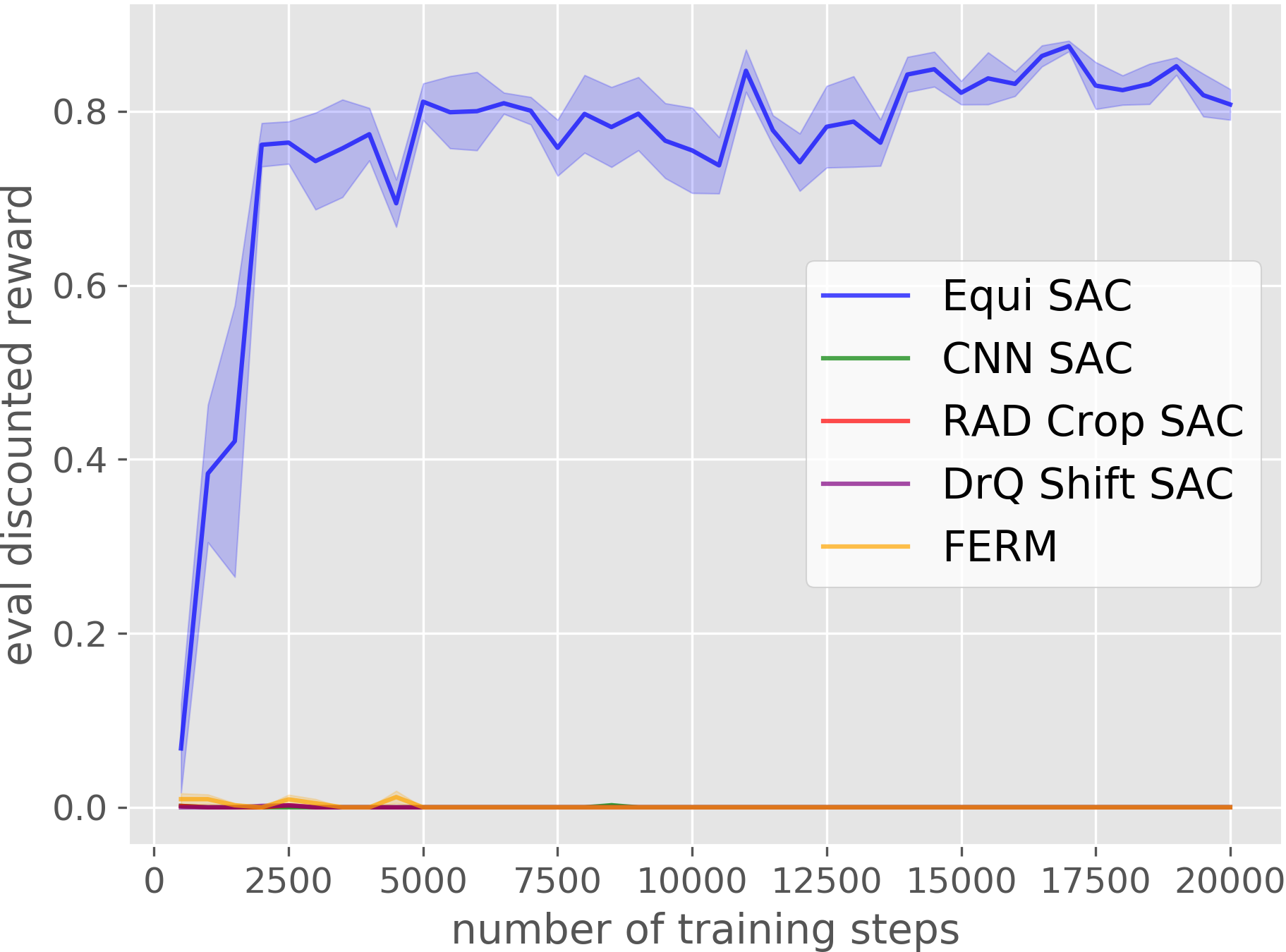}}
\subfloat[Drawer Opening]{\includegraphics[width=0.25\linewidth]{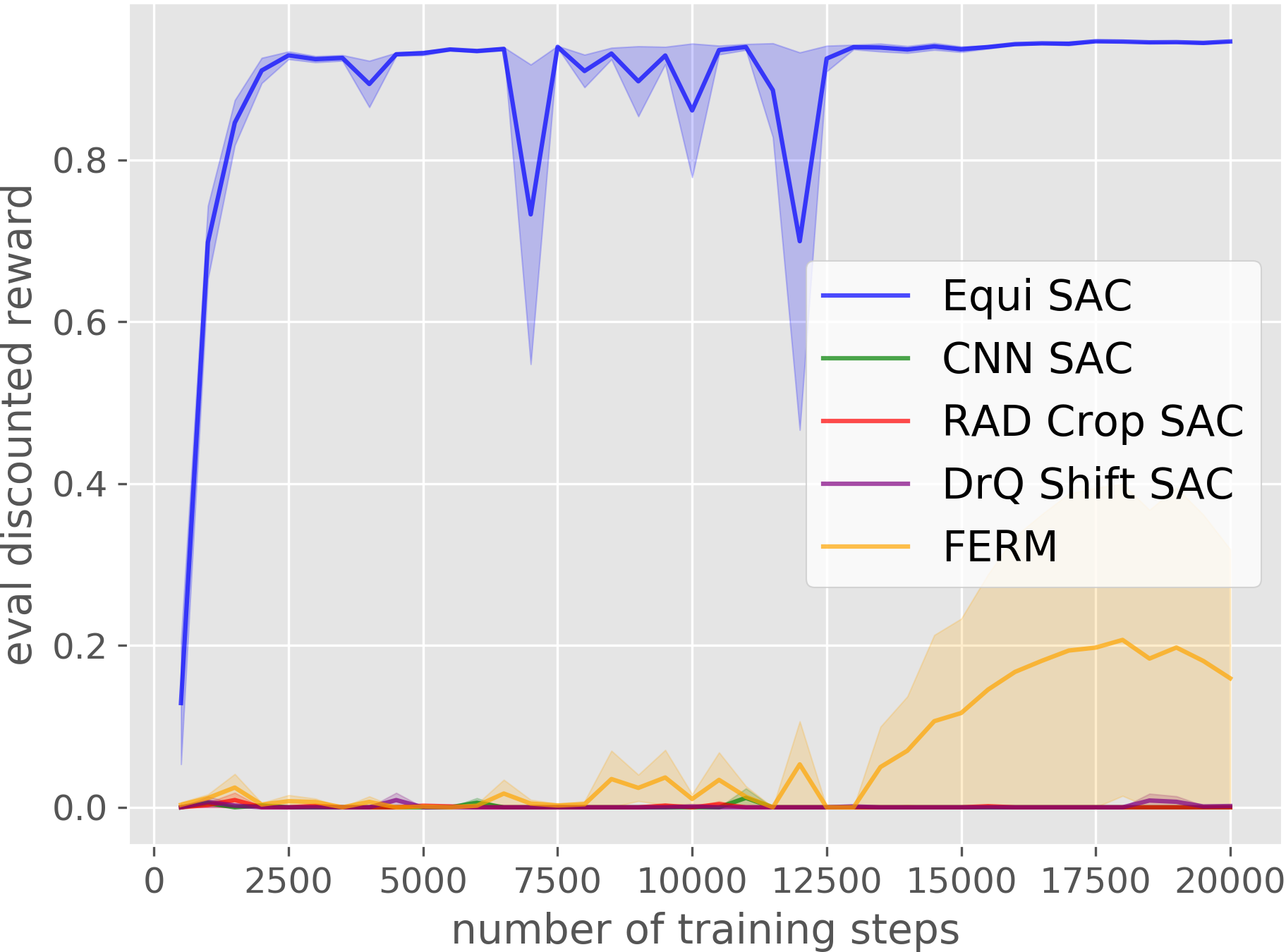}}
\caption{Comparison of Equivariant SAC (blue) with baselines. The plots show the evaluation performance of the greedy policy in terms of the discounted reward. The evaluation is performed every 500 training steps. Results are averaged over four runs. Shading denotes standard error.}\label{fig:exp_equi_sac}
\end{figure}

In this experiment, we evaluate the performance of Equivariant SAC (Equi SAC) for the group $C_8$. The continuous action space is:$A_\lambda = [0, 1]$; $A_{xy} = \{(x, y)|x, y\in [-0.05m, 0.05m]\}$; $A_{z} = [-0.05m, 0.05m]$; $A_\theta=[-\frac{\pi}{8}, \frac{\pi}{8}]$. 
% We achieve the action space closure requirement by thresholding the out of bound actions. 
We compare against the following baselines: 1) CNN SAC: SAC with conventional CNN rather than equivariant networks\edit{, where the conventional CNN has a similar amount of trainable parameters (2.6M) as the equivariant network (2.3M).} 2) RAD Crop SAC~\citep{rad}: same model architecture as CNN SAC with random crop data augmentation when sampling transitions. 3) DrQ Shift SAC~\citep{drq}: same model architecture as CNN SAC with random shift data augmentation when calculating the $Q$-target and the loss. 4) FERM~\citep{ferm}: a combination of SAC, contrastive learning, and random crop augmentation (baseline details in Appendix~\ref{appendix:baseline}). All methods use a $\SO(2)$ data augmentation buffer, where every time a new transition is added, we generate 4 more augmented transitions by applying random continuous rotations to both the image and the action (this data augmentation in the buffer is in addition to the data augmentation that is performed in the RAD DrQ, and FERM baselines). Prior to each training run, we pre-load the replay buffer with 20 episodes of expert demonstration.

Figure~\ref{fig:exp_equi_sac} shows the comparison among the various methods. Notice that Equivariant SAC outperforms the other methods significantly. Without the equivariant approach, Object Picking and Drawer Opening appear to be infeasible for the baseline methods. In Block Pulling, FERM is the only other method able to solve the task.

\subsection{Equivariant SACfD}
\label{sec:exp_equi_sacfd}

\begin{figure}[t]
\centering
% \subfloat[Block Pulling]{\includegraphics[width=0.25\linewidth]{}}
% \subfloat[Object Picking]{\includegraphics[width=0.25\linewidth]{}}
% \subfloat[Drawer Opening]{\includegraphics[width=0.25\linewidth]{}}
% \subfloat[Block Stacking]{\includegraphics[width=0.25\linewidth]{}}
% \subfloat[House Building]{\includegraphics[width=0.25\linewidth]{}}
% \subfloat[Corner Picking]{\includegraphics[width=0.25\linewidth]{}}

\subfloat[Drawer Opening]{\includegraphics[width=0.25\linewidth]{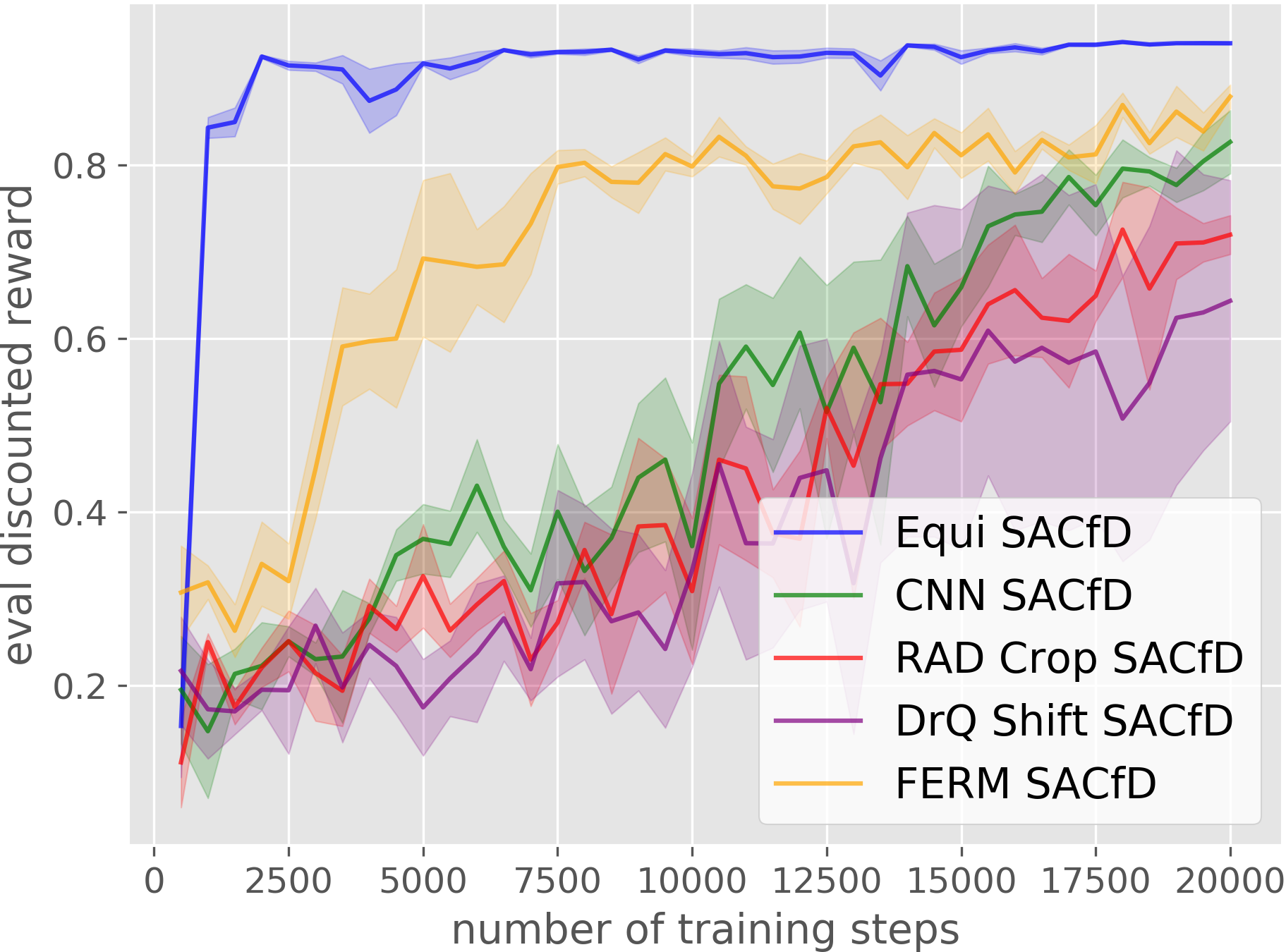}}
\subfloat[Block Stacking]{\includegraphics[width=0.25\linewidth]{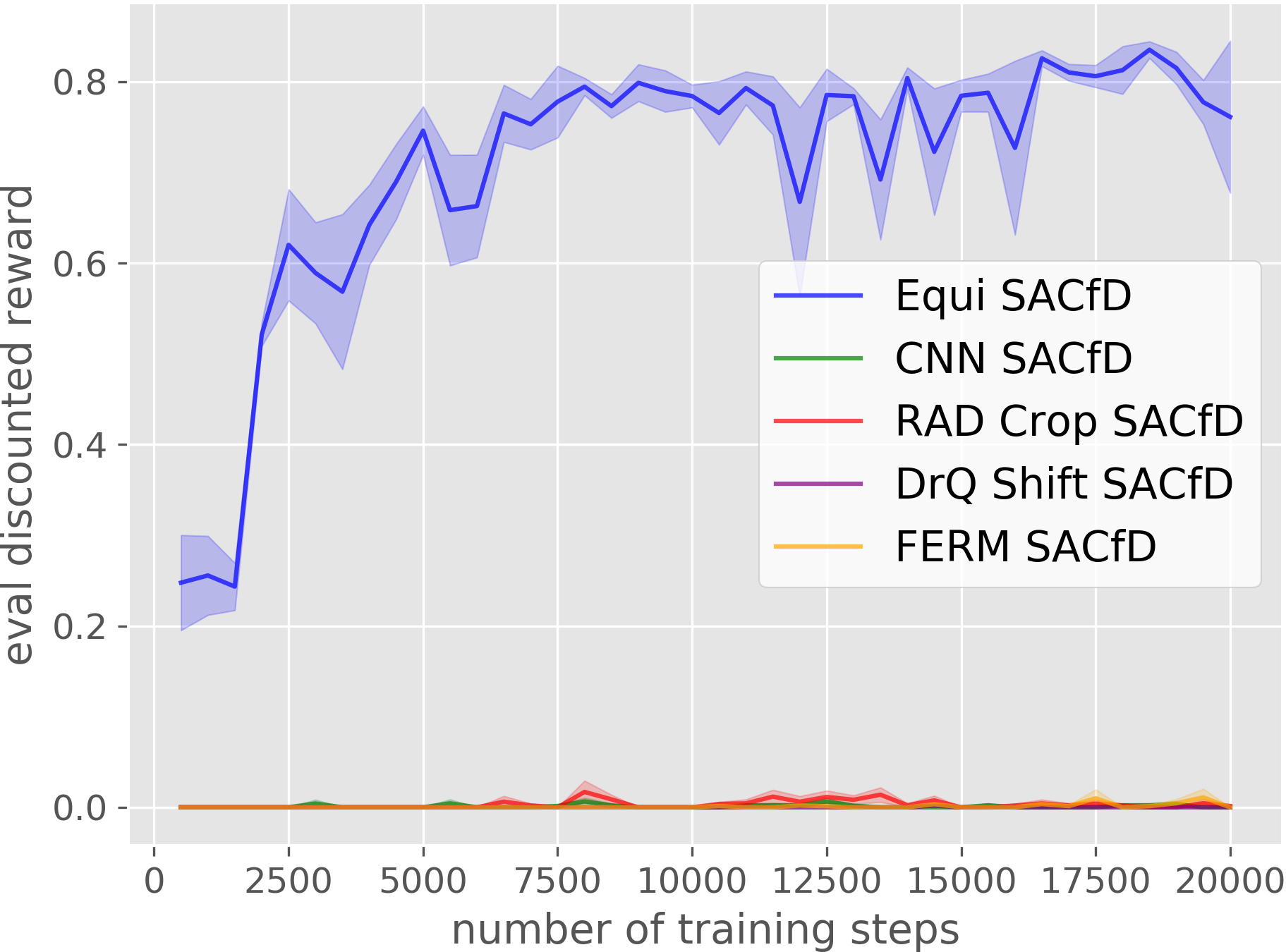}}
\subfloat[House Building]{\includegraphics[width=0.25\linewidth]{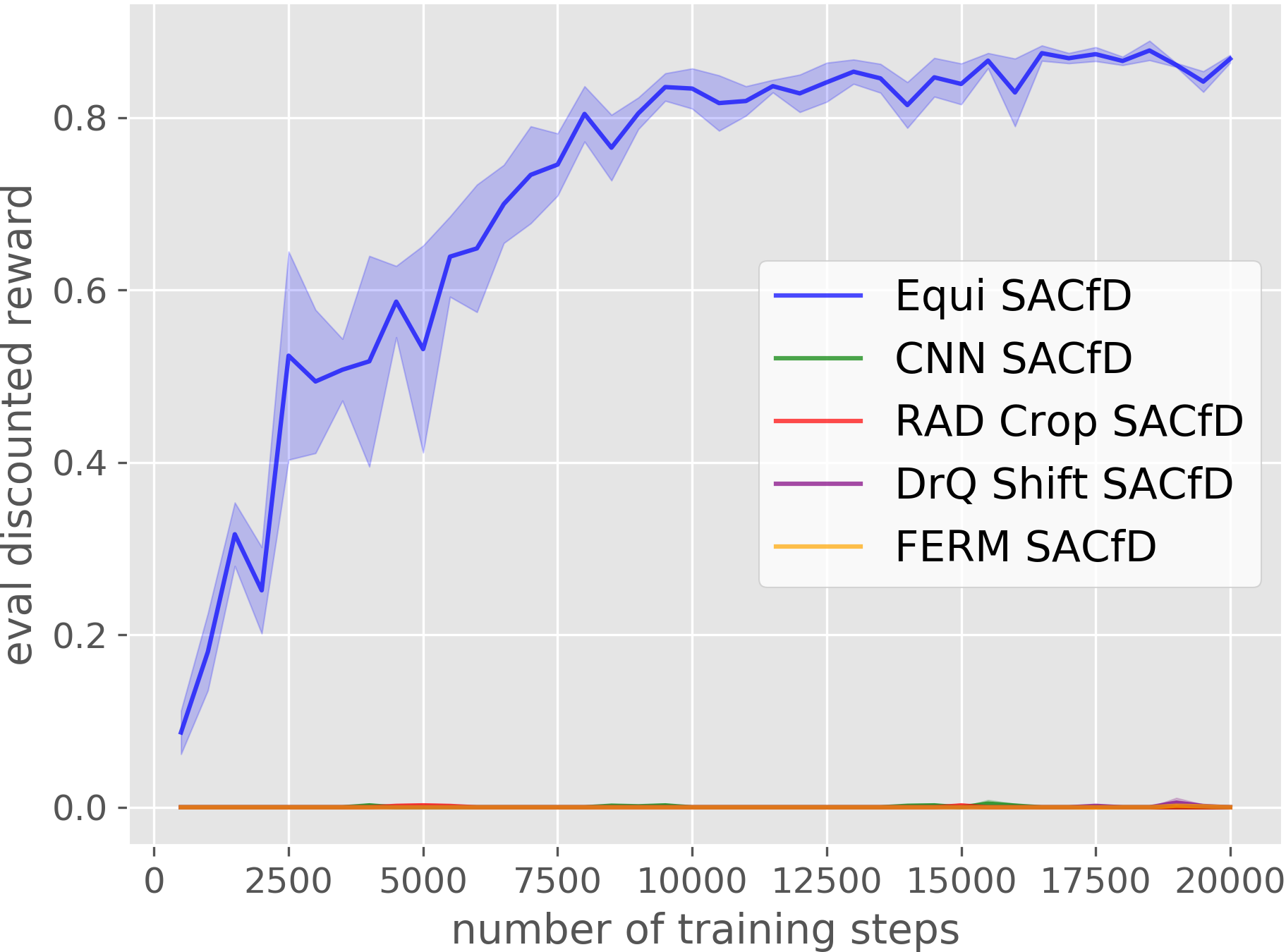}}
\subfloat[Corner Picking]{\includegraphics[width=0.25\linewidth]{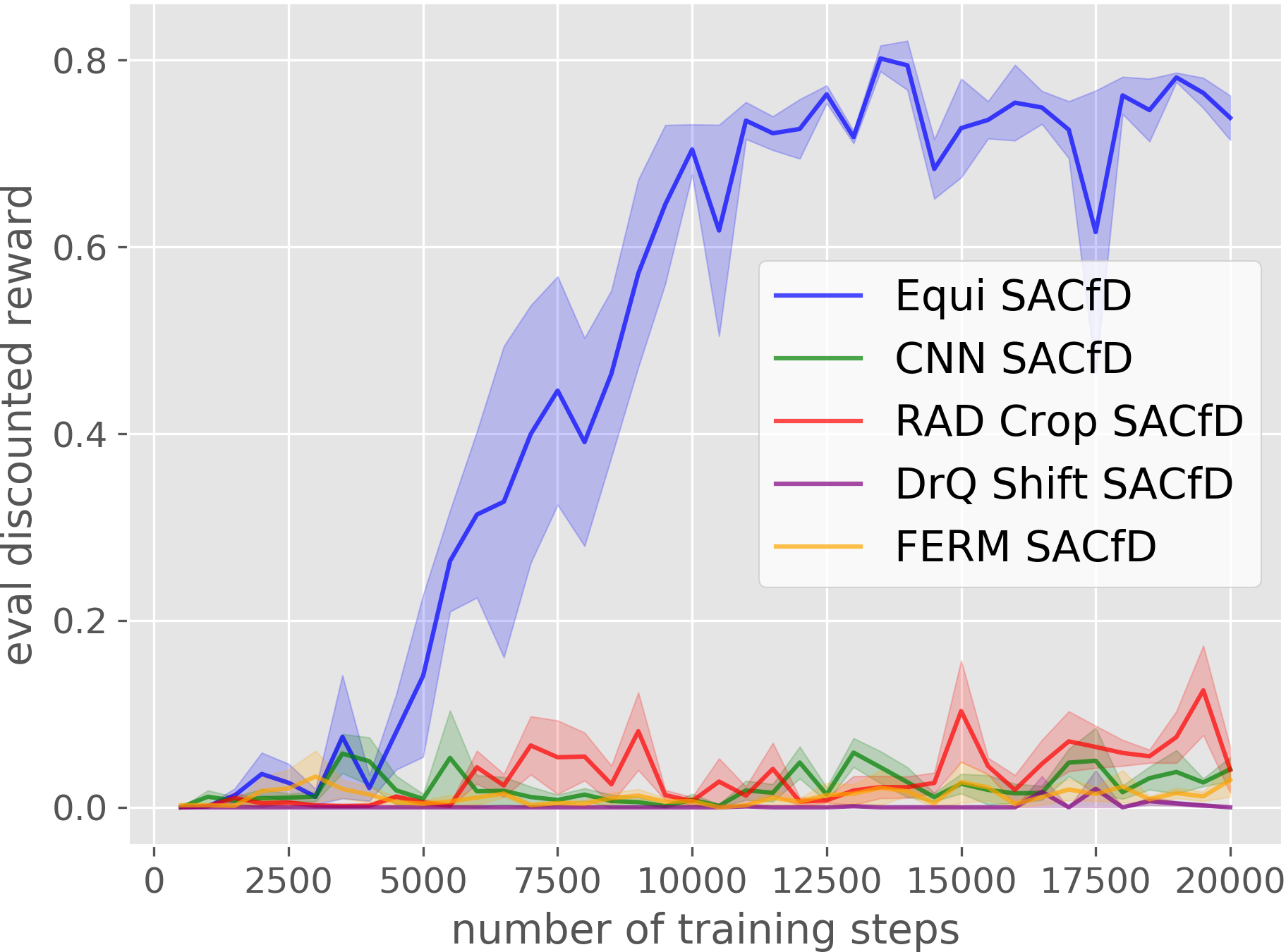}}
\caption{Comparison of Equivariant SACfD (blue) with baselines. The plots show the evaluation performance of the greedy policy in terms of the discounted reward. The evaluation is performed every 500 training steps. Results are averaged over four runs. Shading denotes standard error.}
\label{fig:exp_equi_sacfd}
\end{figure}

We want to explore our equivariant methods in the context of more challenging tasks such as those in the bottom row of Figure~\ref{fig:envs}. However, since these tasks are too difficult to solve without some kind of guided exploration, we augment the Equivariant SAC as well as all the baselines in two ways: 1) we use SACfD as described in Section~\ref{sect:sacfd}; 2) we use Prioritized Experience Replay~\citep{per} rather than standard replay buffer. As in Section~\ref{sec:exp_equi_sac}, we use the $\SO(2)$ data augmentation in the buffer that generates 4 extra $\SO(2)$-augmented transitions whenever a new transition is added. Figure~\ref{fig:exp_equi_sacfd} shows the results. First, note that our Equivariant SACfD does best on all four tasks, followed by FERM, and other baselines. Second, notice that only the equivariant method can solve the last three (most challenging tasks). This suggests that equivariant models are important not only for unstructured reinforcement learning, but also for learning from demonstration. Additional results for Block Pulling and Object Picking environments are shown in Appendix~\ref{appendix:app_exp_equi_sacfd}.

% Equivariant SAC learns in the Block Pulling, Object Picking, and Drawer Opening environment, but the environments at the bottom row of Figure~\ref{fig:envs} are too challenging for vanilla SAC. To this end, we apply the equivariant network in SACfD to solve more complex tasks including all six tasks in Figure~\ref{fig:envs}. We evaluate our Equivariant SACfD method against the same baselines as in Section~\ref{sec:exp_equi_sac} with the same experimental setup, except that all methods use SACfD with Prioritized Experience Replay~\citep{per} rather than vanilla SAC. Similar as Section~\ref{sec:exp_equi_sac}, we include 4 $\SO(2)$ data augmentation in the buffer when adding a new transition. 

% Figure~\ref{fig:exp_equi_sacfd} shows the result. First, note that our Equivariant SACfD performs the best all six tasks, followed by FERM and DrQ, followed by RAD and CNN. Second, none of the baselines can solve Block Stacking, House Building, and Corner Picking as our method can.

\subsection{Comparing with Learning Equivariance Using Augmentation}
\label{sec:equi_vs_soft_equi}
\begin{figure}[t]
\centering
% \subfloat[Block Pulling]{\includegraphics[width=0.25\linewidth]{}}
% \subfloat[Object Picking]{\includegraphics[width=0.25\linewidth]{}}
% \subfloat[Drawer Opening]{\includegraphics[width=0.25\linewidth]{}}
% \subfloat[Block Stacking]{\includegraphics[width=0.25\linewidth]{}}
% \subfloat[House Building]{\includegraphics[width=0.25\linewidth]{}}
% \subfloat[Corner Picking]{\includegraphics[width=0.25\linewidth]{}}

\subfloat[Drawer Opening]{\includegraphics[width=0.25\linewidth]{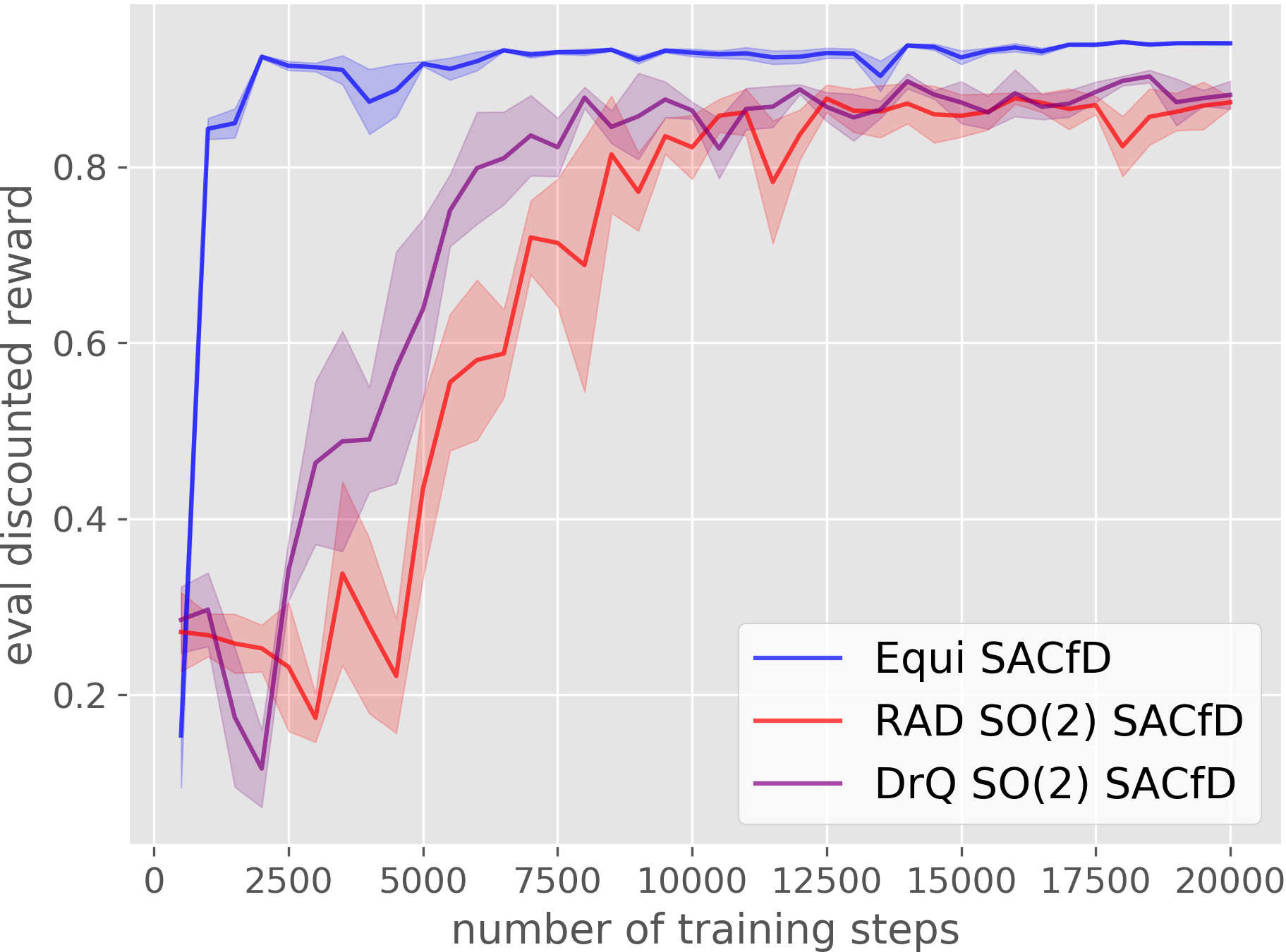}}
\subfloat[Block Stacking]{\includegraphics[width=0.25\linewidth]{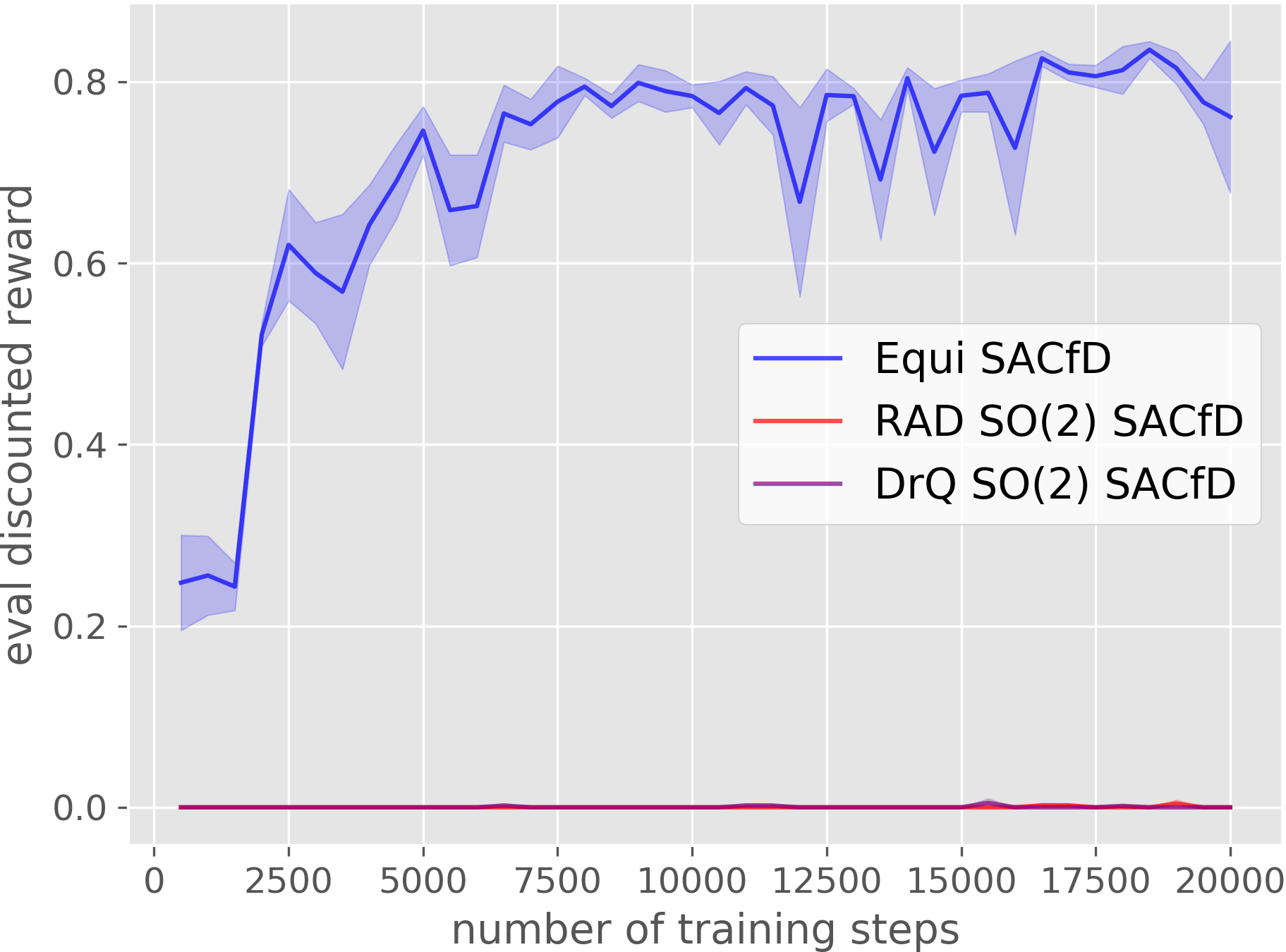}}
\subfloat[House Building]{\includegraphics[width=0.25\linewidth]{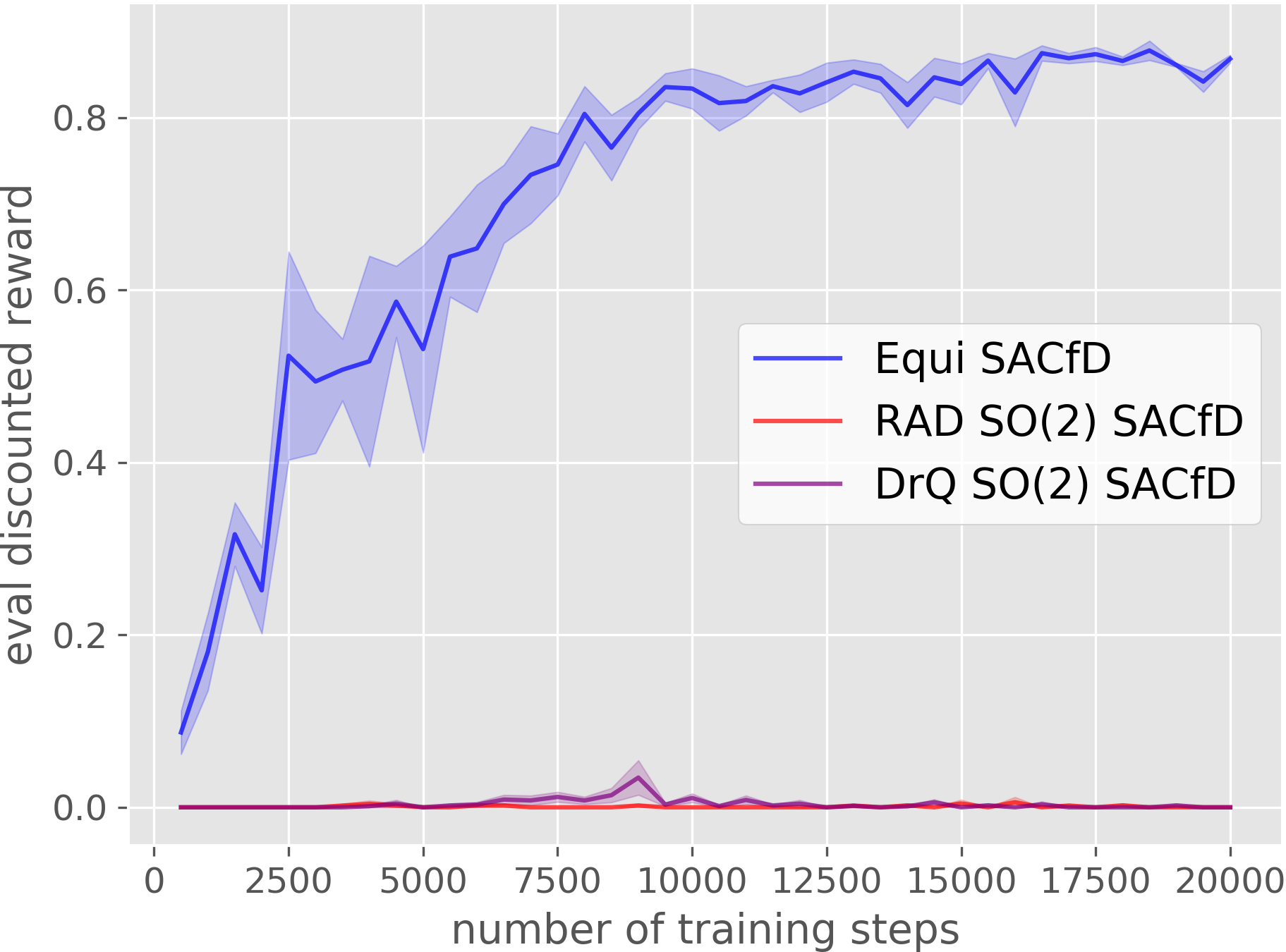}}
\subfloat[Corner Picking]{\includegraphics[width=0.25\linewidth]{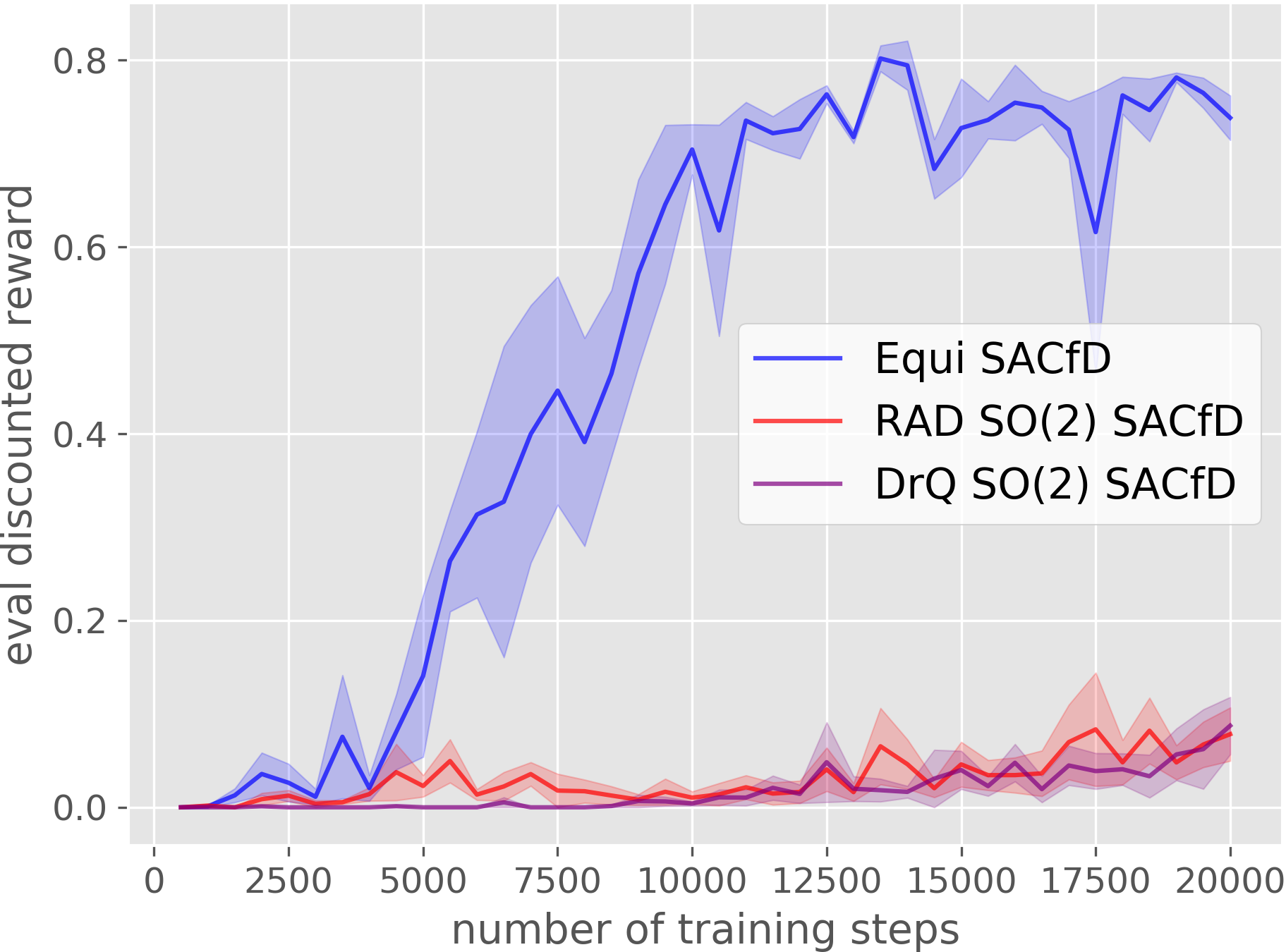}}
\caption{Comparison of Equivariant SACfD (blue) with baselines. The plots show the evaluation performance of the greedy policy in terms of the discounted reward. The evaluation is performed every 500 training steps. Results are averaged over four runs. Shading denotes standard error.}
\label{fig:exp_equi_sacfd_rot_aug}
\end{figure}

In the previous experiments, we compare against the data augmentation baselines using the same data augmentation operators that the authors proposed (random crop in RAD~\citep{rad} and random shift in DrQ~\citep{drq}). However, those two methods can also be modified to learn $\SO(2)$ equivariance using $\SO(2)$ data augmentation. Here, we explore this idea as an alternative to our equivariant model. Specifically, instead of augmenting on the state as in~\cite{rad} and~\cite{drq} using only translation, we apply the $\SO(2)$ augmentation in both the state and the action. Since the RAD and DrQ baselines in this section are already running $\SO(2)$ augmentations themselves, we disable the $\SO(2)$ buffer augmentation for the online transitions in those baselines. (See the result of RAD and DrQ with the $\SO(2)$ data augmentation buffer in Appendix~\ref{appendix:rot_aug+buffer_aug}). We compare the resulting version of RAD (RAD $\SO(2)$ SACfD) and DrQ (DrQ $\SO(2)$ SACfD) with our Equivariant SACfD in Figure~\ref{fig:exp_equi_sacfd_rot_aug}. Our method outperforms both RAD and DrQ equipped with $\SO(2)$ data augmentation. Additional results for Block Pulling and Object Picking are shown in Appendix~\ref{appendix:app_exp_equi_sacfd}.

\subsection{\edit{Generalization Experiment}}
\label{sec:exp_gen}
\begin{wrapfigure}[12]{r}{0.51\textwidth}
\centering
\vspace{-0.7cm}
\subfloat[Block Pulling]{\includegraphics[width=0.25\textwidth]{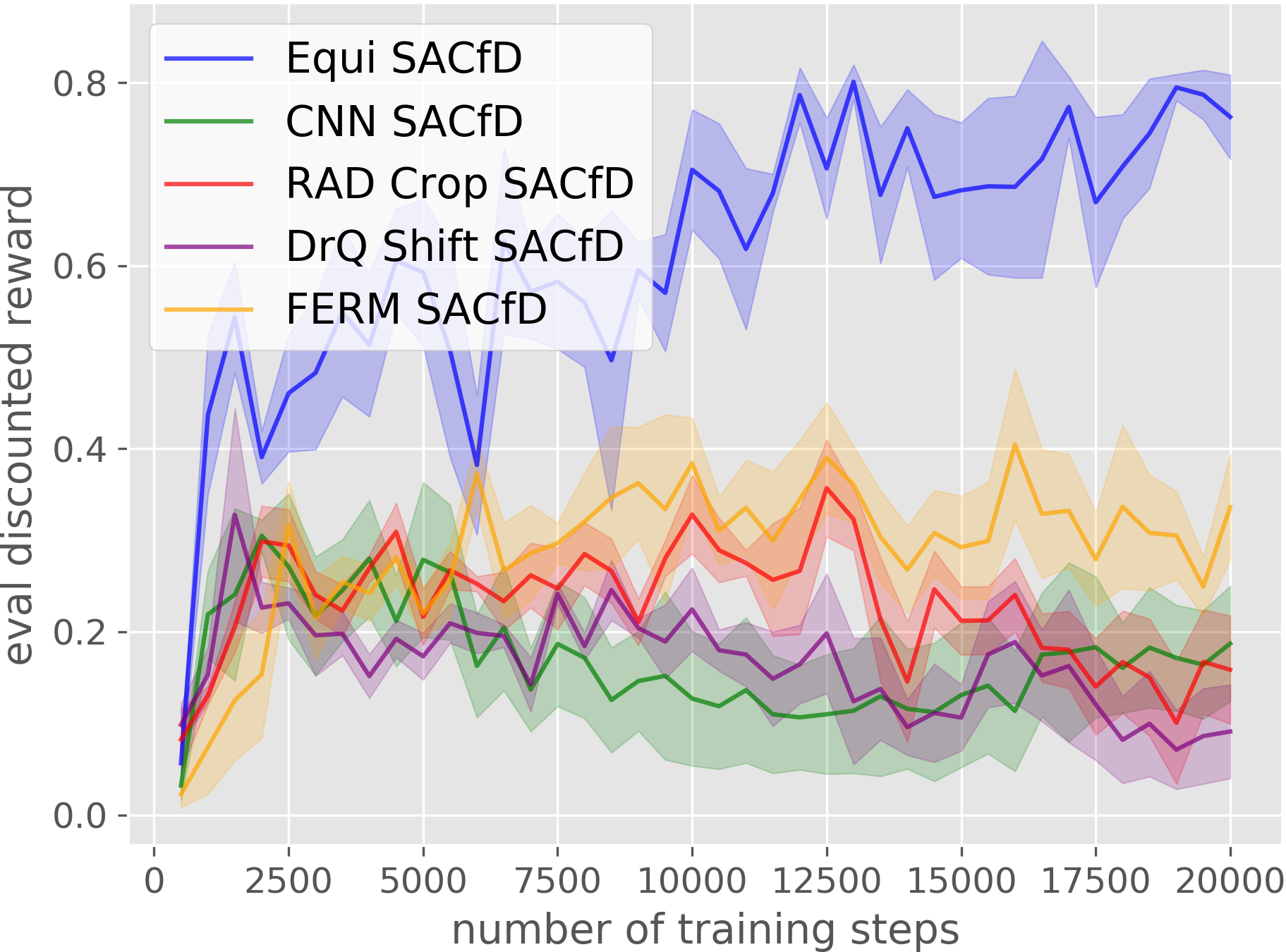}}
\subfloat[Drawer Opening]{\includegraphics[width=0.25\textwidth]{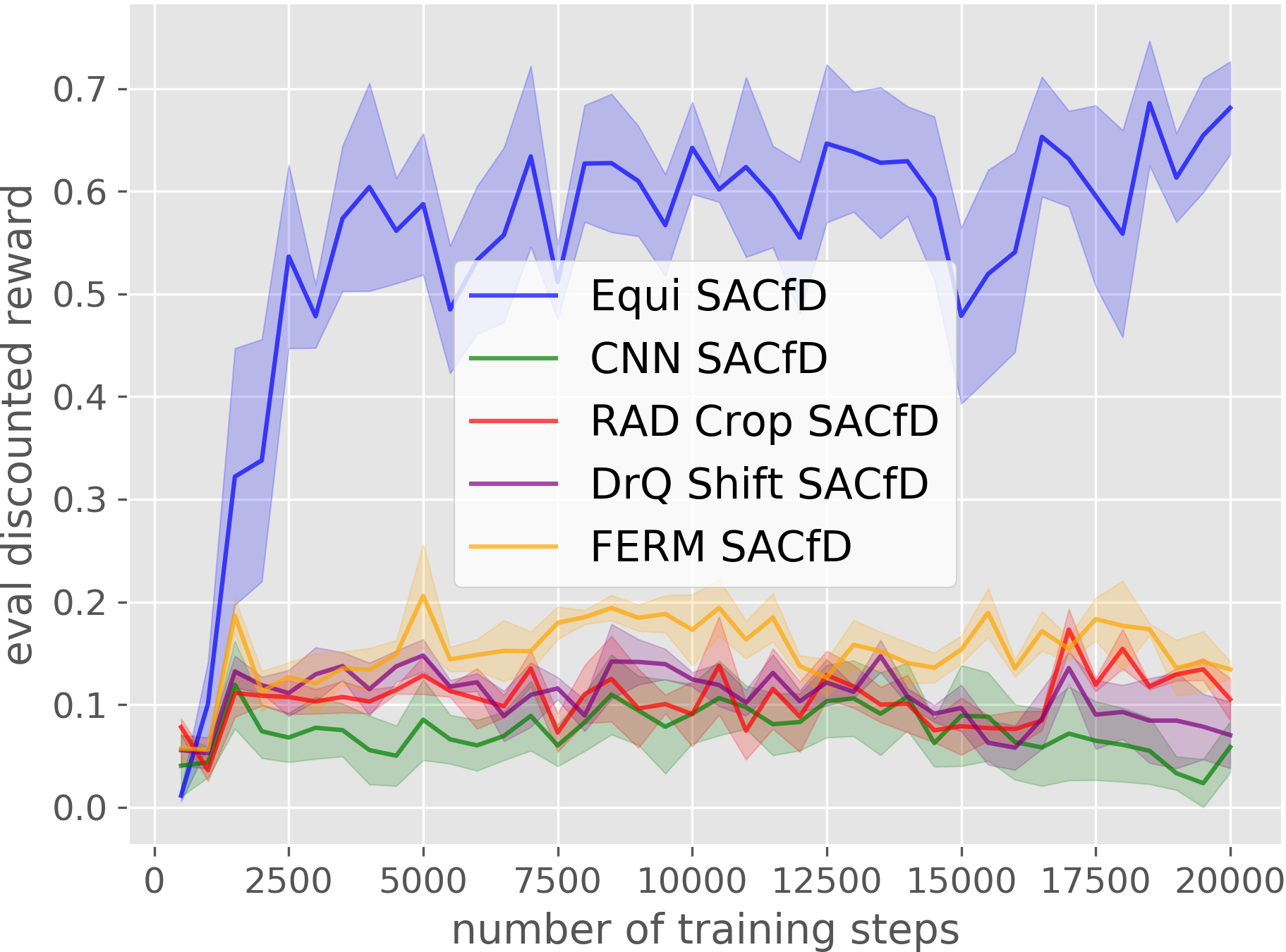}}
\caption{\edit{Comparison of Equivariant SACfD with baselines. Results are averaged over four runs.}}
\label{fig:exp_equi_sacfd_gen}
\end{wrapfigure}
T\edit{his experiment evaluates the ability for the equivariant model to generalize over the equivariance group. We use a similar experimental setting as in Section~\ref{sec:exp_equi_sacfd}. However, now the training environment is always initialized with a fixed orientation rather than a random orientation. For example, in Block Pulling, the two blocks are initialized with a fixed relative orientation; in Drawer Opening, the drawer is initialized with a fixed orientation. In the evaluation environment, however, these objects are initialized with random orientations. To succeed, the agent needs to generalize over varied orientations while being trained with a fixed orientation. To prevent the agent from generalizing via augmentation, we disable the $\SO(2)$ augmentation in the buffer. As shown in Figure~\ref{fig:exp_equi_sacfd_gen}, Equivariant SACfD generalizes better than the baselines. Even though the equivariant network is presented with only one orientation during training, it successfully generalizes over random orientation whereas none of the baselines can.}

\section{Discussion}
This paper defines a class of group-invariant MDPs and identifies the invariance and equivariance characteristics of their optimal solutions. This paper further proposes Equivariant SAC and \edit{a new variation of Equivariant DQN} for continuous action space and discrete action space, respectively. We show experimentally in the robotic manipulation domains that our proposal substantially surpasses the performance of competitive baselines. 
\edit{A key limitation of this work is that our definition of $G$-invariant MDPs requires the MDP to have an invariant reward function and invariant transition function. Though such restrictions are often applicable in robotics, they limit the potential of the proposed methods in other domains like some ATARI games. Furthermore, if the observation is from a non-top-down perspective, or there are non-equivariant structures in the observation (e.g., the robot arm), the invariant assumptions of a $G$-invariant MDP will not be directly satisfied.}

\subsubsection*{Acknowledgments}
This work is supported in part by NSF 1724257, NSF 1724191, NSF 1763878, NSF 1750649, and NASA 80NSSC19K1474. R. Walters is supported by the Roux Institute and the Harold Alfond Foundation and NSF grants 2107256 and 2134178.
% R. Walters is supported by a Postdoctoral Fellowship from the Roux Institute and NSF grants 2107256 and 2134178.

\bibliography{main}
\bibliographystyle{iclr2022_conference}

\clearpage

\appendix

\section{Proof of Proposition~\ref{prop:qstar_pistar}}
\label{appendix:proof}
The proof in this section follows~\cite{equi_q}. Note that the definition of group action $\cdot \colon G \times X \to X$ implies that elements $g \in G$ act by bijections on $X$ since the action of $g^{-1}$ gives a two-sided inverse for the action of $g$.  That is, $g$ permutes the elements of $X$.   

% We need the following Lemma regarding the state space $S$ and the action space $A$ in a $G$-invariant MDP described in Section~\ref{sec:problem_statement}. We use the notation $gS=\{gs|s\in S\}$ and $gA=\{ga|a\in A\}$.
% \begin{lemma}
% \label{lemma:1}
% Let S be the state space and let A be the action space of a $G$-invariant MDP, then $\forall g\in G$, we have that $S=gS$ and $A=gA$.
% \end{lemma}

% \begin{proof}
% First, consider the claim $S=gS$, we will show 1) $S\subseteq gS$ and 2) $gS\subseteq S$. 1) $S\subseteq gS$ follows the closure of state under $g\in G$ from Definition~\ref{def:Ginv_MDP}. 2) $gS\subseteq S$: Let $s'\in gS$, by the definition of $gS$, $\exists s\in S$ s.t. $gs=s'$ and $gs\in gS$. Using the closure of state under $g$ from Definition~\ref{def:Ginv_MDP}, we have $gs\in S$ or $s'\in S$. Thus $\forall s'\in gS$, $s'\in S$, so $gS\subseteq S$. A parallel argument can be used to show $A=gA$
% \end{proof}

\begin{proof}[Proof of Proposition~\ref{prop:qstar_pistar}]
For $g\in G$, we will first show that the optimal $Q$-function is $G$-invariant, i.e., $Q^*(s,a)=Q^*(gs,ga)$, then show that the optimal policy is $G$-equivariant, i.e., $\pi^*(gs)=g\pi^*(s)$. 

(1) $Q^*(s,a)=Q^*(gs,ga)$: The Bellman optimality equations for $Q^*(s,a)$ and $Q^*(gs,ga)$ are, respectively:
\begin{equation}
\label{eqn:pf1}
Q^*(s,a) = R(s,a) + \gamma \sup_{a' \in A} \int_{s' \in S} T(s,a,s') Q^*(s',a'),
\end{equation}
and
\begin{equation}
\label{eqn:pf2}
Q^*(gs,ga) = R(gs,ga) + \gamma \sup_{a' \in A} \int_{s' \in S} T(gs,ga,s') Q^*(s',a').
\end{equation}
%Using Lemma~\ref{lemma:1}, 
Since $g \in G$ merely permutes the elements of $S$, we can re-index the integral using $\bar{s}' = gs'$:
\begin{align}
\label{eqn:pf3}
Q^*(gs,ga) & = R(gs,ga) + \gamma \sup_{\bar{a}' \in gA} \int_{\bar{s}' \in gS} T(gs,ga,\bar{s}') Q^*(\bar{s}',\bar{a}') \\
& = R(gs,ga) + \gamma \sup_{a' \in A} \int_{s' \in S} T(gs,ga,gs') Q^*(gs',ga').
\end{align}
Using the Reward Invariance and the Transition Invariance in Definition~\ref{def:Ginv_MDP}, this can be written:
% Using Assumptions~\ref{assumption:goalinv} and~\ref{assumption:transinv}, this can be written:
\begin{equation}
\label{eqn:pf4}
Q^*(gs,ga) = R(s,a) + \gamma \sup_{a' \in A} \int_{s' \in S} T(s,a,s') Q^*(gs',ga').
\end{equation}
Now, define a new function $\bar{Q}$ such that $\forall s,a \in S \times A$, $\bar{Q}(s,a) = Q(gs,ga)$ and substitute into Eq.~\ref{eqn:pf4}, resulting in:
\begin{equation}
\label{eqn:pf5}
\bar{Q}^*(s,a) = R(s,a) + \gamma \sup_{a' \in A} \int_{s' \in S} T(s,a,s') \bar{Q}^*(s',a').
\end{equation}
Notice that Eq.~\ref{eqn:pf5} and Eq.~\ref{eqn:pf1} are the same Bellman equation. Since solutions to the Bellman equation are unique, we have that $\forall s,a \in S \times A$, $Q^*(s,a) = \bar{Q}^*(s,a) = Q^*(gs,ga)$.

(2) $\pi^*(gs)=g\pi^*(s)$: The optimal policy for $\pi^*(s)$ and $\pi^*(gs)$ can be written in terms of the optimal $Q$-function, $Q^*$, as:
\begin{equation}
\label{eqn:pf2_1}
\pi^*(s) = \argmax_{a\in A} Q^*(s, a)
\end{equation}
and
\begin{equation}
\label{eqn:pf2_2}
\pi^*(gs) = \argmax_{\bar{a}\in A} Q^*(gs, \bar{a})
\end{equation}
Using the invariant property of $Q^*$ we can substitute $Q^*(gs, \bar{a})$ with $Q^*(s, g^{-1}\bar{a})$ in Equation~\ref{eqn:pf2_2}:
\begin{equation}
\label{eqn:pf2_3}
\pi ^*(gs) = \argmax_{\bar{a}\in A} Q^*(s, g^{-1}\bar{a})
\end{equation}
Let $\bar{a} = ga$, Equation~\ref{eqn:pf2_3} can be written as:
\begin{equation}
\pi ^*(gs) = g [\argmax_{a\in A} Q^*(s, g^{-1}ga)]
\end{equation}
%Using the Invertibility in Definition~\ref{def:Ginv_MDP} this can be written as:
Cancelling $g^{-1}$ and $g$ and substituting Equation~\ref{eqn:pf2_1} we have,
% \begin{equation}
% \pi ^*(gs) = g [\argmax_{a\in A} Q^*(s, a)]
% \end{equation}
% Using Equation~\ref{eqn:pf2_1} we have:
\begin{equation}
\pi ^*(gs) = g \pi^*(s).
\end{equation}
\end{proof}

% \begin{proof}[Proof of Proposition~\ref{prop:pistar}]
% The optimal policy for $\pi^*(s)$ and $\pi^*(gs)$ can be written in terms of the optimal $Q$-function, $Q^*$, as:
% \begin{equation}
% \label{eqn:pf2_1}
% \pi^*(s) = \argmax_{a\in A} Q^*(s, a)
% \end{equation}
% and
% \begin{equation}
% \label{eqn:pf2_2}
% \pi^*(gs) = \argmax_{\bar{a}\in A} Q^*(gs, \bar{a})
% \end{equation}
% Using Proposition~\ref{prop:qstar} we can substitute $Q^*(gs, \bar{a})$ with $Q^*(s, g^{-1}\bar{a})$ in Equation~\ref{eqn:pf2_2}:
% \begin{equation}
% \label{eqn:pf2_3}
% \pi ^*(gs) = \argmax_{\bar{a}\in A} Q^*(s, g^{-1}\bar{a})
% \end{equation}
% Let $\bar{a} = ga$, Equation~\ref{eqn:pf2_3} can be written as:
% \begin{equation}
% \pi ^*(gs) = g [\argmax_{a\in A} Q^*(s, g^{-1}ga)]
% \end{equation}
% Using Assumption~\ref{assumption:invertability} this can be written as:
% \begin{equation}
% \pi ^*(gs) = g [\argmax_{a\in A} Q^*(s, a)]
% \end{equation}
% Using Equation~\ref{eqn:pf2_1} we have:
% \begin{equation}
% \pi ^*(gs) = g \pi^*(s)
% \end{equation}
% \end{proof}

\section{Equivariance Overconstrain}
\label{appendix:overconstrain}
%% Move to Appendix
\begin{proposition}
Let $f \colon V_{reg} \oplus V_{reg} \to V_{triv}$ be a linear $C_n$-equivariant function.   Then $f(v,w) = a \sum_i v_i + b \sum_i w_i$. 
\end{proposition}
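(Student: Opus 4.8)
The plan is to exploit the fact that the codomain $V_{triv} = \mathbb{R}$ carries the trivial action, so that $C_n$-equivariance of $f$ reduces to plain invariance: $f(\rho_{\reg}(g)v, \rho_{\reg}(g)w) = f(v,w)$ for every $g \in C_n$. Since $f$ is linear with one-dimensional target, I would first represent it by a pair of coefficient vectors, writing $f(v,w) = \langle c, v\rangle + \langle d, w\rangle$ for fixed $c, d \in \mathbb{R}^n$, where $\langle \cdot, \cdot\rangle$ is the standard inner product on $\mathbb{R}^n = V_{reg}$.

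Next, I would decouple the two arguments by setting $w=0$ (resp. $v=0$). The invariance condition then becomes $\langle c, \rho_{\reg}(g) v\rangle = \langle c, v\rangle$ for all $v$ and all $g$, and similarly for $d$. Rewriting the left-hand side as $\langle \rho_{\reg}(g)^{\top} c, v\rangle$ and using that $v$ ranges over all of $\mathbb{R}^n$ yields the condition $\rho_{\reg}(g)^{\top} c = c$ for every $g \in C_n$.

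The key step is then to identify the fixed-point subspace of the regular representation. Because $\rho_{\reg}(g)$ is the permutation matrix realizing a cyclic shift of coordinates, its transpose is the inverse cyclic shift, so $\rho_{\reg}(g)^{\top} c = c$ says precisely that $c$ is unchanged by every cyclic permutation of its entries. A vector fixed by the generating shift must have all coordinates equal, hence $c = a(1,\dots,1)$ for some scalar $a$, and likewise $d = b(1,\dots,1)$. Substituting back gives $f(v,w) = a\sum_i v_i + b\sum_i w_i$, as claimed.

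The main obstacle, though elementary, is this identification of the one-dimensional fixed subspace; everything else is formal. This is exactly the content of Schur's Lemma in this setting, and one could alternatively argue representation-theoretically: the trivial representation occurs with multiplicity one in $V_{reg}$, so $\mathrm{Hom}_{C_n}(V_{reg}, V_{triv})$ is one-dimensional, spanned by the summation functional $v \mapsto \sum_i v_i$, and the result follows by applying this to each summand of $V_{reg} \oplus V_{reg}$ separately.
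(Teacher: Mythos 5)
Your proof is correct, and it takes a genuinely more elementary route than the paper's. The paper argues representation-theoretically: by Weyl/Maschke decomposition the regular representation contains the trivial representation with multiplicity one, and by Schur's lemma any equivariant linear map $V_{reg}\oplus V_{reg}\to V_{triv}$ must factor through the two trivial summands, whose projections are $v\mapsto a\sum_i v_i$; linearity then gives the claim. You instead do a direct fixed-point computation: writing $f(v,w)=\langle c,v\rangle+\langle d,w\rangle$, decoupling the two arguments by setting $w=0$ (valid since $g\cdot 0=0$ for a linear action), and converting invariance into $\rho_{\reg}(g)^{\top}c=c$, you reduce everything to the observation that the fixed subspace of the cyclic-shift permutation action is spanned by $(1,\dots,1)$ --- immediate from transitivity of the coordinate permutations. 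What your route buys: it is self-contained, exhibits the invariant functional explicitly, and works over $\mathbb{R}$ without invoking Maschke or Schur's lemma (the latter requires some care over $\mathbb{R}$, where irreducible summands of the real regular representation of $C_n$ can be two-dimensional with endomorphism algebra $\mathbb{C}$ --- a subtlety the paper's one-line invocation glosses over). What the paper's route buys: instant generality --- for any finite group $G$ and any representation $V$, the space of equivariant maps $V\to V_{triv}$ has dimension equal to the multiplicity of the trivial representation in $V$ --- whereas your computation as written is tailored to the regular representation of $C_n$, though it extends verbatim to any transitive permutation representation. One small correction: your closing remark that the fixed-subspace identification ``is exactly the content of Schur's Lemma'' is slightly off; identifying the invariants of $V_{reg}$ is a multiplicity computation, and Schur's lemma is the separate ingredient the paper uses to convert that multiplicity into the one-dimensionality of the space of equivariant maps --- your direct argument never actually needs it.
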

\begin{proof}
By Weyl decomposibility~\citep{weyl}, $V_{reg}$ decomposes into irreducible representations for $C_n$ each with multiplicity determined by its dimension.  Among these is the trivial representation with multiplicity 1.  By Schur's lemma~\citep{schur}, the mapping $V_{reg} \oplus V_{reg} \to V_{triv}$ must factor through the trivial representation embedded in $V_{reg}$.  The projection onto the trivial representation is given $v \mapsto a\sum_i v_i$.  The result follows by linearity. 
\end{proof}

As a corollary, we find that $C_n$-equivariant maps $V_{reg} \oplus V_{reg} \to V_{triv}$ are actually $C_n \times C_n$-equivariant.  Let $(g_1,g_2) \in C_n \times C_n$, then applying the Proposition $f(g_1 v, g_2 w) = a \sum_i (gv)_i + b \sum_i (gw)_i = a \sum_i v_i + b \sum_i w_i = f(v,w)$.

\section{Environment Details}
\label{appendix:env_detail}
\begin{wrapfigure}[13]{r}{0.3\textwidth}
\centering
    \includegraphics[width=0.3\textwidth]{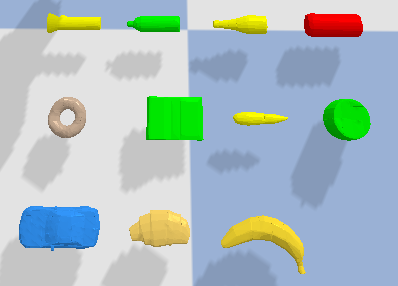}
    \caption{The object set for Object Picking environment}
    \label{fig:hp_objects}
\end{wrapfigure}

% CAN WE GET MORE DETAIL IN HERE ABOUT THE ENVIRONMENTS? WHAT IS THE ENVIRONMENT STARTING STATE? IS IT RANDOMLY INITIALIZED OR THE SAME EVERY TIME? SPARSE REWARDS OR DENSE? WHEN DOES THE ROBOT GET REWARD IN EACH TASK? IN THE SLIDING TASK, DOES THE ROBOTIC GRIPPER HAVE ANY COMPLIANCE? 

In all environments, the environment reset is conduced by randomly initializing the objects with random positions and orientations inside the workspace. The arm is always initialized at the same configuration. The workspace has a size of $0.4m\times 0.4m\times 0.24m$. All environments have a sparse reward, i.e., the agent acquires a +1 reward when reaching the goal state, and 0 otherwise. In the PyBullet simulator, the robot joints have enough compliance to allow the gripper to apply force on the block in the Corner Picking task.

We augment the state image with an additional binary channel (i.e., either all pixels are 1 or all pixels are 0) indicating if the gripper is holding an object. Note that this additional channel is invariant to rotations (because all pixels have the same value) so it won't break the proposed equivariant properties.

The Block Pulling requires the robot to pull one block to make contact with the other block. The Object Picking requires the robot the pick up an object randomly sampled from a set of 11 objects (Figure~\ref{fig:hp_objects}). The Drawer Opening requires the robot to pull open a drawer. The Block Stacking requires the robot to stack one block on top of another. The House Building requires the robot to stack a triangle roof on top of a block. The Corner Picking requires the robot to slide the block from the corner and then pick it up.

\section{Network Architecture}
\begin{figure}[t]
\centering
\subfloat[Equivariant DQN Network Architecture]{\includegraphics[width=0.6\linewidth]{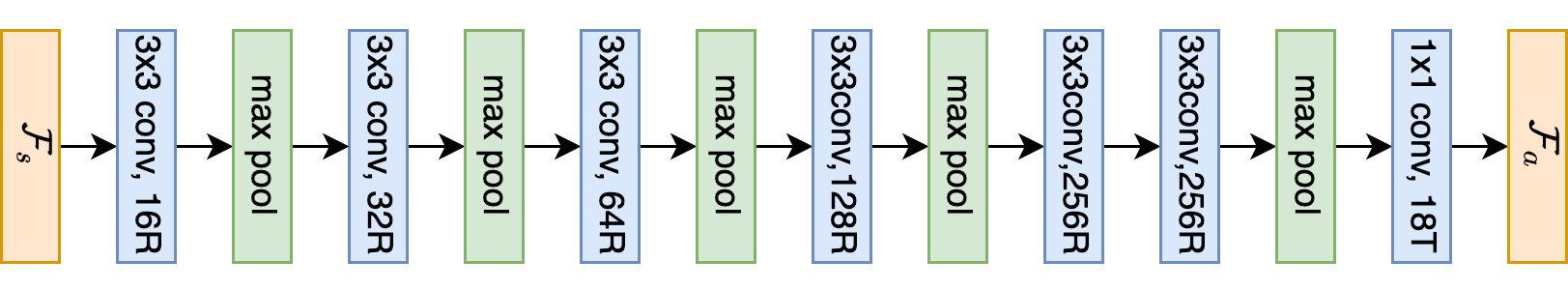}\label{fig:network_dqn_equi}}\\
\subfloat[Equivariant SAC Network Architecture]{\includegraphics[width=0.85\linewidth]{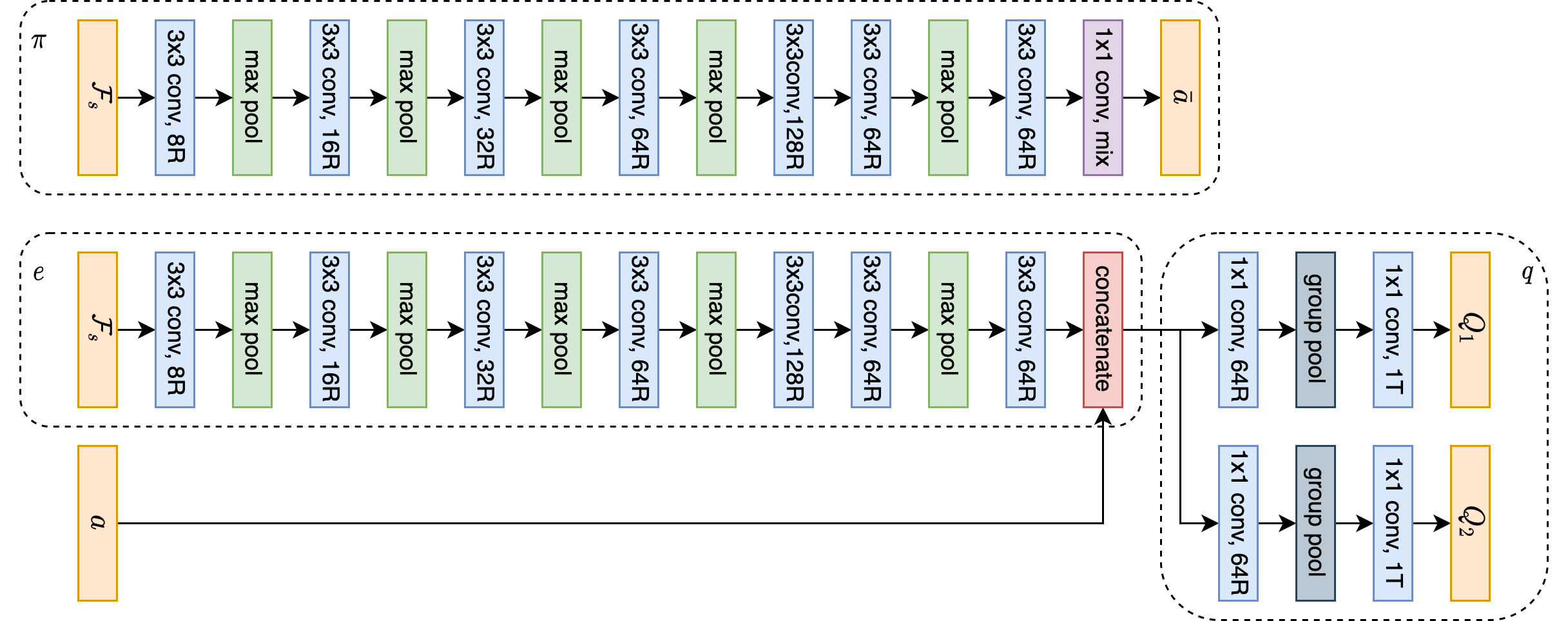}\label{fig:network_sac_equi}
}

\caption{The architecture of the Equivariant DQN (a) and the Equivariant SAC (b). ReLU nonlinearity is omitted in the figure. A convolutional layer with a suffix of R indicates a regular representation layer (e.g., 16R is a 16-channel regular representation layer); a convolution layer with a suffix of T indicates a trivial representation layer (e.g., 1T is a 1-channel trivial representation layer).}
\end{figure}

Our equivariant models are implemented using the E2CNN~\citep{e2cnn} library with PyTorch~\citep{pytorch}.
\subsection{Equivariant DQN Architecture}
\label{appendix:network_equi_dqn}
In the Equivariant DQN, we use a 7-layer Steerable CNN defined in the group $C_4$ (Figure~\ref{fig:network_dqn_equi}). The input $\mathcal{F}_{s}$ is encoded as a 2-channel $\rho_0$ feature map, and the output is a 18-channel $3\times 3$ $\rho_0$ feature map where the channel encodes the invariant actions $\mathcal{A}_\inv$ and the spatial dimension encodes $\mathcal{A}_{xy}$.

\subsection{Equivariant SAC Architecture}
\label{appendix:network_equi_sac}
In the Equivariant SAC, there are two separate networks, both are Steerable CNN defined in the group $C_8$. The actor $\pi$ (Figure~\ref{fig:network_sac_equi} top) is an 8-layer network that takes in a 2-channel $\rho_0$ feature map ($\mathcal{F}_{s}$) and outputs a mixed representation type $1\times 1$ feature map ($\bar{a}$) consisting of 1 $\rho_1$ feature for $a_{xy}$ and 8 $\rho_0$ features for $a_{\inv}$ and $a_\sigma$. The critic (Figure~\ref{fig:network_sac_equi} bottom) is a 9-layer network that takes in both $\mathcal{F}_{s}$ as a 2-channel $\rho_0$ feature map and $a$ as a $1\times 1$ mixed representation feature map consisting of 1 $\rho_1$ feature for $a_{xy}$ and 3 $\rho_0$ for $a_{\inv}$. The upper path $e$ encodes $\mathcal{F}_{s}$ into a 64-channel regular representation feature map $\bar{s}$ with $1\times1$ spatial dimensions, then concatenates it with $a$. Two separate $Q$-value paths $q$ take in the concatenated feature map and generate two $Q$-estimates in the form of $1\times 1$ $\rho_0$ feature. The non-linear maxpool layer is used for transforming regular representations into trivial representations to prevent the equivariant overconstraint (Section~\ref{sec:equi_sac}). Note that there are two $Q$ outputs based on the requirement of the SAC algorithm. 

\section{Baseline Details}
\label{appendix:baseline}

\begin{figure}[t]
\centering
\subfloat[Baseline DQN Network Architecture]{\includegraphics[width=0.6\linewidth]{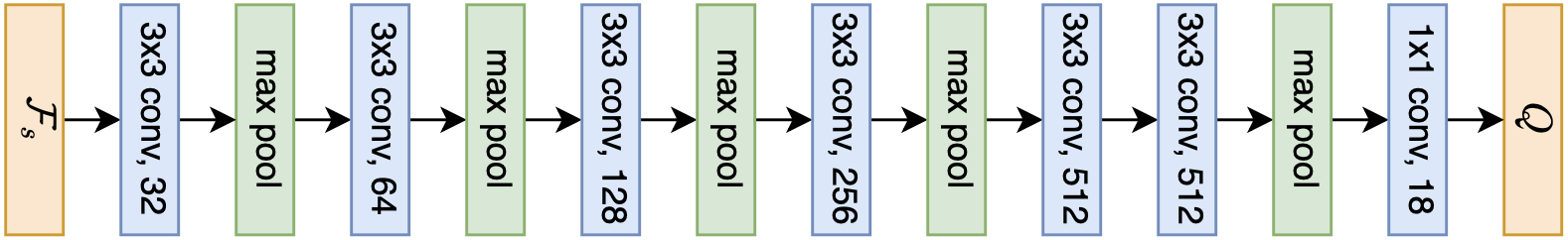}\label{fig:network_dqn_cnn}}\\
\subfloat[Baseline SAC Network Architecture]{\includegraphics[width=0.8\linewidth]{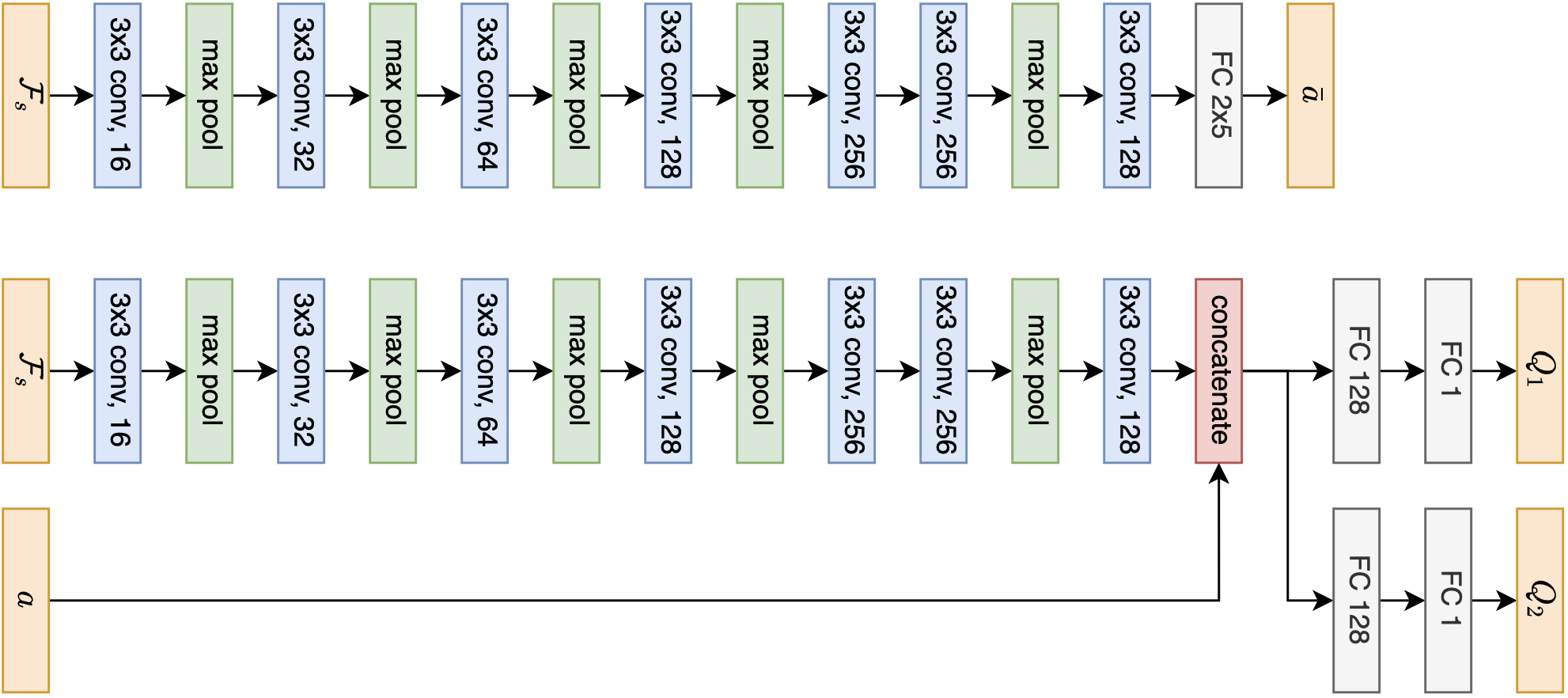}\label{fig:network_sac_cnn}
}
\caption{\edit{The architecture of the baseline conventional CNN DQN (a) and the baseline conventional CNN SAC (b). The baseline CNN architectures have similar amount of trainable parameters as the equivariant architectures. Specifically, Equivariant DQN has 2.6M parameters, and baseline DQN has 3.9M parameters; Equivariant SAC has 2.3M parameters, and baseline SAC has 2.6M parameters. ReLU nonlinearity is omitted in the figure.}}
\label{fig:network_baseline}
\end{figure}

Figure~\ref{fig:network_baseline} shows the baseline network architectures for DQN and SAC. The RAD~\citep{rad} Crop baselines, CURL~\citep{curl} baselines, and FERM~\citep{ferm} baselines use random crop for data augmentation. The random crop crops a $142\times 142$ state image to the size of $128\times 128$. The contrastive encoder of CURL baselines has a size of 128 as in~\cite{curl}, and that for the FERM baselines has a size of 50 as in~\cite{ferm}. The FERM baseline's contrastive encoder is pretrained for 1.6k steps using the expert data as in~\cite{ferm}. The DrQ~\citep{drq} Shift baselines use random shift of $\pm 4$ pixels for data augmentation as in the original work. In all DrQ baselines, the number of augmentations for calculating the target $K$ and the number of augmentations for calculating the loss $M$ are both 2 as in~\cite{drq}. 

\section{Training Details}
We implement our experimental environments in the PyBullet simulator~\citep{pybullet}. The workspace's size is $0.4m\times0.4m\times 0.24m$. The pixel size of the visual state $I$ is $128\times 128$ (except for the RAD Crop baselines, CURL baselines, and FERM baselines, where $I$'s size is $142\times 142$ and will be cropped to $128\times 128$). $I$'s FOV is $0.6m\times 0.6m$. During training, we use 5 parallel environments. We implement all training in PyTorch~\citep{pytorch}. Both DQN and SAC use soft target update with $\tau=10^{-2}$.

In the DQN experiments, we use the Adam optimizer~\citep{adam} with a learning rate of $10^{-4}$. We use Huber loss~\citep{huberloss} for calculating the TD loss. We use a discount factor $\gamma=0.95$. The batch size is 32. The buffer has a capacity of 100,000 transitions. 

In the SAC (and SACfD) experiments, we use the Adam optimizer with a learning rate of $10^{-3}$. The entropy temperature $\alpha$ is initialized at $10^{-2}$. The target entropy is -5. The discount factor $\gamma=0.99$. The batch size is 64. The buffer has a capacity of 100,000 transitions. SACfD uses the prioritized replay buffer~\citep{per} with prioritized replay exponent of 0.6 and prioritized importance sampling exponent $\beta_0 = 0.4$ as in~\cite{per}. The expert transitions are given a priority bonus of $\epsilon_d=1$.

\section{Additional Experimental Results for Equivariant SACfD}
\label{appendix:app_exp_equi_sacfd}
\begin{figure}[t]
\centering
% \subfloat[Block Pulling]{\includegraphics[width=0.25\linewidth]{img/exp/sacfd_ori_aug/sacfd_ori_aug_bpull.png}}
% \subfloat[Object Picking]{\includegraphics[width=0.25\linewidth]{img/exp/sacfd_ori_aug/sacfd_ori_aug_hp.png}}
% \subfloat[Block Pulling]{\includegraphics[width=0.25\linewidth]{img/exp/sacfd_rot_aug/sacfd_rot_aug_bpull.png}}
% \subfloat[Object Picking]{\includegraphics[width=0.25\linewidth]{img/exp/sacfd_rot_aug/sacfd_rot_aug_hp.png}}

\subfloat[Block Pulling]{\includegraphics[width=0.25\linewidth]{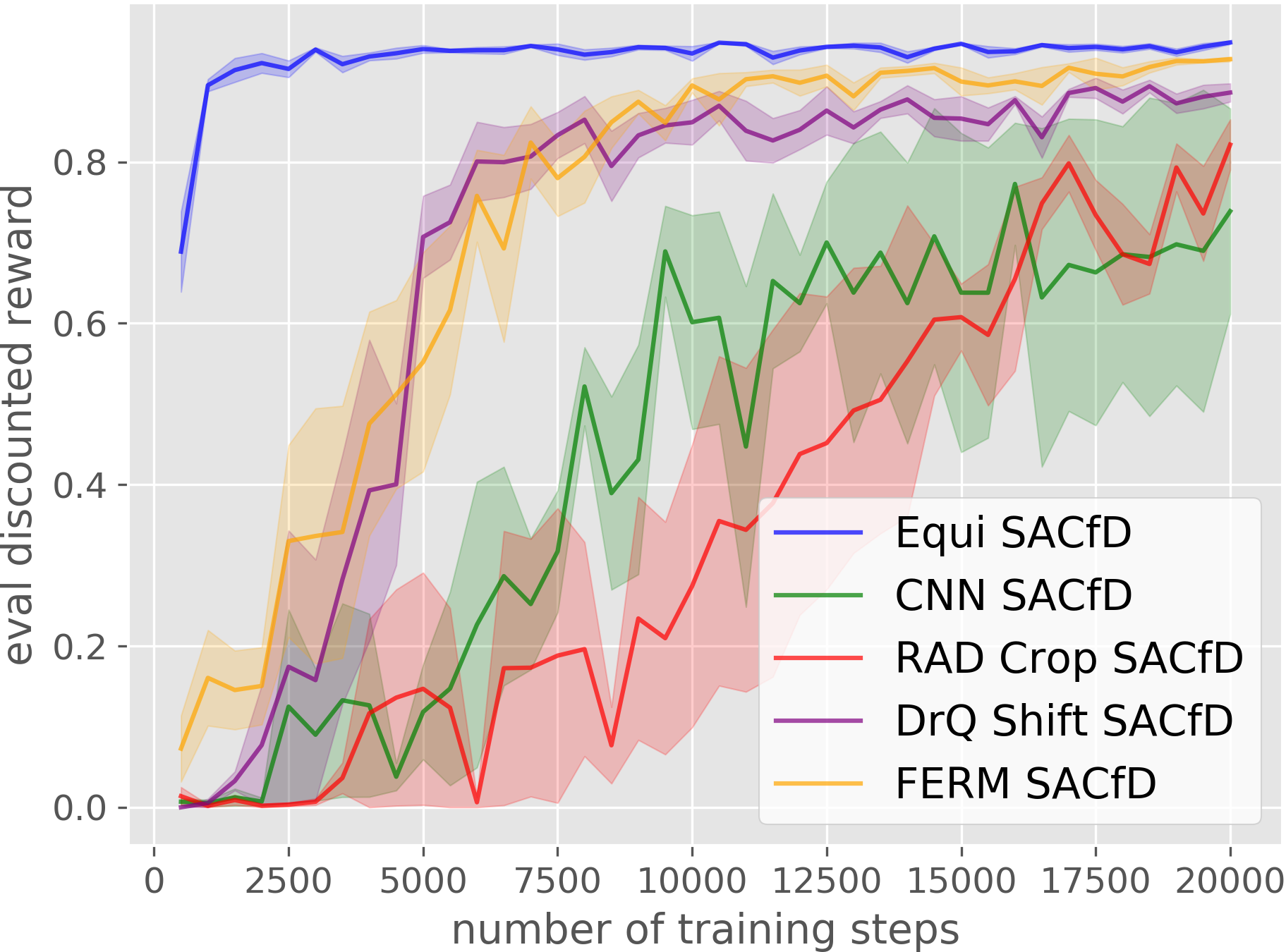}}
\subfloat[Object Picking]{\includegraphics[width=0.25\linewidth]{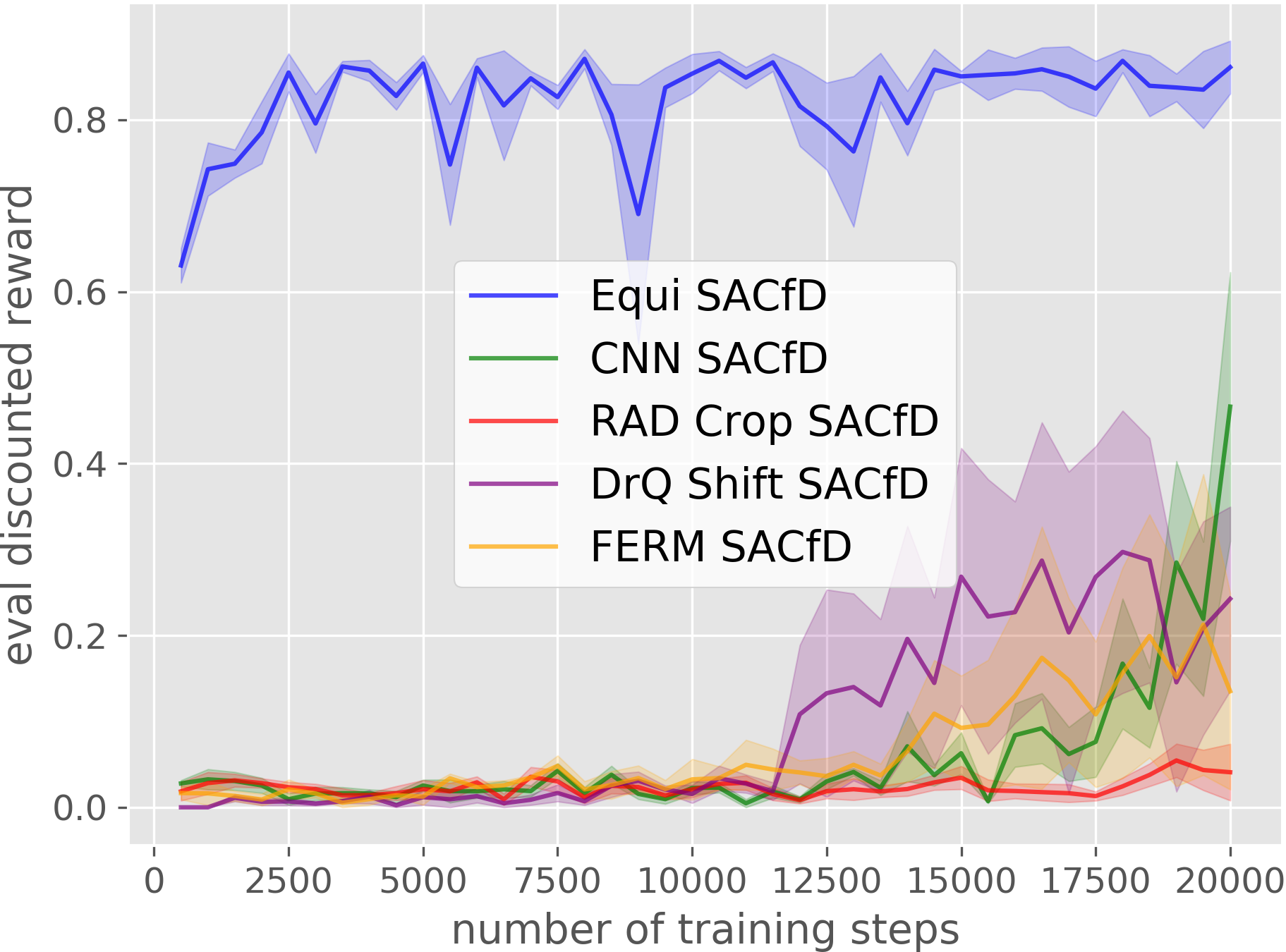}}
\subfloat[Block Pulling]{\includegraphics[width=0.25\linewidth]{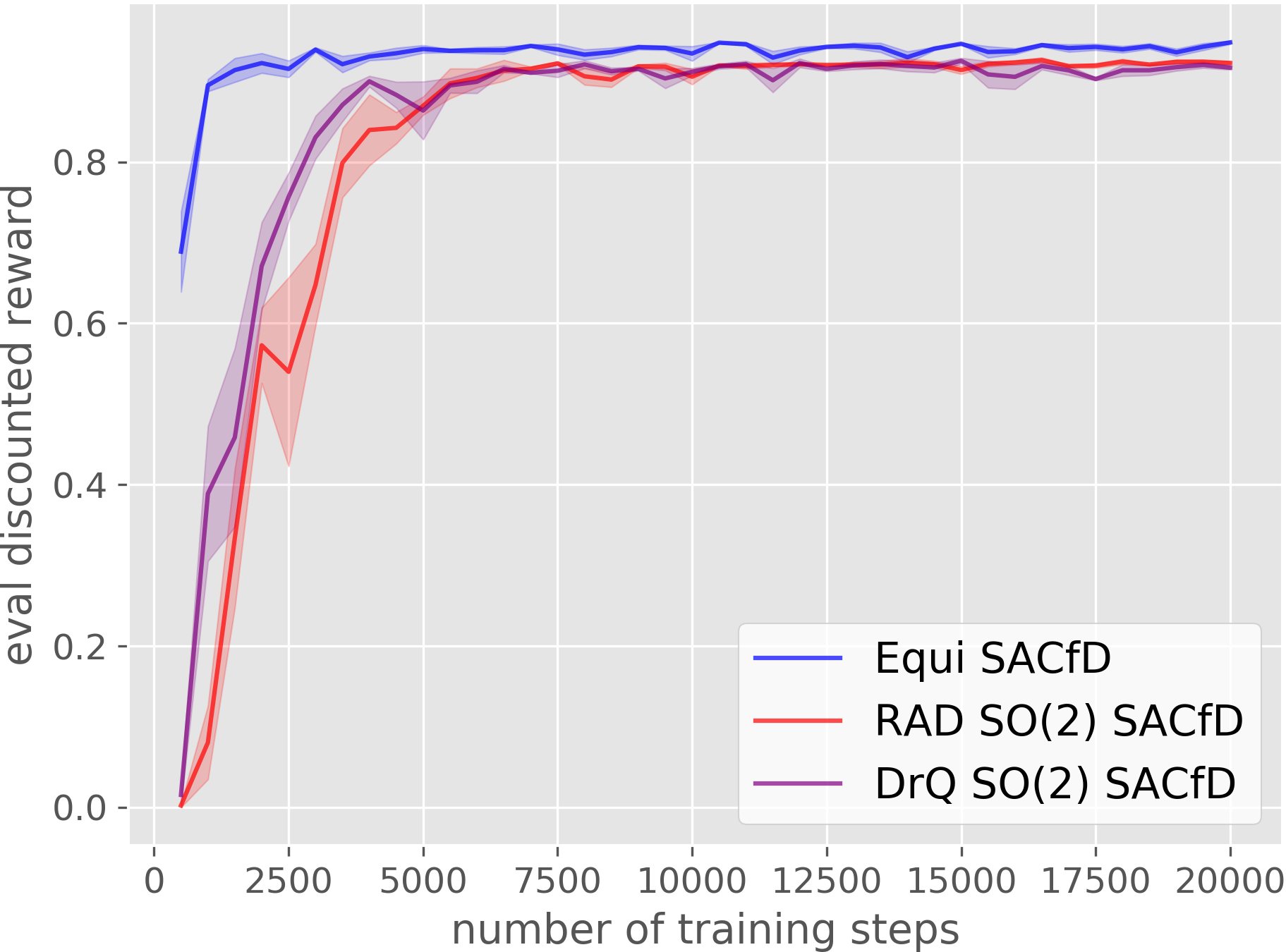}}
\subfloat[Object Picking]{\includegraphics[width=0.25\linewidth]{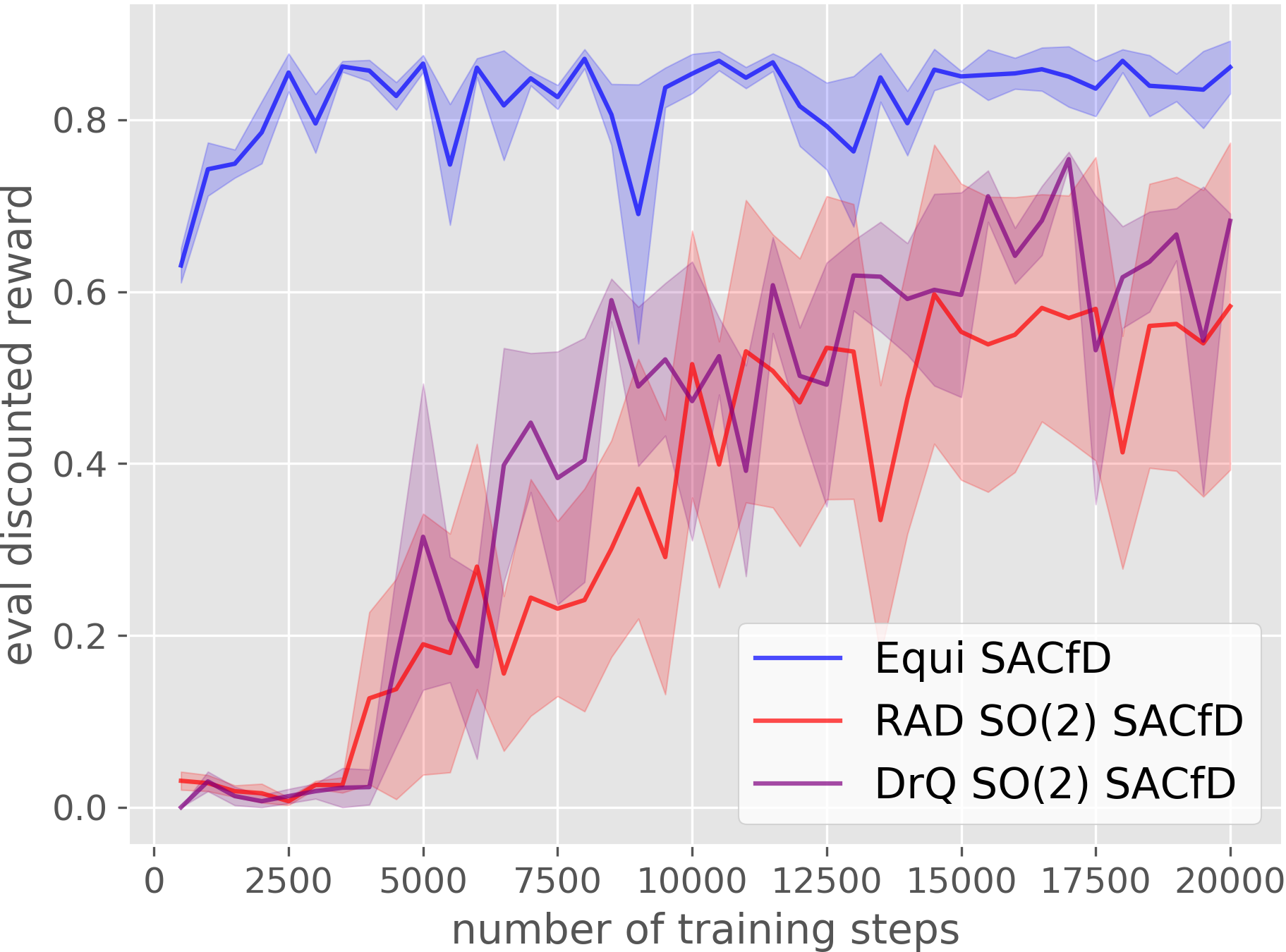}}
\caption{(a)-(b): additional results for Section~\ref{sec:exp_equi_sacfd}. (c)-(d): additional results for Section~\ref{sec:equi_vs_soft_equi}. The plots show the evaluation performance of the greedy policy in terms of the discounted reward. The evaluation is performed every 500 training steps. Results are averaged over four runs. Shading denotes standard error.}
\label{fig:app_exp_equi_sacfd}
\end{figure}

Figure~\ref{fig:app_exp_equi_sacfd} (a)-(b) shows the results for the experiment of Section~\ref{sec:exp_equi_sacfd} in Block Pulling and Object Picking environments. The Equivariant SACfD outperforms all baselines in those two environments.

Figure~\ref{fig:app_exp_equi_sacfd} (c)-(d) shows the results for the experiment of Section~\ref{sec:equi_vs_soft_equi} in block Pulling and Object Picking environments. Similarly as the results in Figure~\ref{fig:exp_equi_sacfd_rot_aug}, our Equivariant SACfD outperforms both RAD and DrQ equipped with $\SO(2)$ dat augmentation.

% \begin{figure}[t]
% \centering
% \subfloat[Block Pulling]{\includegraphics[width=0.25\linewidth]{img/exp/sacfd_rot_aug/sacfd_rot_aug_bpull.png}}
% \subfloat[Object Picking]{\includegraphics[width=0.25\linewidth]{img/exp/sacfd_rot_aug/sacfd_rot_aug_hp.png}}
% \caption{Additional results for Section~\ref{sec:equi_vs_soft_equi}. The plots show the evaluation performance of the greedy policy in terms of the discounted reward. The evaluation is performed every 500 training steps. Results are averaged over four runs. Shading denotes standard error.}
% \label{fig:app_exp_equi_sacfd_rot_aug}
% \end{figure}

\section{Ablation Studies}
\subsection{Using Equivariant Network Only in Actor or Critic}
\begin{figure}[t]
\centering
\subfloat[Block Pulling]{\includegraphics[width=0.25\linewidth]{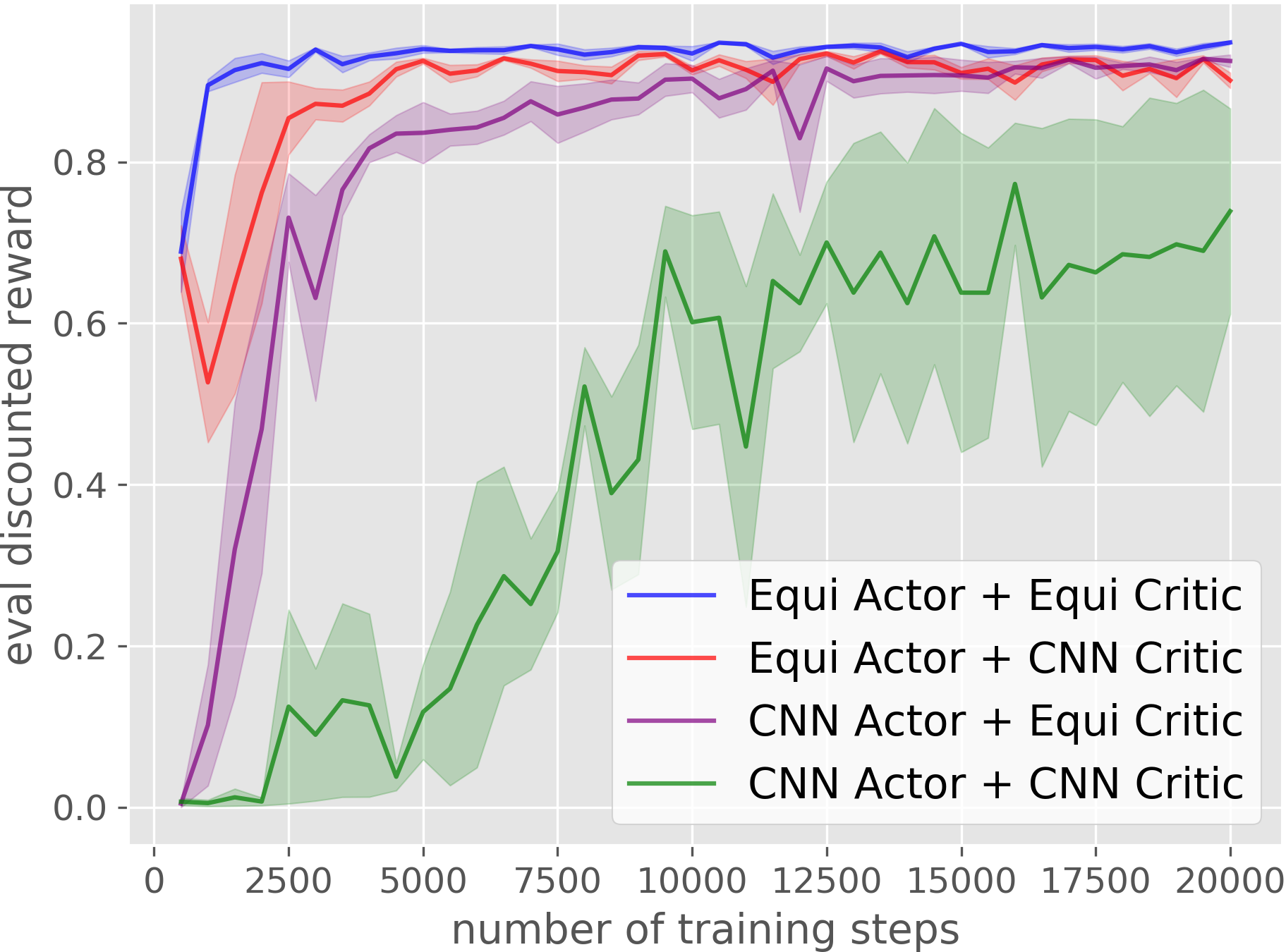}}
\subfloat[Object Picking]{\includegraphics[width=0.25\linewidth]{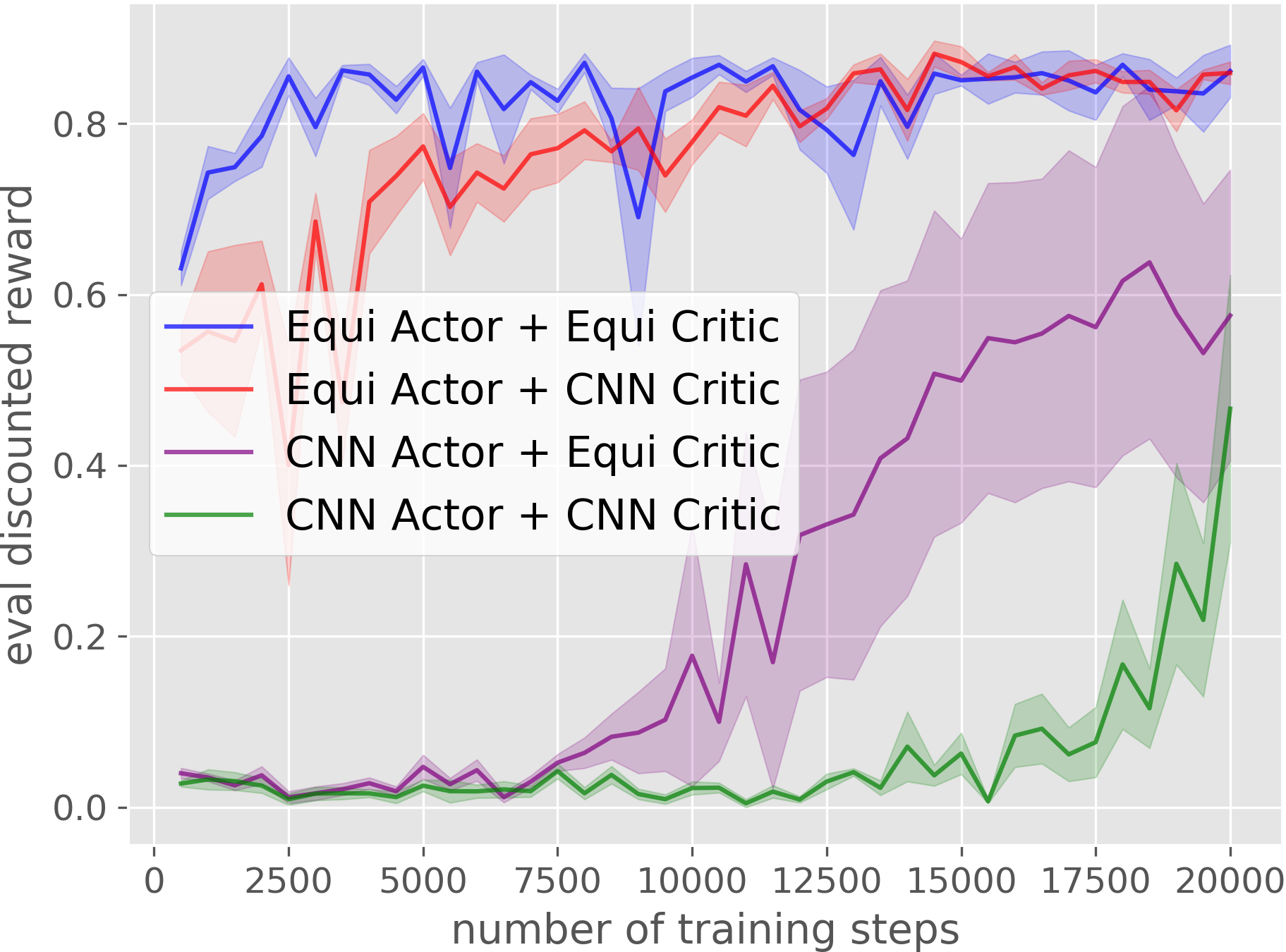}}
\subfloat[Drawer Opening]{\includegraphics[width=0.25\linewidth]{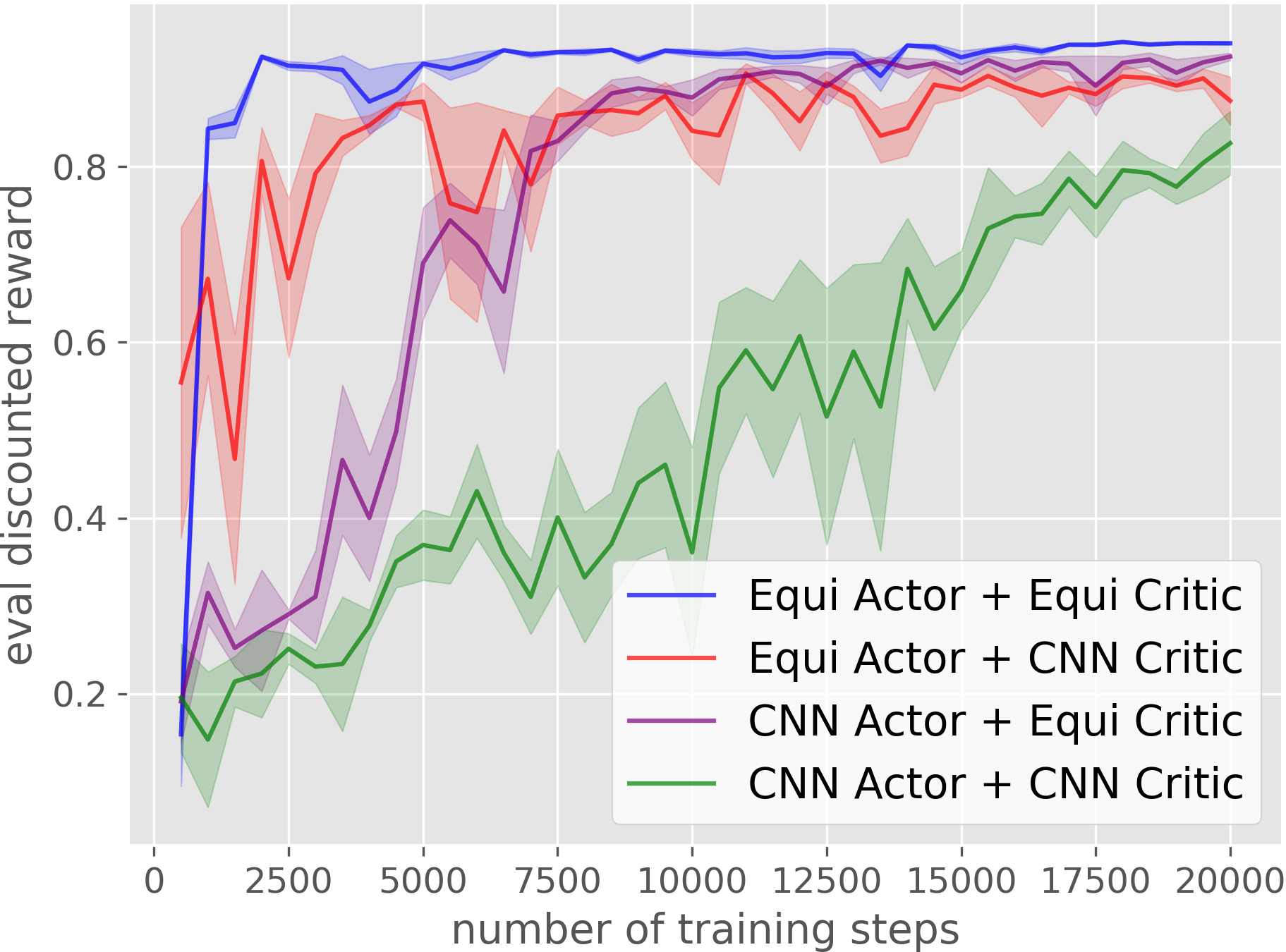}}\\
\subfloat[Block Stacking]{\includegraphics[width=0.25\linewidth]{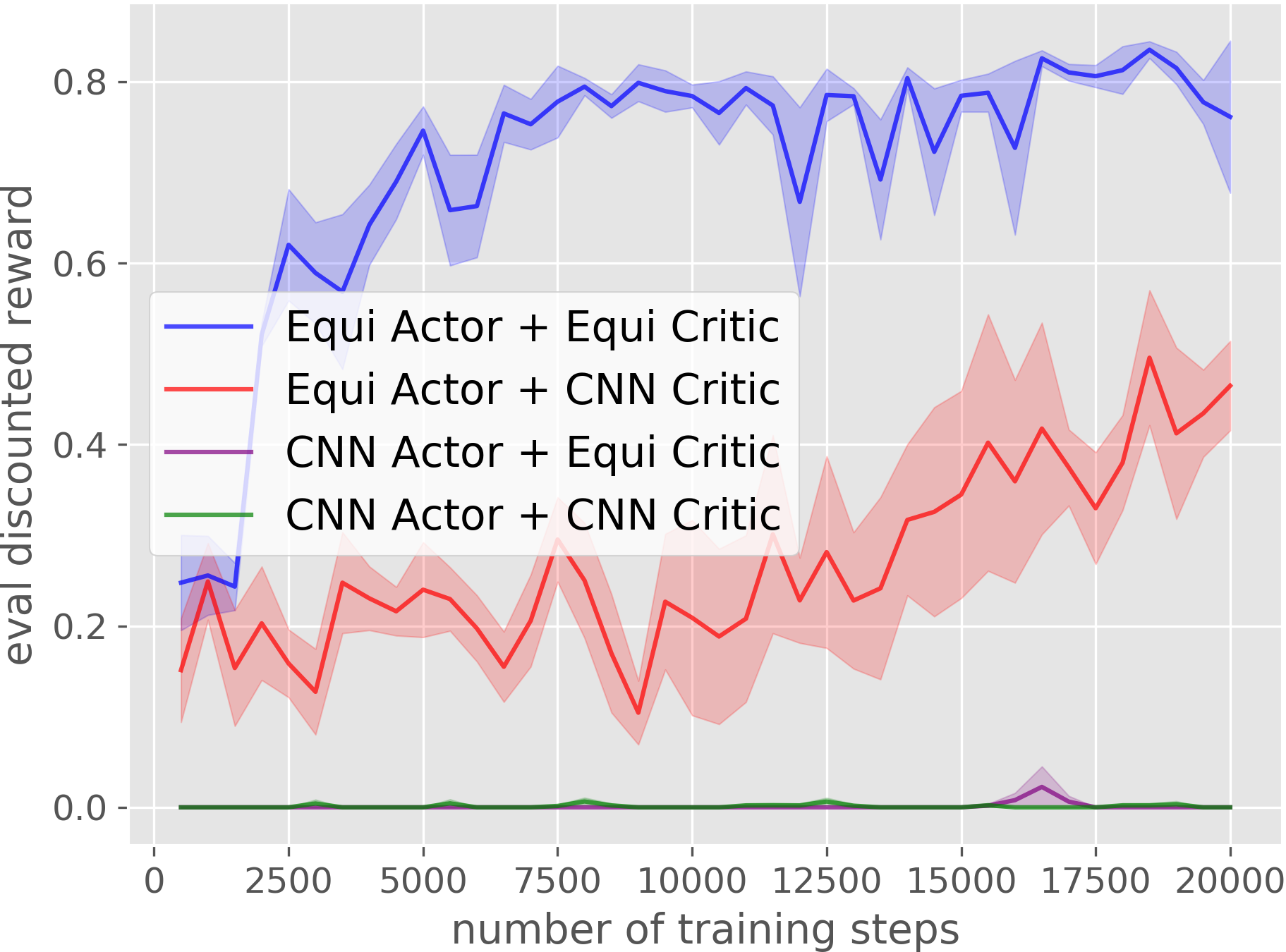}}
\subfloat[House Building]{\includegraphics[width=0.25\linewidth]{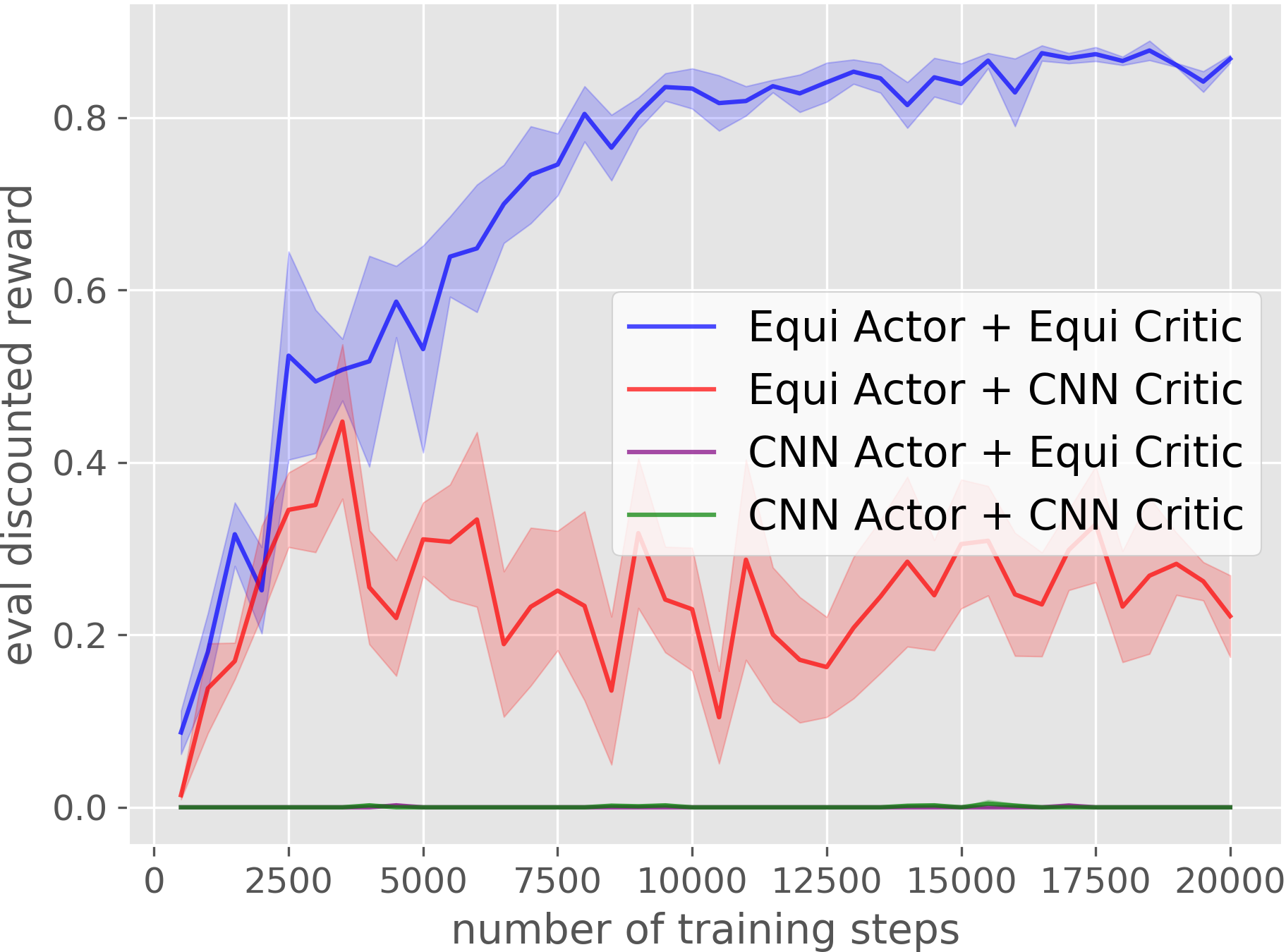}}
\subfloat[Corner Picking]{\includegraphics[width=0.25\linewidth]{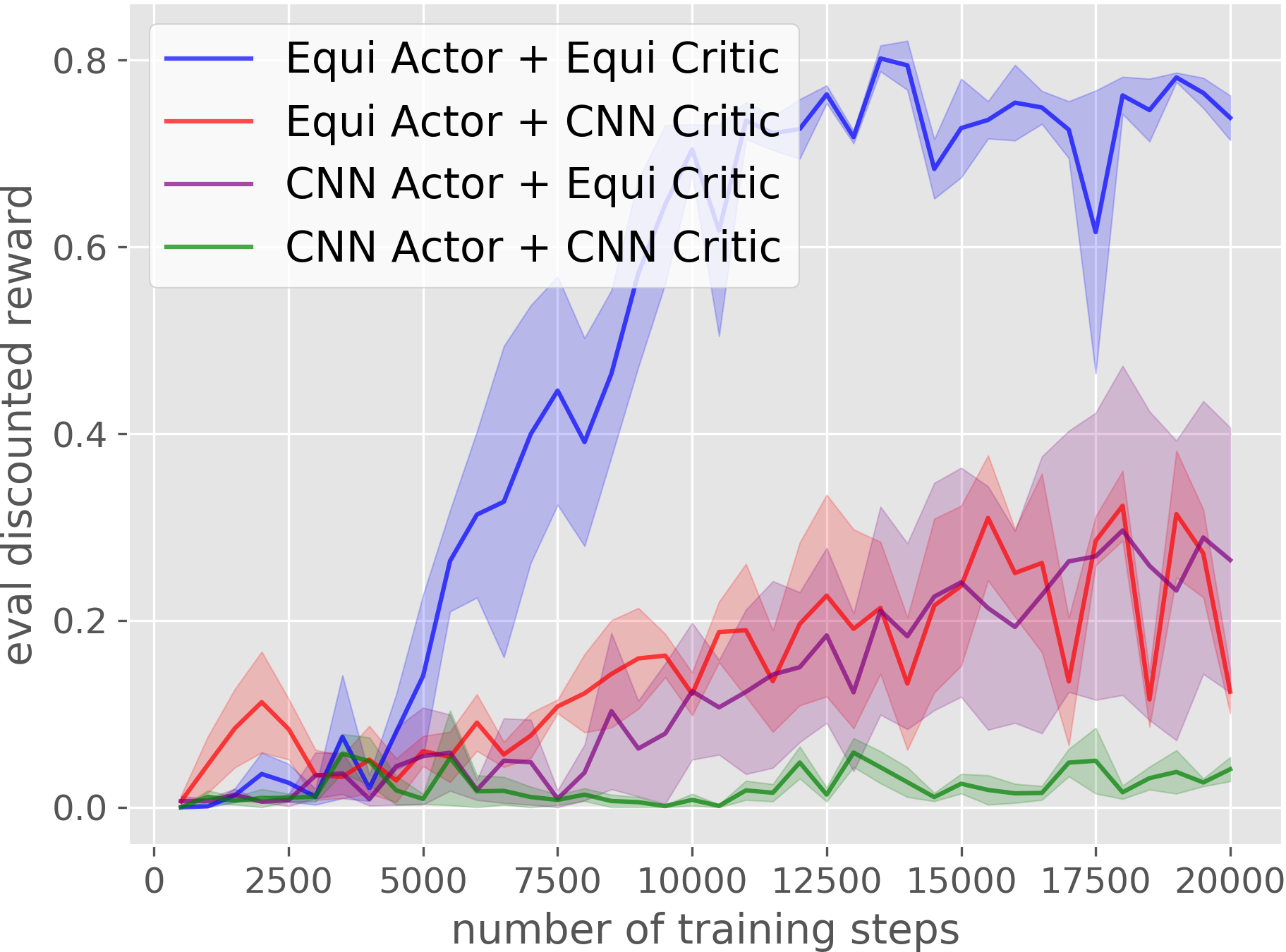}}
\caption{Ablation of using equivariant network solely in actor network or critic network. The plots show the evaluation performance in terms of discounted reward during training. The evaluation is performed every 500 training steps. Results are averaged over four runs. Shading denotes standard error.}
\label{fig:exp_equi_sacfd_equi_cnn}
\end{figure}

\begin{figure}[t]
\centering
\subfloat[Block Pulling]{\includegraphics[width=0.25\linewidth]{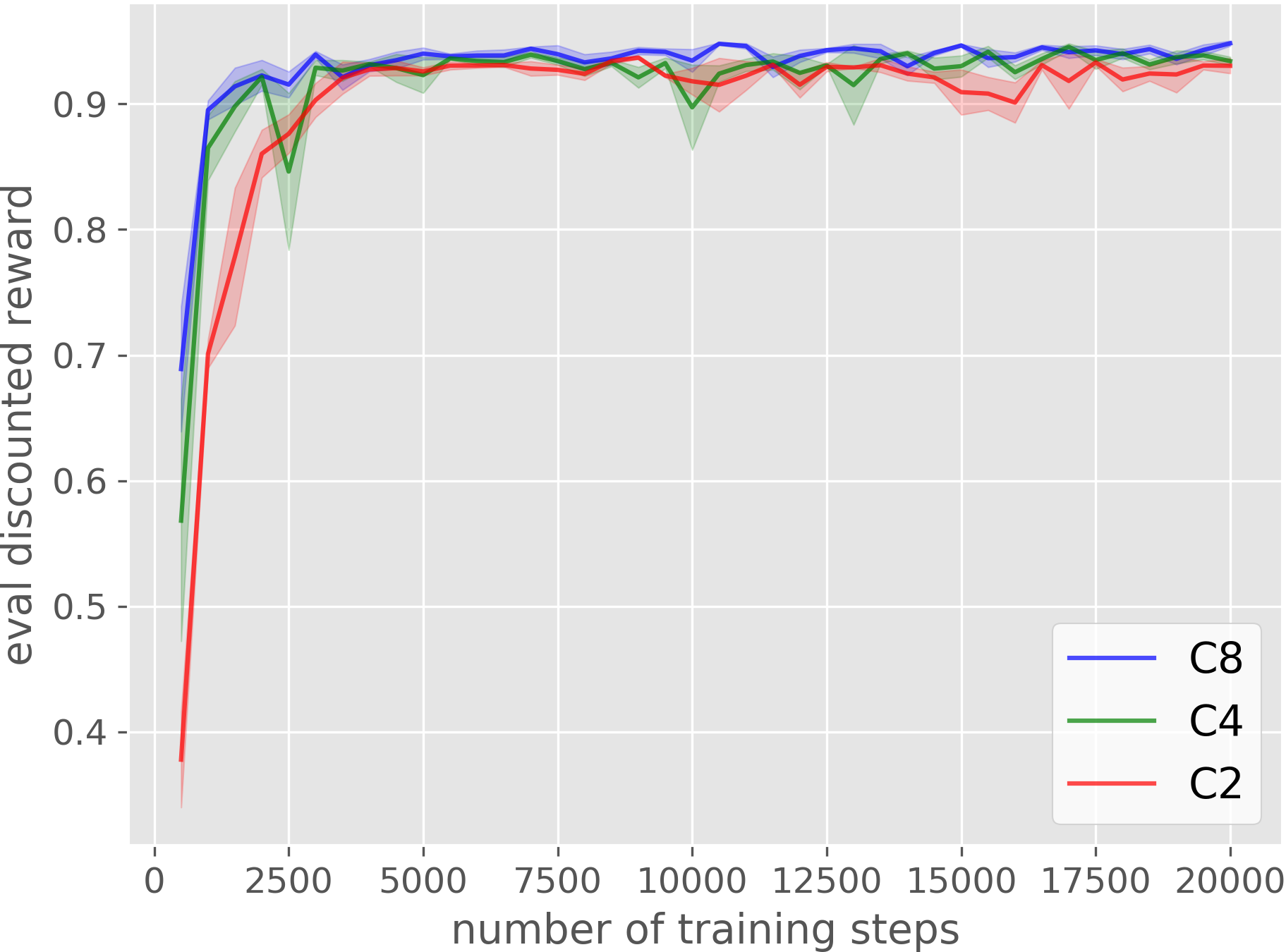}}
\subfloat[Object Picking]{\includegraphics[width=0.25\linewidth]{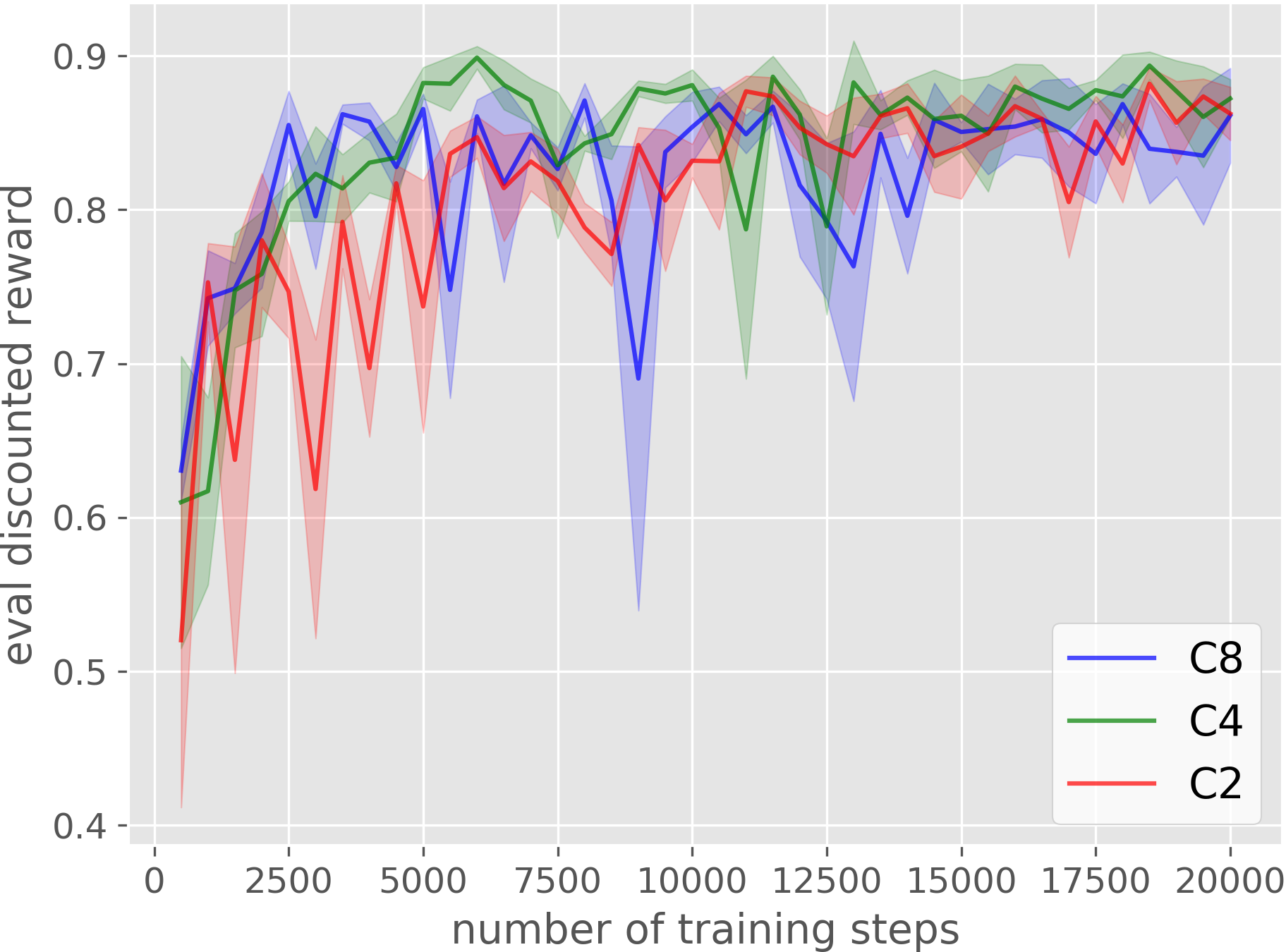}}
\subfloat[Drawer Opening]{\includegraphics[width=0.25\linewidth]{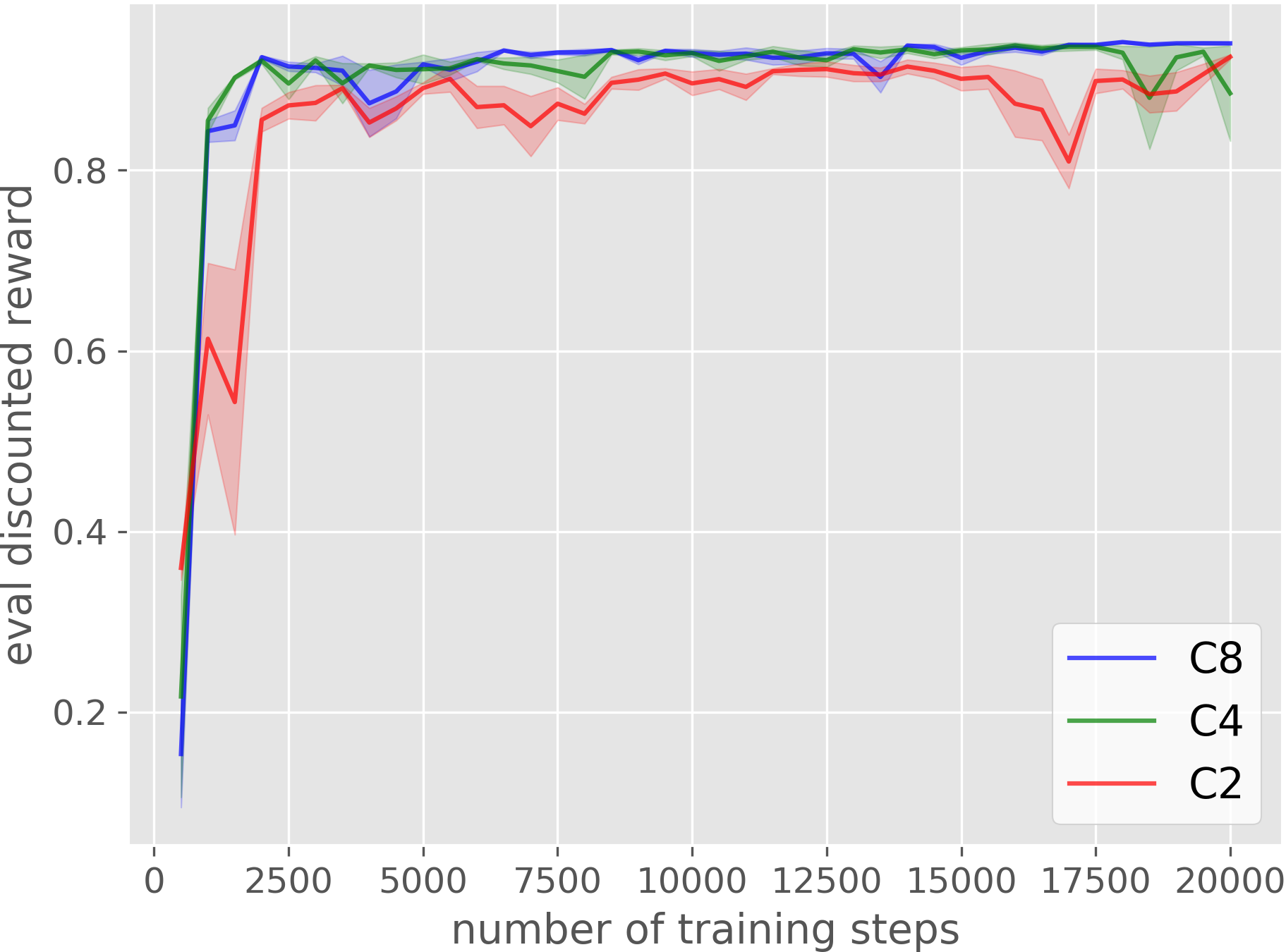}}\\
\subfloat[Block Stacking]{\includegraphics[width=0.25\linewidth]{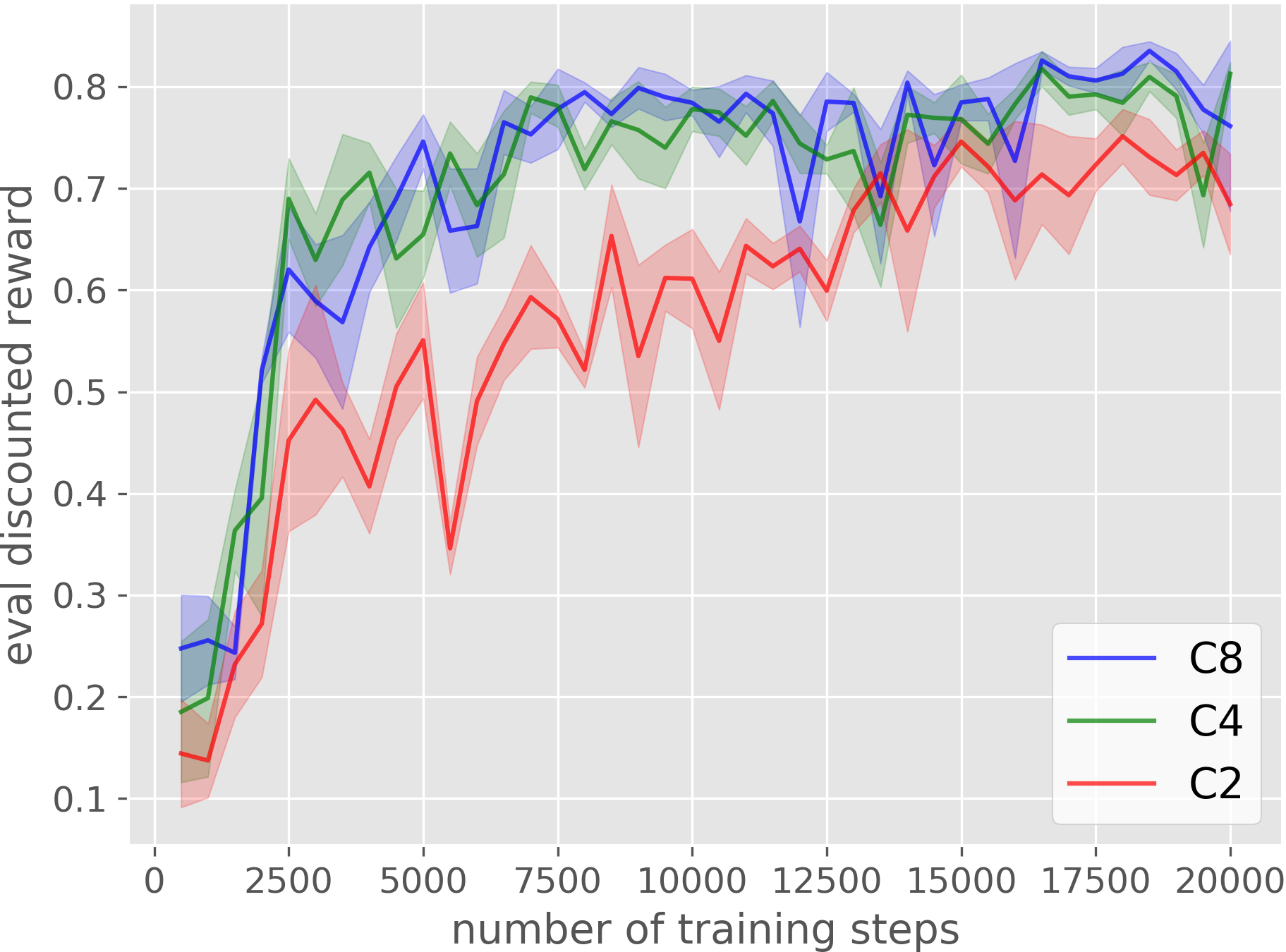}}
\subfloat[House Building]{\includegraphics[width=0.25\linewidth]{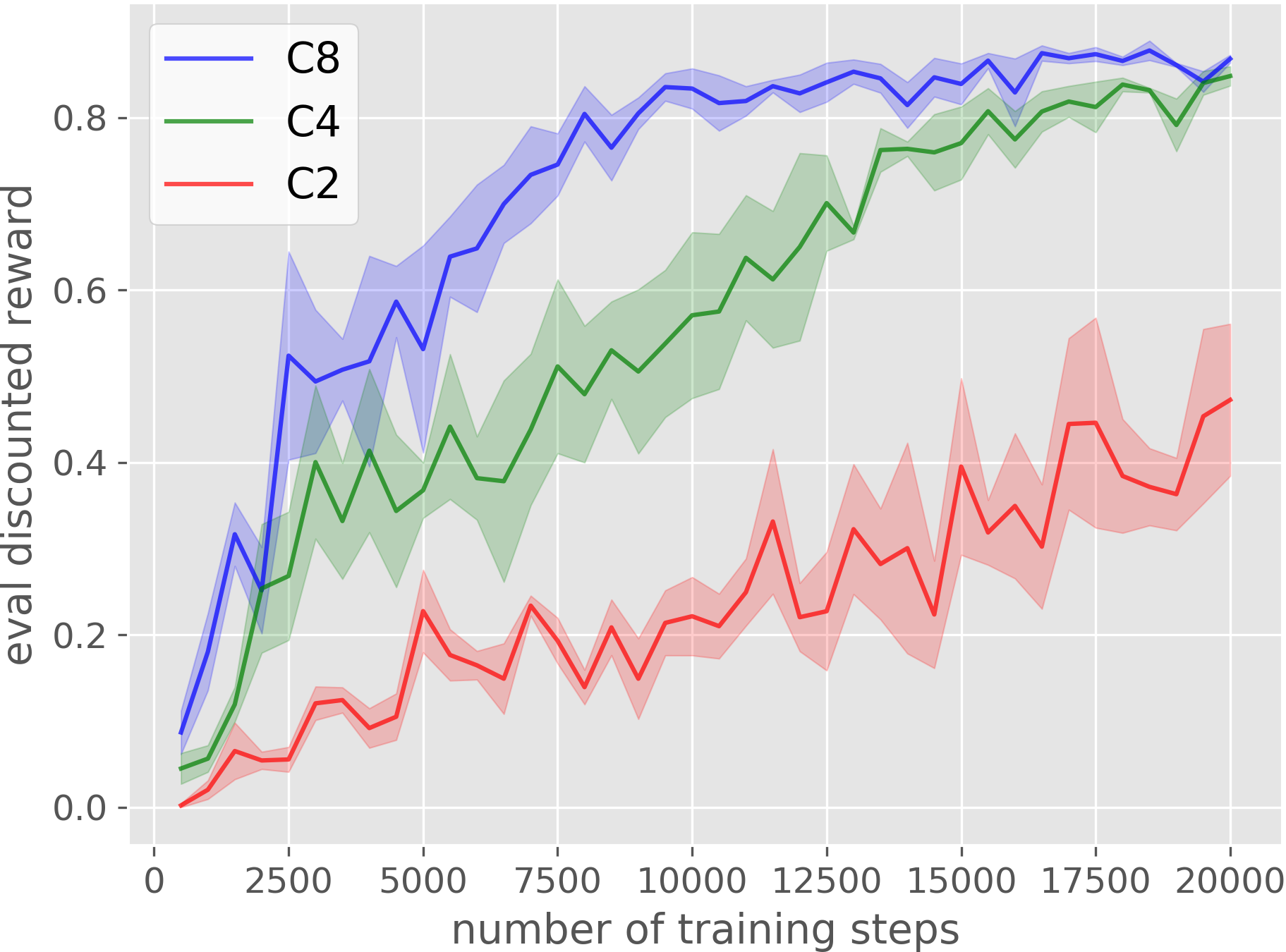}}
\subfloat[Corner Picking]{\includegraphics[width=0.25\linewidth]{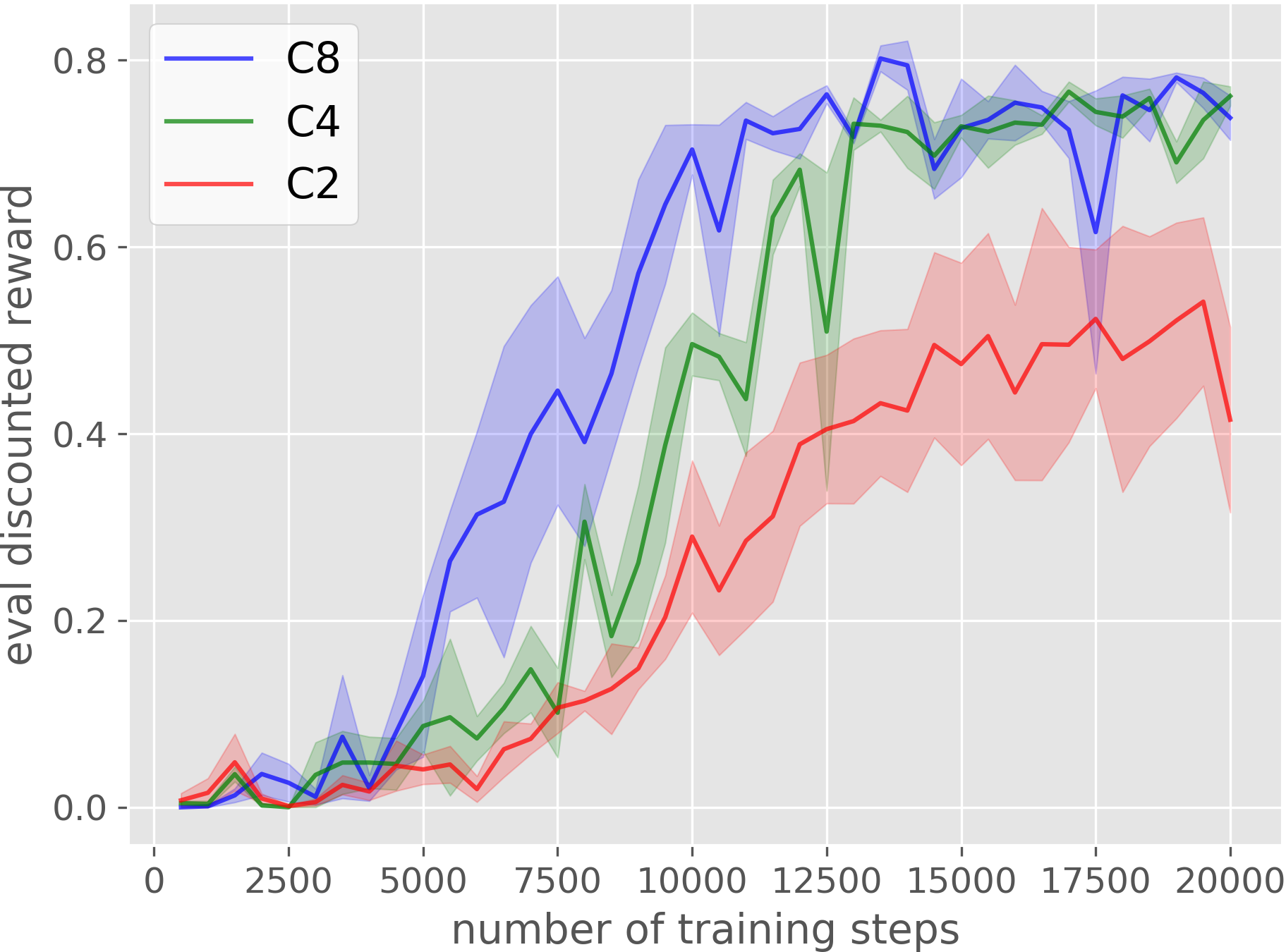}}
\caption{\edit{Ablation of using different symmetry groups ($C_8$, $C_4$, or $C_2$), in Equivariant SACfD. The plots show the evaluation performance of the greedy policy in terms of the discounted reward. The evaluation is performed every 500 training steps. Results are averaged over four runs. Shading denotes standard error.}}
\label{fig:exp_equi_sacfd_N8_vs_N4}
\end{figure}

In this experiment, we investigate the effectiveness of the equivariant network in SACfD by only applying it in the actor network or the critic network. We evaluate four variations: 1) Equi Actor + Equi Critic that uses equivariant network in both the actor and the critic; 2) Equi Actor + CNN Critic that uses equivariant network solely in the actor and uses conventional CNN in the critic; 3) CNN Actor + Equi Critic that uses conventional CNN in the actor and equivariant network in the Critic; 4) CNN Actor + CNN Critic that uses the conventional CNN in both the actor and the critic. Other experimental setup mirrors Section~\ref{sec:exp_equi_sacfd}. As is shown in Figure~\ref{fig:exp_equi_sacfd_equi_cnn}, applying the equivariant network in the actor generally helps more than applying the equivariant network in the critic (in 5 out of 6 experiments), and using the equivariant network in both the actor and the critic always demonstrates the best performance. 

\subsection{\edit{Different Symmetry Groups}}

\edit{This experiment compares the equivariant networks defined in three different symmetry groups: $C_8$, $C_4$, and $C_2$. We run this experiment in SACfD with the same setup as in Section~\ref{sec:exp_equi_sacfd}. As is shown in Figure~\ref{fig:exp_equi_sacfd_N8_vs_N4}, the network defined in $C_8$ generally outperforms the network defined in $C_4$, followed by the network defined in $C_2$.}

\subsection{\edit{Equivariant SACfD in Non-Symmetric Environments}}

\begin{figure}[t]
\centering
\subfloat[Block Pulling]{\includegraphics[width=0.25\linewidth]{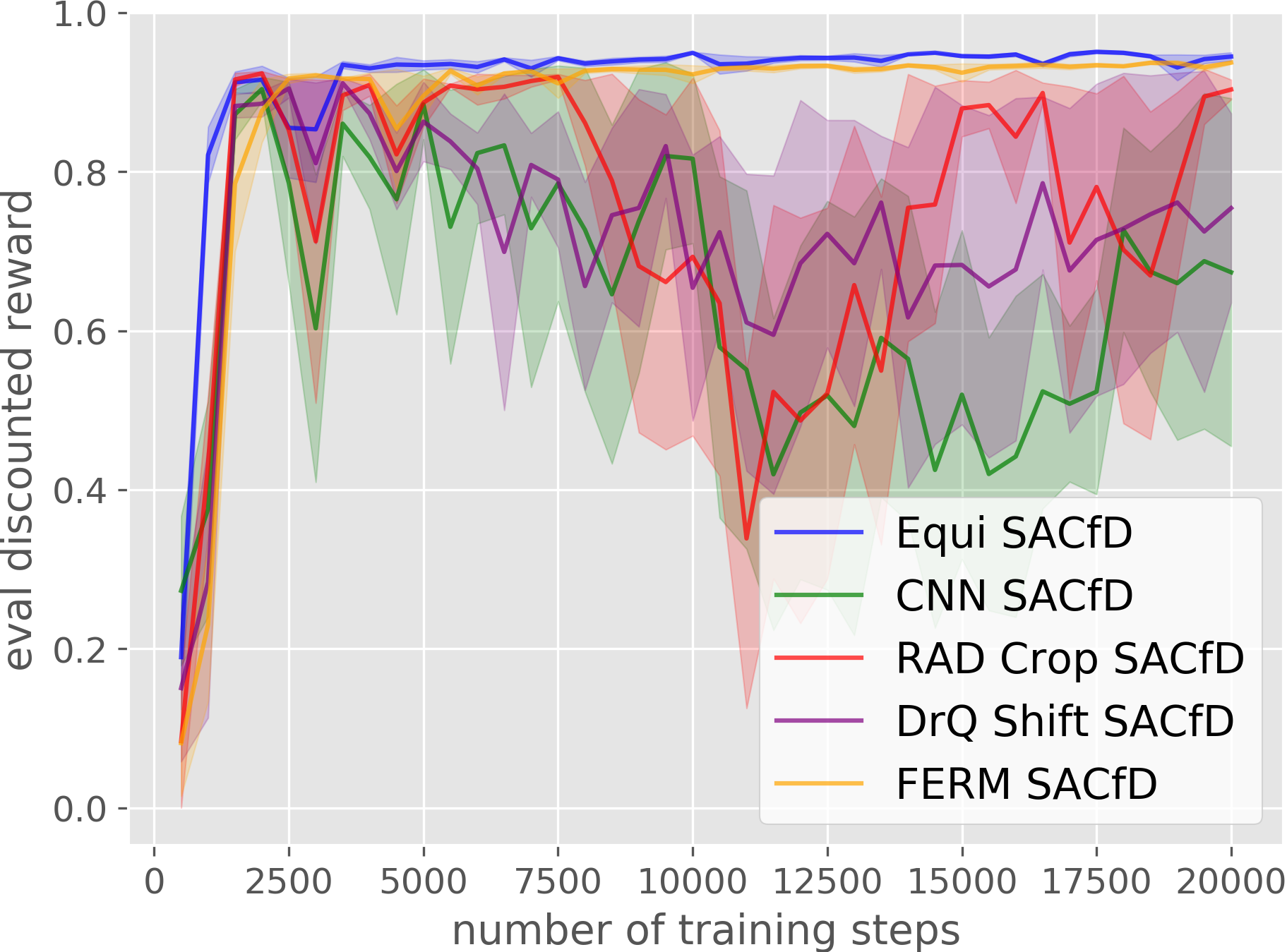}}
\subfloat[Drawer Opening]{\includegraphics[width=0.25\linewidth]{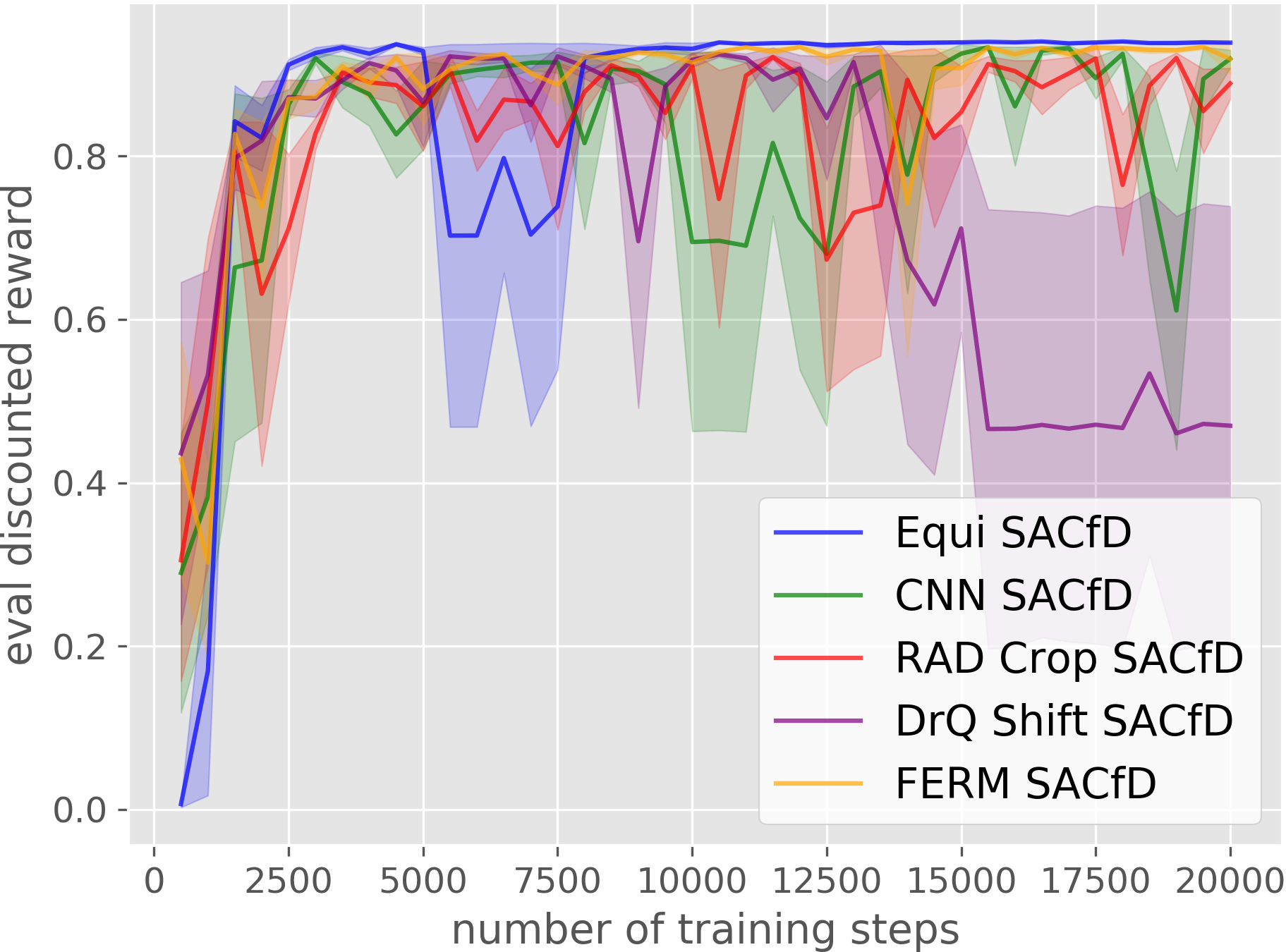}}
\caption{\edit{Ablation of using equivariant architecture in non-symmetric tasks. The plots show the evaluation performance of the greedy policy in terms of the discounted reward. The evaluation is performed every 500 training steps. Results are averaged over four runs. Shading denotes standard error.}}
\label{fig:exp_equi_sacfd_fix_ori}
\end{figure}

\begin{figure}[t]
\centering
% \subfloat[Block Pulling]{\includegraphics[width=0.25\linewidth]{}}
% \subfloat[Object Picking]{\includegraphics[width=0.25\linewidth]{}}
% \subfloat[Drawer Opening]{\includegraphics[width=0.25\linewidth]{}}\\
% \subfloat[Block Stacking]{\includegraphics[width=0.25\linewidth]{}}
% \subfloat[House Building]{\includegraphics[width=0.25\linewidth]{}}
% \subfloat[Corner Picking]{\includegraphics[width=0.25\linewidth]{}}

\subfloat[Block Pulling]{\includegraphics[width=0.25\linewidth]{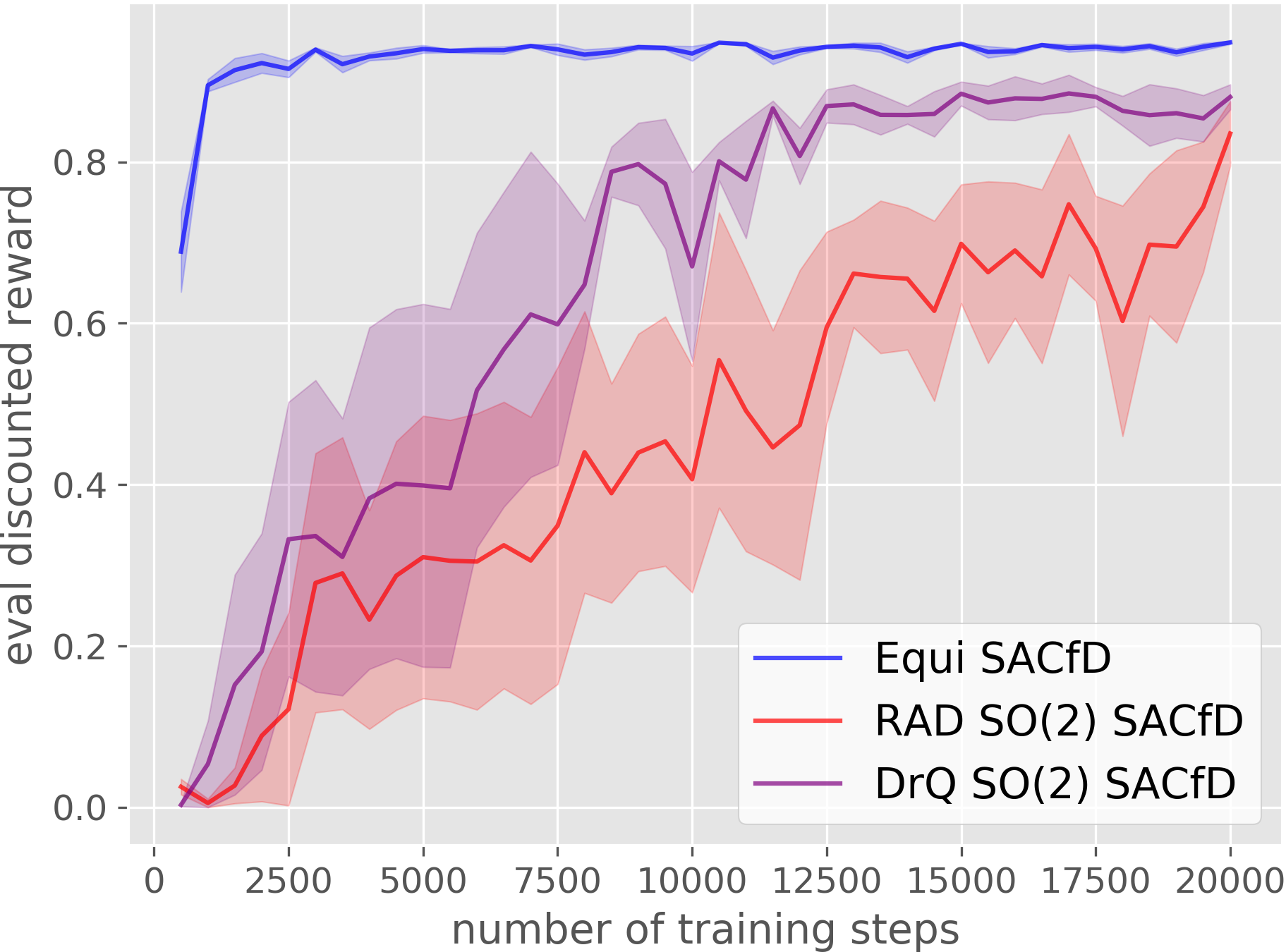}}
\subfloat[Object Picking]{\includegraphics[width=0.25\linewidth]{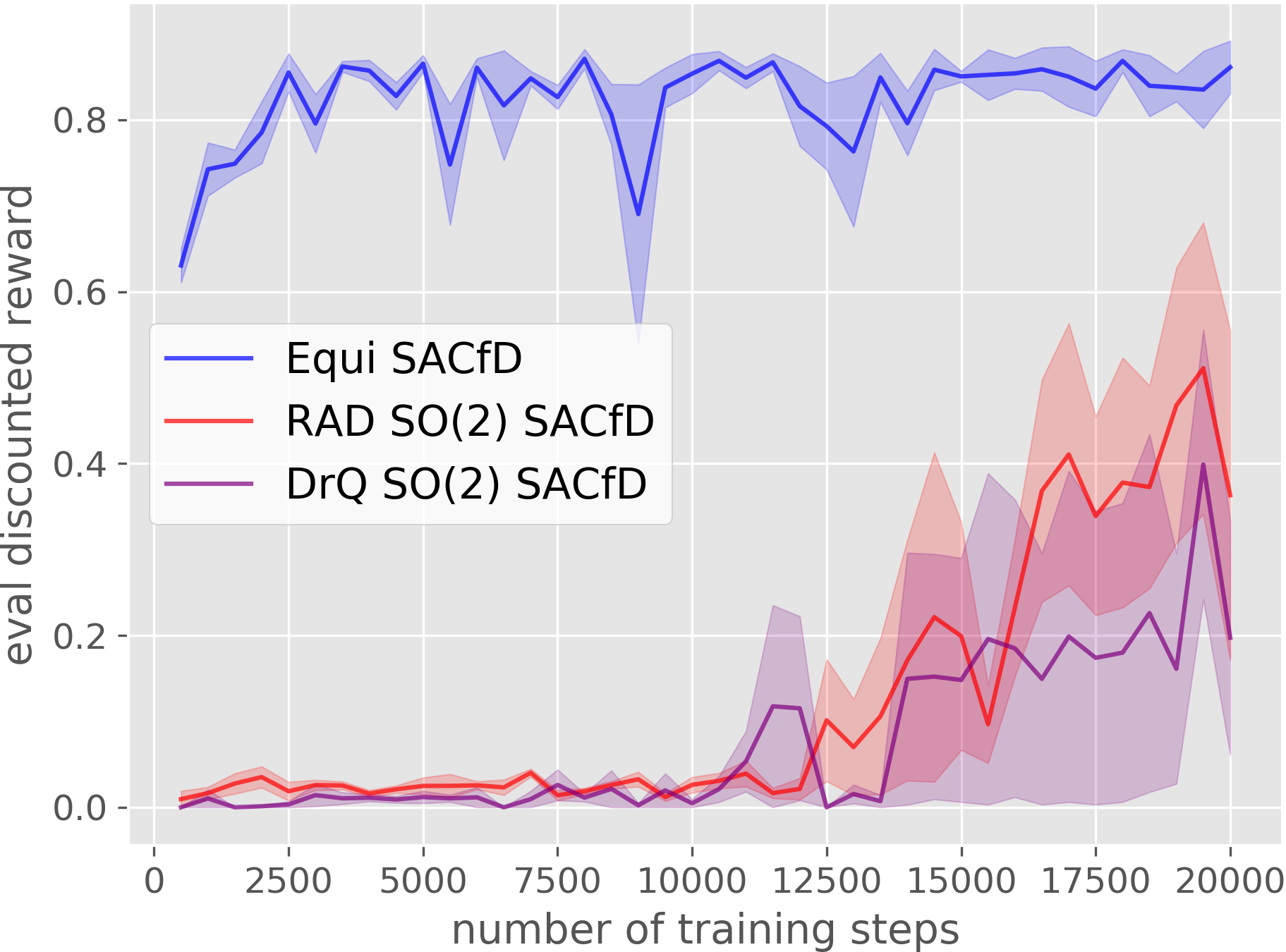}}
\subfloat[Drawer Opening]{\includegraphics[width=0.25\linewidth]{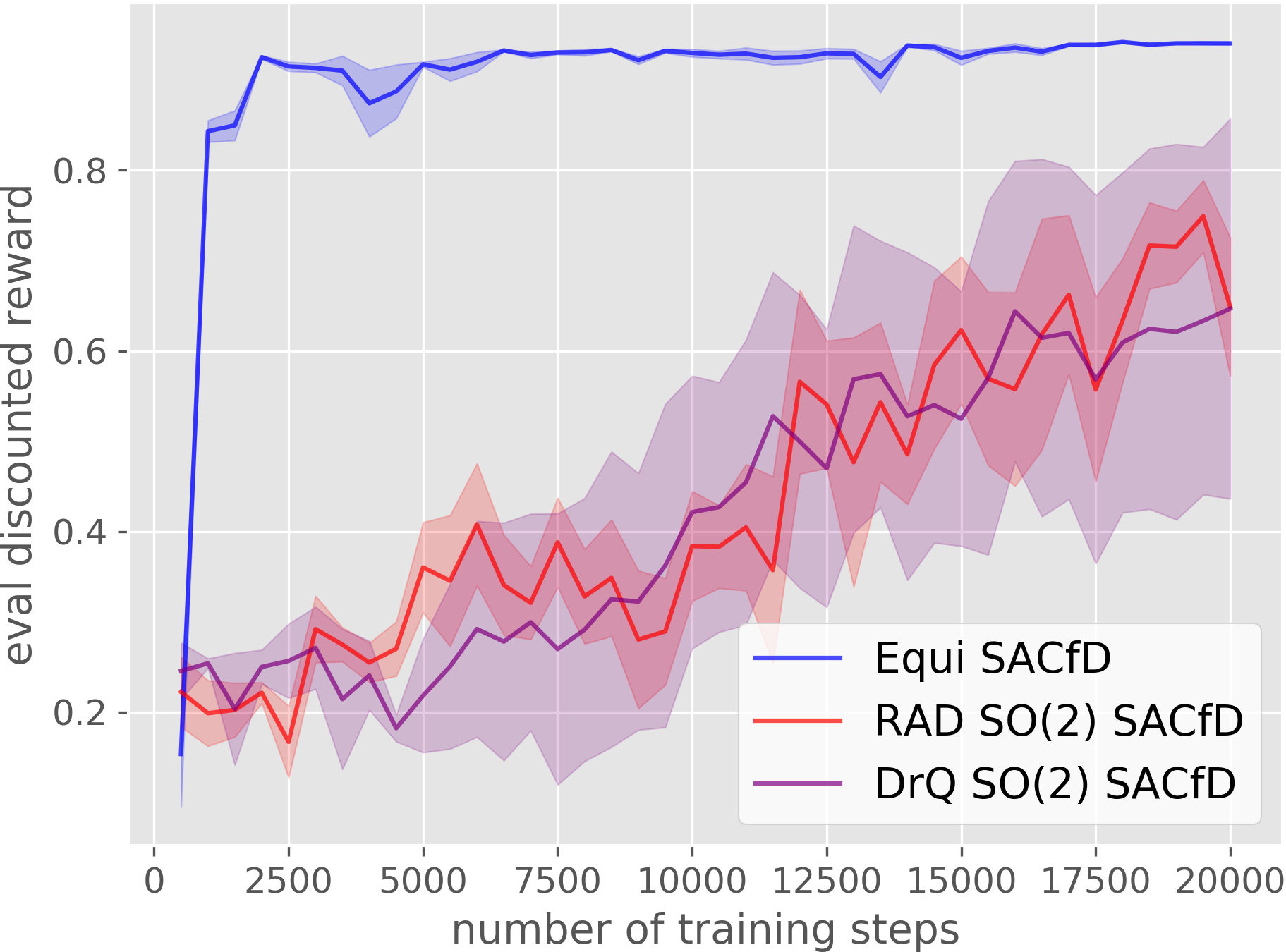}}\\
\subfloat[Block Stacking]{\includegraphics[width=0.25\linewidth]{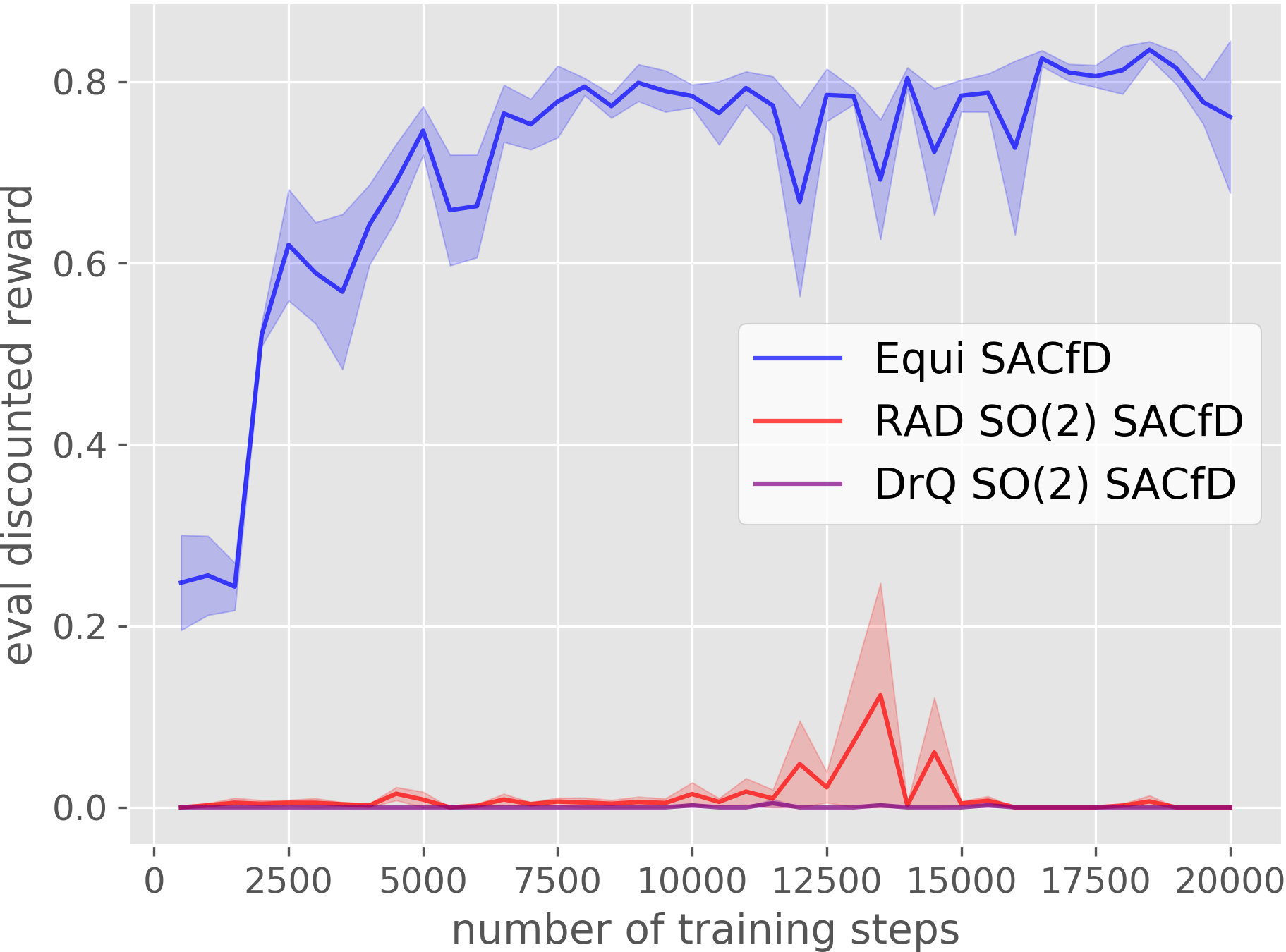}}
\subfloat[House Building]{\includegraphics[width=0.25\linewidth]{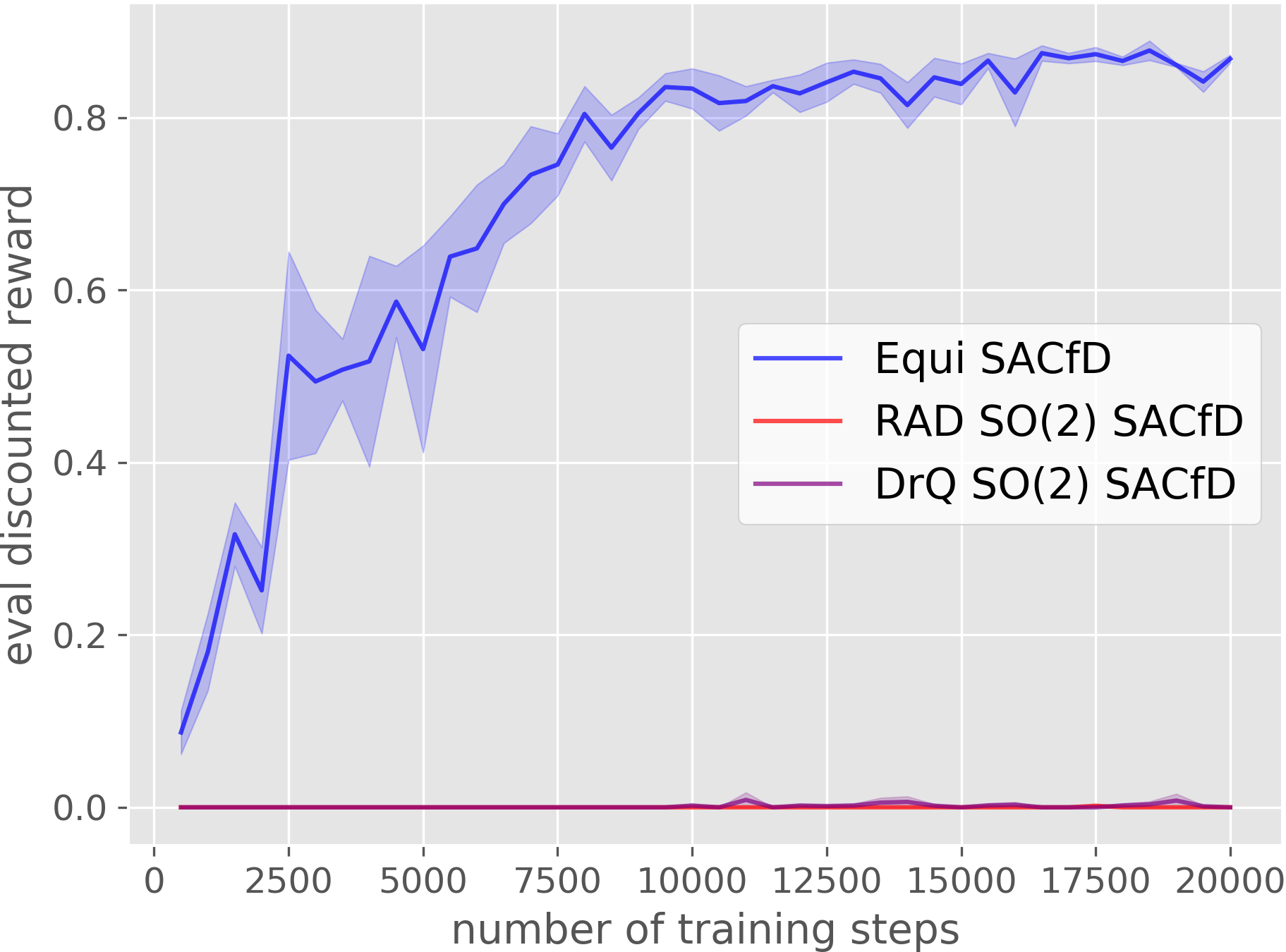}}
\subfloat[Corner Picking]{\includegraphics[width=0.25\linewidth]{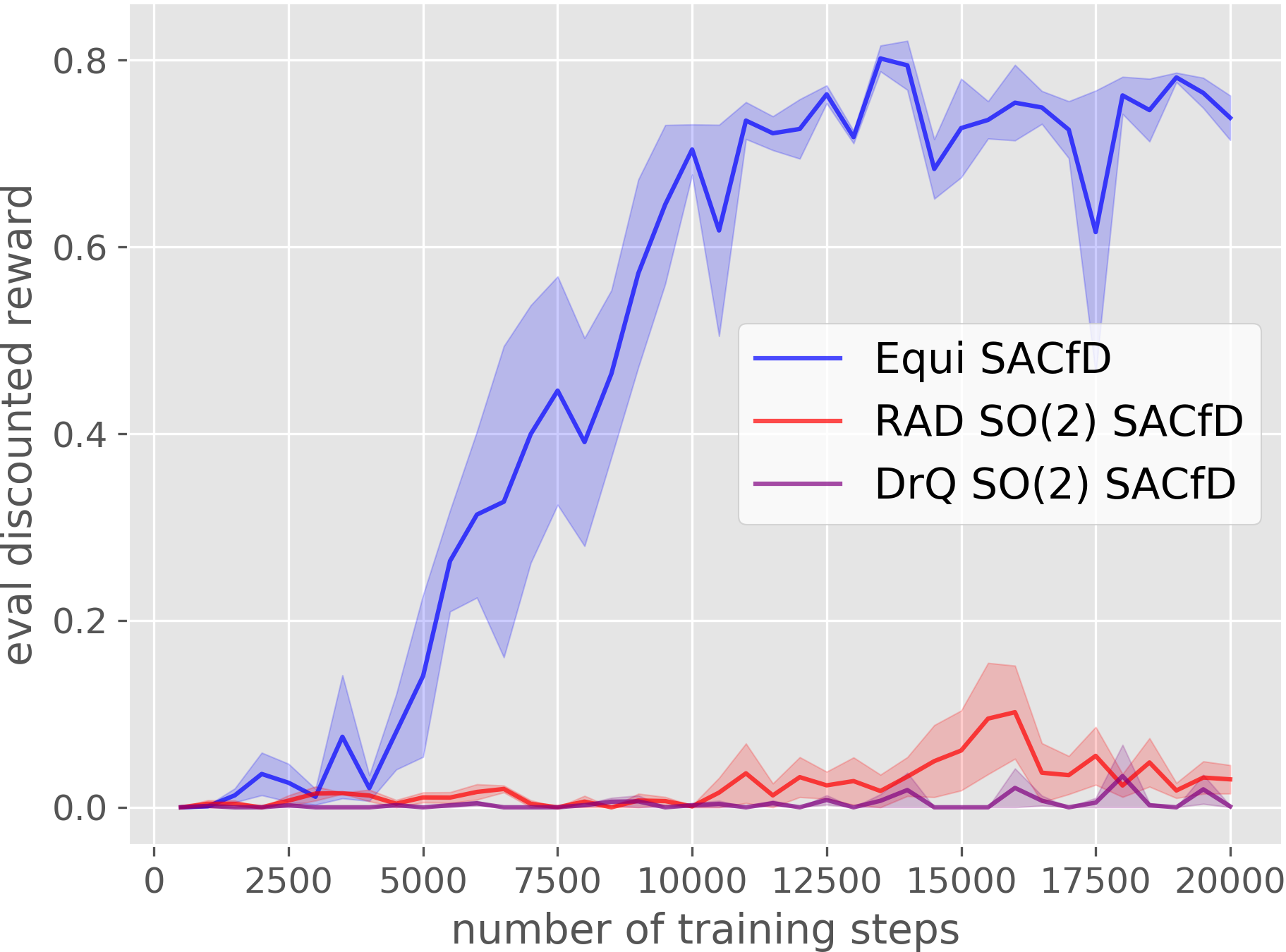}}
\caption{Ablation of comparing against rotational augmentation baselines applied with rotational buffer augmentation. The plots show the evaluation performance of the greedy policy in terms of the discounted reward. The evaluation is performed every 500 training steps. Results are averaged over four runs. Shading denotes standard error.}
\label{fig:exp_equi_sacfd_aug+buffer}
\end{figure}

\edit{This experiments evaluates the performance of Equivariant SACfD in non-symmetric tasks where the initial orientation of the environments are fixed rather than random. (Similarly as in Section~\ref{sec:exp_gen} but both the training and the evaluation environments have the fix orientation.) Specifically, in Block Pulling, the two blocks in the training environment is initialized with a fixed relative orientation; in Drawer Opening, the drawer is initialized with a fixed orientation. As is shown in Figure~\ref{fig:exp_equi_sacfd_fix_ori}, when the environments do not contain $\SO(2)$ symmetries, the performance gain of using equivariant network is less significant.}

\subsection{Rotational Augmentation + Buffer Augmentation}

\label{appendix:rot_aug+buffer_aug}
Section~\ref{sec:equi_vs_soft_equi} compares our Equivariant SACfD with rotational data augmentation baselines. This experiment shows the performance of those baselines (and an extra CNN SACfD baseline that uses conventional CNN) equipped with the data augmentation buffer. As is mentioned in Section~\ref{sec:exp_equi_sac}, the data augmentation baseline creates 4 extra augmented transitions using random $\SO(2)$ rotation every time a new transition is added. Figure~\ref{fig:exp_equi_sacfd_aug+buffer} shows the result, where none of the baselines outperform our proposal in any tasks. Compared with Figure~\ref{fig:exp_equi_sacfd_rot_aug}, the data augmentation buffer hurts RAD and DrQ because of the redundancy of the same data augmentation.

\end{document}